\PassOptionsToPackage{table}{xcolor}
\documentclass[sigconf]{acmart}

\usepackage{enumitem}
\usepackage{algorithm}
\usepackage{algpseudocode}
\usepackage{booktabs}
\usepackage{multirow}
\usepackage{graphicx} 
\usepackage{array}
\usepackage{makecell}
\usepackage{subcaption}
\usepackage[table]{xcolor}

\definecolor{rank1}{HTML}{F4B0B0}
\definecolor{rank2}{HTML}{FCE0D4}

\theoremstyle{remark}
\newtheorem{remark}{Remark}

\AtBeginDocument{%
  }

\copyrightyear{2026}
\acmYear{2026}
\setcopyright{cc}
\setcctype{by}
\acmConference[MM '26]{Proceedings of the 34th ACM International Conference on Multimedia}{November 10--14, 2026}{Rio de Janeiro, Brazil}
\acmBooktitle{Proceedings of the 34th ACM International Conference on Multimedia (MM '26), November 10--14, 2026, Rio de Janeiro, Brazil}
\acmDOI{10.1145/3767308.3836106}
\acmISBN{979-8-4007-2213-4/2026/11}

\begin{document}

\title{LEAP: Accelerating ODE-based Diffusion Sampling with Optimal Linear Leaps}

\author{Haowei Zheng}
\orcid{0009-0008-5217-8034}
\email{zhwei7107@zju.edu.cn}

\authornote{State Key Laboratory of Blockchain and Data Security,
Zhejiang University.}
\authornote{Hangzhou High-Tech Zone (Binjiang) Institute of
Blockchain and Data Security.}

\affiliation{%
  \institution{Zhejiang University}
  \city{Hangzhou}
  \country{China}
}

\author{Zhenyu Zhou}
\orcid{0009-0007-0888-5156}
\email{zhyzhou@zju.edu.cn}
\authornotemark[1]
\authornotemark[2]

\affiliation{%
  \institution{Zhejiang University}
  \city{Hangzhou}
  \country{China}
}

\author{Jiawei Chen}
\orcid{0000-0002-4752-2629}
\email{sleepyhunt@zju.edu.cn}
\authornotemark[1]
\authornotemark[2]

\affiliation{%
  \institution{Zhejiang University}
  \city{Hangzhou}
  \country{China}
}

\author{Can Wang}
\orcid{0000-0002-5890-4307}
\email{wcan@zju.edu.cn}
\authornotemark[1]
\authornotemark[2]

\affiliation{%
  \institution{Zhejiang University}
  \city{Hangzhou}
  \country{China}
}

\author{Defang Chen}
\orcid{0000-0003-0833-7401}
\email{defchern@berkeley.edu}
\authornote{Corresponding author.}

\affiliation{%
  \institution{University of California, Berkeley}
  \state{California}
  \country{United States}
}

\renewcommand{\shortauthors}{Haowei Zheng, Zhenyu Zhou, Jiawei Chen, Can Wang, and Defang Chen}

\begin{abstract}
    Diffusion models primarily rely on general-purpose ODE-based numerical methods for training-free accelerated sampling, but performance degrades with coarse time discretization. In this paper, we propose \textit{optimal linear leaps} (LEAP), a specialized method grounded in theoretical and empirical investigations of sampling trajectory geometry.
    Built upon low-dimensionality and cross-trajectory consistency, LEAP formulates a supervised least-squares optimization problem within the local trajectory subspace, achieving a precise approximation of future states without model evaluations. Extensive experiments demonstrate that LEAP outperforms previous methods across diverse modalities, including text-to-image (Stable Diffusion 3.5, FLUX.1-dev) and image-to-3D (TRELLIS.2) generation. Furthermore, LEAP can be seamlessly integrated into existing solver-based methods, offering a plug-and-play enhancement for modern diffusion models. Code is available at \url{https://github.com/zju-pi/LEAP}.
\end{abstract}

\begin{CCSXML}
<ccs2012>
<concept>
<concept_id>10010147.10010178.10010224</concept_id>
<concept_desc>Computing methodologies~Computer vision</concept_desc>
<concept_significance>500</concept_significance>
</concept>
</ccs2012>
\end{CCSXML}

\ccsdesc[500]{Computing methodologies~Computer vision}

\keywords{Diffusion Models, Training-free Acceleration, Solver-based}

\maketitle

\section{Introduction}
Diffusion-based generative models~\cite{ho2020denoising, sohl2015deep, song2020improved,song2020score} have achieved unprecedented success in multimedia content synthesis, ranging from image~\cite{rombach2022high,ramesh2022hierarchical,saharia2022photorealistic}, video~\cite{ho2022video, blattmann2023stable}, audio~\cite{kong2020diffwave,chen2020wavegrad}, and 3D~\cite{xiang2025native,hunyuan3d2025hunyuan3d}. Despite their remarkable generation quality, the slow sampling process imposes a critical bottleneck for latency-sensitive applications.

To address this issue, numerous acceleration techniques have been proposed. Training-based approaches distill a pre-trained multi-step model into an extremely few-step student model~\cite{salimans2022progressive,luo2023latent,song2023consistency,kim2023consistency}. However, distilling modern diffusion models with billions of parameters (e.g., 8B Stable Diffusion 3.5~\cite{esser2024scaling} and 12B FLUX~\cite{flux1dev}) is challenging and inflexible to accommodate community-tuned variants and LoRA modules. In contrast, training-free methods require minimal parameter updates, offering superior plug-and-play flexibility. Recent progress includes \textbf{(i)} solver-based method for calculating ordinary differential equations (ODEs)~\cite{lu2022dpm,song2020denoising,zhao2023unipc,liu2022pseudo,zhou2024fast}, \textbf{(ii)} optimized timestep scheduling~\cite{chen2025geometric,xue2024accelerating,sabour2024align}, and \textbf{(iii)} feature caching~\cite{ma2024deepcache,li2024faster,yu2025ab,wimbauer2024cache}.
Existing high-order multi-step solvers~\cite{liu2022pseudo,lu2022dpm,zhao2023unipc} estimate future states via a linear combination of historical states where the coefficients are derived from numerical analysis, assuming sufficiently small and near-uniform step sizes. However, practical acceleration typically employs large and non-uniform step sizes, making their assumptions and associated convergence guarantees less reliable. Concurrently, emerging research on the geometric regularity of diffusion trajectories~\cite{chen2024trajectory,chen2025geometric,wan2025elucidating,wang2024diffusion} has uncovered intriguing empirical patterns, such as cross-trajectory consistency and low-dimensional concentration. While previous studies have qualitatively observed these regularities, a rigorous mathematical foundation remains largely absent. To move beyond heuristic designs toward a principled approach, a deeper and more rigorous investigation into the underlying mechanics is essential.

\begin{figure*}[ht]
  \centering
  \includegraphics[width=0.97\textwidth]{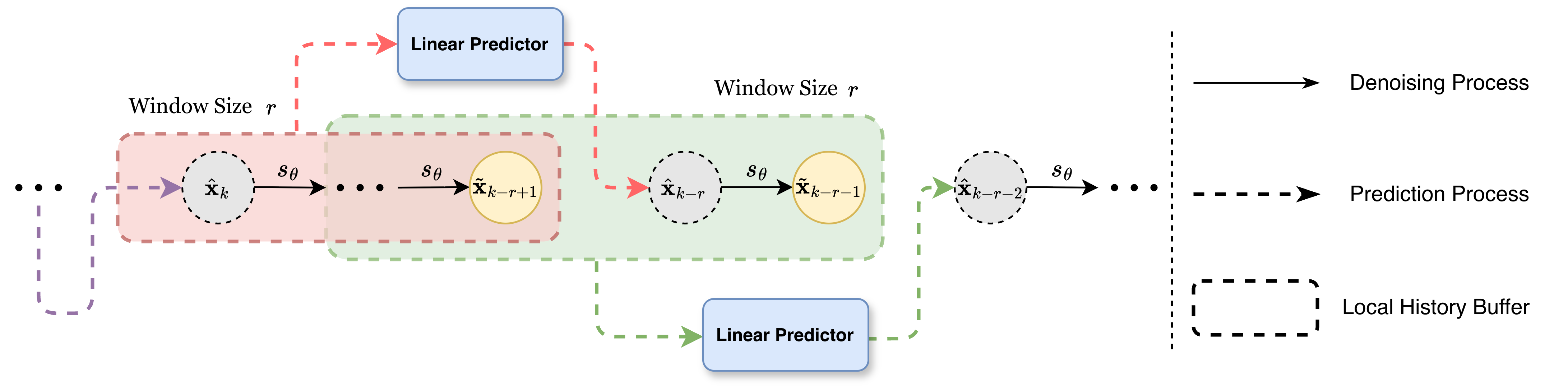}

  \caption{Overview of the proposed LEAP. A linear predictor (blue boxes) extrapolates the subsequent score from a sliding history buffer of size $r$ (dashed boxes), yielding an approximated state $\hat{\mathbf{x}}$ (grey circles) without model evaluations. To prevent error drift, a model evaluation subsequently refines this approximation into a corrected state $\tilde{\mathbf{x}}$ (yellow circles). }

  \Description{Overview of the proposed prediction-correction sampling. A linear predictor (blue boxes) extrapolates the next score from a sliding history buffer of size $r$ (dashed boxes), yielding an approximated state $\hat{\mathbf{x}}$ (grey circles) without model evaluations. To prevent error drift, an actual model evaluation subsequently refines this approximation into a corrected state $\tilde{\mathbf{x}}$ (yellow circles). }

  \label{fig:pipeline}

\end{figure*}

To bridge this gap, we conduct a comprehensive theoretical analysis of the geometric properties inherent in diffusion trajectories, yielding the following two theoretical insights. Firstly, we establish a theoretical bound on the alignment error between trajectories after orthogonal rotational alignment, revealing the cross-trajectory consistency of diffusion models. Secondly, we rigorously characterize the intrinsic low-dimensionality of diffusion sampling trajectories, proving that individual paths are confined to a low-dimensional subspace. Our analysis reveals that despite different noise initializations, these generative paths share consistent low-dimensional dynamics.

Leveraging these two mathematically grounded priors, we propose \textit{optimal linear leaps} (LEAP), a training-free, solver-based acceleration framework. In contrast to traditional numerical solvers that rely on rigid Taylor coefficients, our method introduces a data-driven multi-step strategy powered by least-squares optimization. 
The intrinsic low-dimensionality of diffusion trajectories suggests that local path segments can be accurately characterized by a linear subspace. Leveraging this geometric regularity, we formulate the prediction as a least-squares optimization problem, which identifies the optimal linear coefficients by minimizing the discrepancy between linearly extrapolated historical states and multi-step ground-truth trajectory samples.
Motivated by the cross-trajectory consistency, we demonstrate that the optimal linear extrapolation coefficients are sample-agnostic. This property guarantees the transferability of coefficients derived from a limited set of reference trajectories before inference, bypassing the overhead for per-sample optimization.
LEAP significantly enhances the sampling efficiency while preserving high generation quality. Crucially, as it operates entirely at the macroscopic trajectory level, our approach functions as a model-agnostic and plug-and-play module. This allows for seamless integration into diverse diffusion models without any architectural modifications.
Extensive experiments across diverse modalities, including text-to-image generation and text-to-3D generation demonstrate the superior performance and versatility of our method.

In summary, our main contributions are as follows:

\begin{itemize}[leftmargin=1.2em]
    \item We prove the intrinsic low-dimensionality and geometric isomorphism of generative trajectories across diverse noise initializations.
    \item Motivated by these theoretical insights, we propose LEAP, a novel multi-step ODE solver that leverages least-squares alignment to learn sample-agnostic extrapolation coefficients, enabling highly accurate linear leaps with minimal computational overhead.
    \item Extensive experiments across diverse generative tasks that spanning both text-to-tmage synthesis and complex image-to-3D generation demonstrate the effectiveness of our method compared to state-of-the-art training-free baselines.
\end{itemize}

\section{Preliminary}
\textit{\textbf{Diffusion Models}}. Continuous-time diffusion models progressively corrupt data $\mathbf{x}_0 \sim p_{data}(\mathbf{x})$ into a tractable noise distribution $\mathbf{x}_T \sim p_{prior}(\mathbf{x})$ via a forward Stochastic Differential Equation (SDE)~\cite{song2020score}:
\begin{equation}
    \label{eq:forward_sde}
    \mathrm{d}\mathbf{x}_t=f(t)\mathbf{x}_t\mathrm{d}t+g(t)\mathrm{d}\mathbf{w}_t, \qquad t \in [0, T],
\end{equation}
where $\mathbf{w}_t$ is the standard Wiener process~\cite{oksendal2013stochastic}, while $f(t)$ and $g(t)$ denote the drift and diffusion coefficients. 
Crucially, the generative process that reverses the forward SDE to recover the data distribution is governed by the reverse-time SDE $\mathrm{d}\mathbf{x}_t=[f(t)\mathbf{x}_t-g(t)^2\nabla_{\mathbf{x}_t}\log p_t(\mathbf{x}_t)]\mathrm{d}t+g(t)\mathrm{d}\bar{\mathbf{w}}_t$,
where $\bar{\mathbf{w}}_t$ denotes a standard Wiener process in the reverse-time direction. 
To implement the generative process, diffusion models estimate the intractable \textit{score function} $\nabla_{\mathbf{x}_t} \log p_t(\mathbf{x}_t)$ using a neural network $s_\theta(\mathbf{x}_t, t)$ trained by score matching objectives.
For any diffusion SDE, there exists an Ordinary Differential Equation (ODE)~\cite{song2020score} that shares the identical marginal distributions $p_t(\mathbf{x})$ for all $t \in [0, T]$, which is known as the Probability Flow ODE (PF-ODE):
\begin{equation}
    \mathrm{d}\mathbf{x}_t=\left[f(t)\mathbf{x}_t-\frac{1}{2}g(t)^2\nabla_{\mathbf{x}_t}\log p_t(\mathbf{x}_t)\right]\mathrm{d}t.
\end{equation}
By casting the generative process as a numerical initial value problem, PF-ODE enables deterministic sampling and underpins various training-free solver-based acceleration methods~\cite{lu2022dpm,zhao2023unipc}.

\textit{\textbf{Flow-based Models}}. 
Instead of explicitly defining a diffusion process through an SDE, flow-based generative models formulate a continuous-time Normalizing Flow~\cite{lipman2022flow,liu2022flow,albergo2022building}. These models learn a time-dependent vector field $v(\mathbf{x}_t, t)$ that transports noise to the data distribution via a deterministic ODE:
\begin{equation}
    \mathrm{d}\mathbf{x}_t=v(\mathbf{x}_t,t)\mathrm{d}t.
\end{equation}
To circumvent the direct estimation of intractable marginal vector fields, Flow Matching introduces a simulation-free objective which regresses a neural vector field $v_\theta(\mathbf{x}_t, t)$ towards a conditional vector field $\mathbf{u}_t(\mathbf{x}_t|\mathbf{x}_0)$ that defines the target probability path $p_t(\mathbf{x})$:
\begin{equation}
    \mathbb{E}_{t,p_0(\mathbf{x}_0),p_t(\mathbf{x}_t\mid \mathbf{x}_0)}
    \left[
    \left\|v_\theta(\mathbf{x}_t,t)-\mathbf{u}_t(\mathbf{x}_t\mid \mathbf{x}_0)\right\|_2^2
    \right].
\end{equation}
Specifically, by setting $T = 1$, analytically solving Eq. \ref{eq:forward_sde} yields a reparameterization of $\mathbf{x}_t$ as
\begin{equation}
    \mathbf{x}_t=\alpha_t \mathbf{x}_0 + \sigma_t \mathbf{x}_1, \qquad \mathbf{x}_1 \sim \mathcal{N}(\mathbf{0},\mathbf{I}).
\end{equation}
where
\begin{equation}
    \small
    \alpha_t=\exp\left(\int_0^t f(u)\mathrm{d}u\right),\quad
    \sigma_t=\left(\int_0^t g(u)^2 \exp\left(-2\int_0^u f(s)\mathrm{d}s\right)\mathrm{d}u\right)^{1/2}.
\end{equation}
The conditional velocity field is thus given by
\begin{equation}
    \mathbf{u}_t(\mathbf{x}_t\mid \mathbf{x}_0)
    =
    \frac{\dot{\sigma}_t}{\sigma_t}\mathbf{x}_t
    +
    \left(
    \dot{\alpha}_t-\alpha_t\frac{\dot{\sigma}_t}{\sigma_t}
    \right)\mathbf{x}_0.
\end{equation}

Contemporary cutting-edge text-to-image models increasingly favor the Flow Matching framework due to its conceptual simplicity and flexible path design~\cite{esser2024scaling,flux1dev}. Since they inherit the architecture and training conventions of diffusion models, we still refer to the flow-based formulation as ``diffusion'' when no ambiguity arises.

\textit{\textbf{Trajectory Regularity}}. While diffusion models operate in an immensely high-dimensional ambient space, recent empirical observations reveal that their sampling trajectories exhibit remarkable geometric regularity~\cite{chen2024trajectory,wang2023diffusion,chen2025geometric}. Principal Component Analysis (PCA) on individual sampling trajectory exhibits that the generative dynamics are inherently \emph{low-dimensional} where the explained variance is dominated by the first few principal components. Furthermore, as illustrated in Fig.~\ref{fig:PCA_trajectory}, when projecting multiple trajectories with distinct noise and prompt initializations into a shared 3D space, they bundle into a smooth `tube'. This visual alignment highlights a striking \emph{cross-trajectory consistency}, indicating that different diffusion paths share nearly isomorphic macroscopic dynamics. Together, this inter-trajectory low-dimensionality and inter-trajectory consistency constitute the trajectory regularity, serving as the foundational intuition for our subsequent theoretical analysis and the design of our acceleration algorithm.

\begin{figure}[t]
    \centering
    \includegraphics[width=\linewidth]{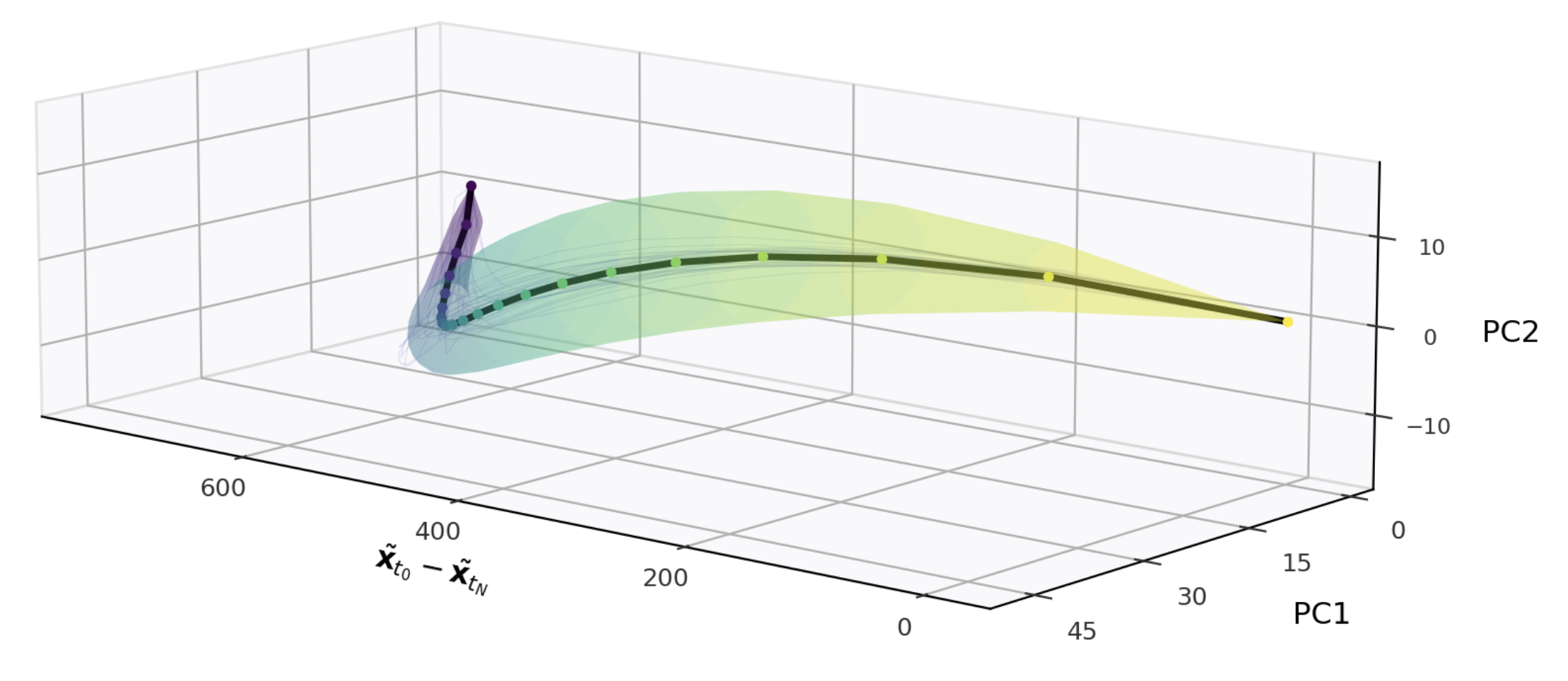}
    \caption{Visualizing the trajectory regularity of diffusion models. Following the visualization protocol in~\cite{chen2024trajectory}, we project 100 sampling trajectories generated with distinct noise and prompt initializations using SD3.5-Medium~\cite{esser2024scaling}. 
    }
    \label{fig:PCA_trajectory}
\end{figure}

\section{Methodology}
\subsection{The Intrinsic Geometry of Generative Trajectories}\label{sec:geometry}

While previous works~\cite{chen2024trajectory,chen2025geometric} have observed strong geometric regularity of ODE sampling trajectories to justify existing accelerated sampling methods, a theoretical understanding remains unclear. In this section, we prove that diffusion sampling trajectories can be effectively aligned through orthogonal transformation and are embedded in low-dimensional subspaces. To this end, we perform an analysis within the Gaussian mixture model (GMM) framework with shared spherical covariance to analytically characterize diffusion dynamics. Since GMMs are universal approximators capable of modeling highly complex data manifolds, this assumption not only facilitates closed-form mathematical derivations but also ensures the broad applicability of our theoretical findings. Under this mild assumption, the data distribution is given by
\begin{equation}
p_{\text{data}}(\mathbf{x})=\sum_{s=1}^S \pi_s \mathcal N(\boldsymbol{\mu}_s,\sigma_0^2 \mathbf{I}_d),
\quad
\pi_s>0,
\quad
\sum_{r=1}^S \pi_s=1,
\end{equation}
where $S$ is the number of components, $\pi_s$ are the mixing weights and $\boldsymbol{\mu}_s$ are the cluster means. Let
$$U:=\operatorname{span}\{\boldsymbol{\mu}_1,\dots,\boldsymbol{\mu}_S\},\quad
r:=\dim(U),\quad
R:=\max_{1\le s\le S}\|\boldsymbol{\mu}_s\|.$$

We consider the exact PF-ODE associated with the variance-preserving forward diffusion. Let $t \in [0, T]$ denote the standard forward diffusion time. Denote $\mathbf{x}_t^{(m)}$ the $m$-th sampling trajectory evolving backward in time which is initialized from a pure noise state $\mathbf{x}_T^{(m)} \sim p_T$. To analyze the trajectory at discrete intervals, we consider a backward time grid $T = t_N > t_{N-1} > \dots > t_0 = 0$ and define the trajectory states as
\begin{equation}
\mathbf{D}_n^{(m)} := \mathbf{x}_{t_{n}}^{(m)} - \mathbf{x}_T^{(m)},
\end{equation}
which represents the displacement from the initial noise. Let
\begin{equation}
\alpha_T:=\exp\!\Big(-\frac12\int_0^T \beta(s)\,ds\Big),
\qquad
\nu_T^2:=\alpha_T^2\sigma_0^2 + (1-\alpha_T^2).
\end{equation}

For brevity, we present a simplified statement with a proof sketch below. The complete theorem including all constants and rigorous proof is detailed in the Appendix A.

\begin{theorem}[Cross-trajectory consistency]\label{thm:main_consistency}Suppose that $\mathbf{x}_T^{(1)},\dots,\mathbf{x}_T^{(M)}$ are i.i.d. samples from the exact noised marginal $p_T$. Then for every $\delta\in(0,1)$, with probability at least $1-\delta$,
\begin{multline}
\max_{1\le m_1,m_2\le M}\; \inf_{\mathbf{H}\in O(d)} \max_{0\le n\le N} \bigl\| \mathbf{D}_n^{(m_1)} - \mathbf{H} \mathbf{D}_n^{(m_2)} \bigr\| \\
\le C_{T,1}\,\nu_T\!\left(\sqrt{r}+\sqrt{\log\frac{M}{\delta}}\right) + C_{T,2}\,R,
\end{multline}
where $C_{T,1},C_{T,2}>0$ depend only on the diffusion schedule and not on $d$, $n$, or the mixture parameters.\end{theorem}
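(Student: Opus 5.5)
We split each displacement into a component along the noise direction and a component lying in the mean subspace $U$. For the VP PF-ODE with the GMM above, the score is available in closed form. Write $\alpha_t$ for the VP scale and $\nu_t^2:=\alpha_t^2\sigma_0^2+(1-\alpha_t^2)$. Then
\[
\nabla\log p_t(\mathbf{x})=-\frac{\mathbf{x}-\alpha_t\,\mathbf{m}_t(\mathbf{x})}{\nu_t^2},
\qquad
\mathbf{m}_t(\mathbf{x})=\sum_s w_s(\mathbf{x},t)\,\boldsymbol{\mu}_s\in U .
\]
The PF-ODE is therefore linear in $\mathbf{x}$ plus a forcing term that lives in $U$. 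By variation of constants,
\[
\mathbf{x}_{t}=\frac{\nu_t}{\nu_T}\,\mathbf{x}_T+\mathbf{u}_t,
\qquad
\mathbf{u}_t\in U .
\]
Here $\mathbf{u}_t$ is an integral of the posterior mean against a schedule-dependent kernel, and since $\|\mathbf{m}_t\|\le R$ we get $\|\mathbf{u}_t\|\le c_T R$. The first step is to verify this decomposition and to extract the explicit constant $c_T$ from the VP schedule. The scalar prefactor $\nu_t/\nu_T$ comes from the $\mathbf{x}$-linear part of the drift. It does not depend on $d$ or on the mixture parameters, apart from $\sigma_0$, and $\sigma_0$ enters only through $\nu$. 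Consequently
\[
\mathbf{D}_n^{(m)}=(a_n-1)\,\mathbf{x}_T^{(m)}+\mathbf{u}_{t_n}^{(m)},
\qquad
a_n:=\nu_{t_n}/\nu_T ,
\]
and the coefficient $a_n-1$ is the same for every trajectory.

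The second step chooses the rotation. Decompose $\mathbf{x}_T^{(m)}=P_U\mathbf{x}_T^{(m)}+P_{U^\perp}\mathbf{x}_T^{(m)}$. Take $\mathbf{H}\in O(d)$ that acts on $U^\perp$ as a rotation sending $P_{U^\perp}\mathbf{x}_T^{(m_2)}$ to $P_{U^\perp}\mathbf{x}_T^{(m_1)}$. This is impossible when the two norms differ, so instead we map the direction and pay the difference in norms. On $U$, take $\mathbf{H}$ to be the identity. With this choice,
\[
\bigl\|\mathbf{D}_n^{(m_1)}-\mathbf{H}\mathbf{D}_n^{(m_2)}\bigr\|
\le |a_n-1|\Bigl(\bigl|\|P_{U^\perp}\mathbf{x}_T^{(m_1)}\|-\|P_{U^\perp}\mathbf{x}_T^{(m_2)}\|\bigr|+\|P_U\mathbf{x}_T^{(m_1)}\|+\|P_U\mathbf{x}_T^{(m_2)}\|\Bigr)+2c_TR .
\]
The bound is uniform in $n$ because $|a_n-1|\le \max_t|\nu_t/\nu_T-1|$, which is a schedule constant. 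Taking the maximum over $n$ costs nothing, since $\mathbf{H}$ does not depend on $n$.

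The third step, which is the main obstacle, is the concentration estimate. Under $p_T$ we can write $\mathbf{x}_T=\alpha_T\boldsymbol{\mu}_{s}+\nu_T\mathbf{z}$ with $\mathbf{z}\sim\mathcal N(0,\mathbf{I}_d)$. Then $P_{U^\perp}\mathbf{x}_T=\nu_T P_{U^\perp}\mathbf{z}$, and the norm of this vector is $\nu_T$-Lipschitz in $\mathbf{z}$. Gaussian Lipschitz concentration shows that its deviation from a common median is at most $\nu_T\sqrt{2\log(4M/\delta)}$ simultaneously for all $M$ samples, with probability $1-\delta/2$, via a union bound. The norm difference between any two samples is therefore $O(\nu_T\sqrt{\log(M/\delta)})$. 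It is crucial that this bound is dimension-free and avoids $\sqrt{d}$. The in-$U$ part satisfies $\|P_U\mathbf{x}_T\|\le \alpha_T R+\nu_T\|P_U\mathbf{z}\|$, and $\|P_U\mathbf{z}\|\le\sqrt r+\sqrt{2\log(4M/\delta)}$ for all samples with probability $1-\delta/2$. Combining the two events and absorbing $\alpha_T$, $\max|a_n-1|$ and $c_T$ into $C_{T,1}$ and $C_{T,2}$ gives the claimed bound.

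The delicate points are twofold. First, we must confirm that the linear coefficient $a_n$ is exactly sample-independent. It is, because the GMM covariance is shared and spherical, so all nonlinearity sits in the $U$-valued weights $w_s$. Second, we must bound the forcing integral $\mathbf{u}_t$ uniformly near $t=0$, where $\nu_t\to\sigma_0$. If $\sigma_0$ is small this could make $c_T$ depend on $\sigma_0$, and we would handle it by integrating the kernel explicitly, using the fact that the weights are bounded by 1.
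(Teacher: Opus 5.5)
Your proposal is correct and follows essentially the same route as the paper. The shared steps are: the exact $U\oplus U^\perp$ splitting of the PF-ODE; a block-diagonal $\mathbf{H}$ that only aligns the $U^\perp$ noise directions; the pairwise bound by the $U^\perp$-norm gap plus the two $U$-norms plus $2c_TR$, which is the paper's deterministic pairwise proposition with constants $A_T,B_T$; and Gaussian Lipschitz concentration with a union bound for each component.

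Your explicit rescaling factor $\nu_t/\nu_T$ is a concrete form of the paper's $\Phi(t,T)$ and has the correct sign; the paper's displayed drift coefficients $a_t,b_t$ are off by an overall sign, which does not affect the structure of its argument. With the correct sign, the forcing bound $R\,\nu_t\int_t^T \beta(\tau)\alpha_\tau/(2\nu_\tau^3)\,\mathrm{d}\tau$ stays bounded as $\sigma_0\to 0$ (for $\sigma_0<1$ it is at most $R/(\alpha_T(1-\sigma_0^2))$), so the worry in your last paragraph resolves favorably.
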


\begin{corollary}[Asymptotic consistency]\label{cor:main_consistency}If $r+\log M = o(d)$ and $R = o(\nu_T \sqrt{d})$, then for every fixed $\delta\in(0,1)$,
\begin{equation}\frac{1}{\nu_T\sqrt d}
\max_{1\le m_1,m_2\le M}\;
\inf_{\mathbf{H}\in O(d)}
\max_{0\le n\le N}
\bigl\| \mathbf{D}_n^{(m_1)} - \mathbf{H} \mathbf{D}_n^{(m_2)} \bigr\|
\;\xrightarrow[d\to\infty]{\mathbb P}\; 0.
\end{equation}
\end{corollary}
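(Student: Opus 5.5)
The corollary should follow directly from Theorem~\ref{thm:main_consistency} by normalizing its high-probability bound by $\nu_T\sqrt d$ and letting $d\to\infty$ along a sequence of models. Along this sequence $r=r(d)$, $M=M(d)$ and $R=R(d)$ may vary, while the diffusion schedule, and hence $C_{T,1},C_{T,2}$, stays fixed. Write
\[
Z_d:=\frac{1}{\nu_T\sqrt d}\max_{1\le m_1,m_2\le M}\inf_{\mathbf{H}\in O(d)}\max_{0\le n\le N}\bigl\|\mathbf{D}_n^{(m_1)}-\mathbf{H}\mathbf{D}_n^{(m_2)}\bigr\|.
\]
The key point is that the constants in Theorem~\ref{thm:main_consistency} do not depend on $d$, $n$, or the mixture parameters. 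Therefore on the event $E_{d,\delta}$ of probability at least $1-\delta$ we have
\[
Z_d\le b_d(\delta):=C_{T,1}\,\frac{\sqrt r+\sqrt{\log M+\log(1/\delta)}}{\sqrt d}+C_{T,2}\,\frac{R}{\nu_T\sqrt d}.
\]

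I will first show that $b_d(\delta)\to0$ for each fixed $\delta\in(0,1)$. The concavity bound $\sqrt a+\sqrt b\le\sqrt{2(a+b)}$ gives
\[
\frac{\sqrt r+\sqrt{\log M+\log(1/\delta)}}{\sqrt d}\le\sqrt{\frac{2\,(r+\log M)}{d}+\frac{2\log(1/\delta)}{d}}.
\]
The right-hand side tends to $0$ because $r+\log M=o(d)$ and $\log(1/\delta)$ is a constant. The second term of $b_d(\delta)$ tends to $0$ precisely because $R=o(\nu_T\sqrt d)$.

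Next I convert this into convergence in probability. Fix $\varepsilon>0$ and $\delta\in(0,1)$, and choose $d_0$ such that $b_d(\delta)<\varepsilon$ for all $d\ge d_0$. For such $d$ we have $\{Z_d>\varepsilon\}\subseteq E_{d,\delta}^c$, so $\mathbb P(Z_d>\varepsilon)\le\delta$. Hence $\limsup_{d}\mathbb P(Z_d>\varepsilon)\le\delta$. Since $\delta\in(0,1)$ was arbitrary, the limit is $0$, which is $Z_d\xrightarrow{\mathbb P}0$. Read this way, the phrase ``for every fixed $\delta$'' in the statement only records that each confidence level gives eventual control, and letting $\delta\downarrow0$ yields convergence in probability.

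The only step needing care is checking uniformity. We must confirm that $C_{T,1}$ and $C_{T,2}$ are truly independent of $d$ and of $(r,M,R)$, which the full statement in Appendix~A asserts. We must also confirm that $\nu_T$ is bounded away from $0$ along the sequence. This holds because $\nu_T^2\ge1-\alpha_T^2>0$ for any fixed schedule with $T>0$, so the normalization is well defined and the proof contains no hidden $d$-dependence. Beyond this, the argument is routine bookkeeping.
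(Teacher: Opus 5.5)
Your proposal is correct and follows the paper's proof: both divide the high-probability bound of Theorem~\ref{thm:main_consistency} by $\nu_T\sqrt d$ and use $r+\log M=o(d)$ and $R=o(\nu_T\sqrt d)$ to send each normalized term to zero. Your extra step, showing $\limsup_{d}\mathbb{P}(Z_d>\varepsilon)\le\delta$ for every $\delta$ and hence convergence in probability, is left implicit in the paper's one-line argument, so including it is a useful clarification rather than a departure.
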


\begin{proof}[Proof sketch] Let $P$ and $Q=\mathbf{I}-P$ denote the orthogonal projections onto $U$ and $U^\perp$, respectively. 
In the Appendix A, we show that the reverse-time PF-ODE can be written as
\begin{equation}
\mathrm{d}\mathbf{x}_t = \bigl[ a_t \mathbf{x}_t + b_t \mathbf{m}_t(P \mathbf{x}_t) \bigr] \mathrm{d}t,
\end{equation}
where $\mathbf{m}_t(\cdot)\in U$ is the posterior mean of the component centers at time $t$. Consequently, separating the dynamics yields
\begin{equation}
Q\,\mathrm{d}\mathbf{x}_t = a_t Q\mathbf{x}_t\,\mathrm{d}t,
\qquad
P\,\mathrm{d}\mathbf{x}_t = \bigl[ a_t P\mathbf{x}_t + b_t \mathbf{m}_t(P\mathbf{x}_t) \bigr] \mathrm{d}t.
\end{equation}
Therefore, the $U^\perp$-component evolves by a common scalar rescaling, while all mixture-specific non-linear effects are confined to the $r$-dimensional subspace $U$.

Geometrically, this implies that after applying the orthogonal alignment, the maximum discrepancy between any pair of trajectories is determined by two factors: the norm difference of the initial noise within the orthogonal complement (i.e., $\big|\|Q\mathbf{x}_T^{(m_1)}\| - \|Q\mathbf{x}_T^{(m_2)}\|\big|$), and the absolute magnitude of the signal component within the data subspace (i.e., $\|P\mathbf{x}_T^{(m)}\|$). 
Since $Q\mathbf{x}_T^{(m)}$ is a Gaussian vector in dimension $d-r$, its norm concentrates uniformly over $m$, such that the norm difference is bounded by $O\bigl(\nu_T \sqrt{\log(M/\delta)}\bigr)$. On the other hand, $P\mathbf{x}_T^{(m)}$ lives in an $r$-dimensional subspace and its norm satisfies
\begin{equation}
\|P\mathbf{x}_T^{(m)}\| = O\left(\alpha_T R + \nu_T\Bigl(\sqrt r + \sqrt{\log(M/\delta)}\Bigr)\right)
\end{equation}
uniformly over $m$ with high probability. Combining these bounds yields Thm.~\ref{thm:main_consistency}, and Cor.~\ref{cor:main_consistency} follows by normalization with $\nu_T\sqrt d$.\end{proof}

\textbf{Remark}. Cor.~\ref{cor:main_consistency} indicates that in high-dimensional settings ($d \to \infty$) which matches setting of diffusion models, the error bound in Thm.~\ref{thm:main_consistency} becomes practically effectively vanishes. This provides a strong theoretical guarantee that our cross-trajectory alignment is highly accurate in realistic generation scenarios.

\begin{corollary}[Intrinsic low-dimensionality of individual trajectories]
\label{cor:trajectory_low_dim}
Under the assumptions of Thm.~\ref{thm:main_consistency}, for each sampling trajectory \(\mathbf{x}_t^{(m)}\), there exists a linear subspace
\begin{equation}
\mathcal S_m := U \oplus \operatorname{span}\{Q\mathbf{x}_T^{(m)}\}
\end{equation}
with \(\dim(\mathcal S_m)\le r+1\), such that
\begin{equation}
\mathbf{x}_{t_n}^{(m)} \in \mathcal S_m,
\qquad
0\le n\le N.
\end{equation}
Equivalently, if $\mathbf{X}^{(m)}\in\mathbb R^{(N+1)\times d} $denotes the trajectory matrix whose \(n\)-th row is \((\mathbf{x}_{t_n}^{(m)})^\top\), then
\begin{equation}
\operatorname{rank}(\mathbf{X}^{(m)})\le r+1.
\end{equation}
\end{corollary}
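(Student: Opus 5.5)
The plan is to use the decomposition from the proof sketch of Thm.~\ref{thm:main_consistency}. With $P$ the orthogonal projection onto $U$ and $Q=\mathbf{I}-P$, the reverse-time PF-ODE takes the form
\begin{equation}
\mathrm{d}\mathbf{x}_t=\bigl[a_t\mathbf{x}_t+b_t\mathbf{m}_t(P\mathbf{x}_t)\bigr]\mathrm{d}t,
\end{equation}
where $\mathbf{m}_t(\cdot)\in U$. I will handle the two components $Q\mathbf{x}_t$ and $P\mathbf{x}_t$ separately, and show that neither leaves $\mathcal S_m$ along the trajectory.

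\textbf{Component in $U^\perp$.} Since $Q\mathbf{m}_t=0$, projecting the ODE gives $\mathrm{d}(Q\mathbf{x}_t)=a_t\,Q\mathbf{x}_t\,\mathrm{d}t$. This is a linear scalar ODE in each coordinate, and integrating backward from $T$ yields
\begin{equation}
Q\mathbf{x}_t^{(m)}=\exp\Bigl(-\int_t^T a_s\,\mathrm{d}s\Bigr)\,Q\mathbf{x}_T^{(m)} .
\end{equation}
Uniqueness of the solution holds because $a_t$ is a continuous scalar coefficient on $[t_0,T]$. I will state as an assumption that integrability of $a_t$ down to $t_0=0$ is inherited from the schedule (for the VP schedule with $\sigma_0>0$, $a_t$ and $b_t$ stay bounded). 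Hence $Q\mathbf{x}_{t_n}^{(m)}\in\operatorname{span}\{Q\mathbf{x}_T^{(m)}\}$ for every $n$.

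\textbf{Component in $U$.} By construction $P\mathbf{x}_t\in U$ for all $t$, trivially. So no ODE analysis is needed here: the decomposition $\mathbf{x}_t=P\mathbf{x}_t+Q\mathbf{x}_t$ immediately gives $\mathbf{x}_{t_n}^{(m)}\in U\oplus\operatorname{span}\{Q\mathbf{x}_T^{(m)}\}=\mathcal S_m$. The sum is direct because $Q\mathbf{x}_T^{(m)}\in U^\perp$, so $\dim\mathcal S_m\le r+1$, with equality unless $Q\mathbf{x}_T^{(m)}=0$ (an event of probability zero).

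\textbf{Rank statement.} Every row of $\mathbf{X}^{(m)}$ lies in $\mathcal S_m$, so the row space of $\mathbf{X}^{(m)}$ is contained in $\mathcal S_m$. Therefore $\operatorname{rank}(\mathbf{X}^{(m)})\le r+1$. The same argument applied to $\mathbf{D}_n^{(m)}$ also places the displacements in $\mathcal S_m$.

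\textbf{Main obstacle.} The only substantive step is the structural claim that the score's non-linear part $\mathbf{m}_t$ takes values in $U$ and depends only on $P\mathbf{x}_t$, with $a_t,b_t$ scalars. I will verify this directly for the shared-spherical-covariance GMM. The noised marginal is
\begin{equation}
p_t=\sum_s\pi_s\,\mathcal N\bigl(\alpha_t\boldsymbol{\mu}_s,\nu_t^2\mathbf{I}\bigr),
\end{equation}
so its score is
\begin{equation}
-\frac{\mathbf{x}-\alpha_t\sum_s w_s(\mathbf{x})\boldsymbol{\mu}_s}{\nu_t^2},
\end{equation}
which is a scalar multiple of $\mathbf{x}$ plus a vector in $U$. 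The weights $w_s(\mathbf{x})\propto\pi_s\exp\bigl(-\|\mathbf{x}-\alpha_t\boldsymbol{\mu}_s\|^2/2\nu_t^2\bigr)$ depend on $\mathbf{x}$ only through $\langle\mathbf{x},\boldsymbol{\mu}_s\rangle=\langle P\mathbf{x},\boldsymbol{\mu}_s\rangle$ once the common factor $\|\mathbf{x}\|^2$ cancels after normalization. This is exactly the form the decomposition requires.
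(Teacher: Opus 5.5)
Your proposal is correct and follows essentially the same route as the paper: it likewise uses the exact decomposition $Q\mathbf{x}_t^{(m)}=\rho_t Q\mathbf{x}_T^{(m)}$ with $\rho_t=\exp\bigl(-\int_t^T a_v\,\mathrm{d}v\bigr)$, notes that $P\mathbf{x}_t^{(m)}\in U$ trivially, and gets the rank bound because every row of $\mathbf{X}^{(m)}$ lies in $\mathcal S_m$. Your check that the score is a scalar multiple of $\mathbf{x}$ plus a $U$-valued term depending only on $P\mathbf{x}$ is exactly what the paper establishes in its closed-form score and mean-subspace sufficiency lemmas.
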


\textbf{Remark}. Cor.~\ref{cor:trajectory_low_dim} reveals that all trajectory variance is captured by at most \(r+1\) principal components. However, this \((r+1)\)-dimensional subspace \(\mathcal S_m\) is sample-dependent, as it is spanned by the initial noise vector \(Q\mathbf{x}_T^{(m)}\). The following corollary establishes a remarkable geometric invariance: by applying a suitable orthogonal transformation to each trajectory, all trajectories can be perfectly aligned into a \textbf{single, shared} low-dimensional subspace.

\begin{corollary}[Universal low-dimensional alignment of diffusion trajectories]
\label{cor:low_dimensional_trajectory_alignment}
For any fixed unit vector \(\mathbf{e}\in U^\perp\), there exist orthogonal matrices \(\mathbf{H}_1,\dots,\mathbf{H}_M\in O(d)\) satisfying \(\mathbf{H}_m|_U = \mathbf{I}\) and
\begin{equation}
\mathbf{H}_m Q\mathbf{x}_T^{(m)}=\|Q\mathbf{x}_T^{(m)}\|\,\mathbf{e}, \qquad \forall m \in \{1, \dots, M\}.
\end{equation}
Under such transformations, the aligned trajectories defined by
\begin{equation}
\bar{\mathbf{x}}_{t_n}^{(m)}:=\mathbf{H}_m\mathbf{x}_{t_n}^{(m)}
\end{equation}
all reside in the same \((r+1)\)-dimensional subspace
\begin{equation}
\bar{\mathcal S}:=U\oplus \operatorname{span}\{\mathbf{e}\}.
\end{equation}
\end{corollary}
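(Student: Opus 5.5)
The plan has two parts: construct the $\mathbf{H}_m$ explicitly, then show that each of them maps the trajectory subspace $\mathcal S_m$ of Cor.~\ref{cor:trajectory_low_dim} onto $\bar{\mathcal S}$.

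\textbf{Construction of $\mathbf{H}_m$.} Write $\mathbf{q}_m := Q\mathbf{x}_T^{(m)}\in U^\perp$. If $\mathbf{q}_m=\mathbf{0}$ (a null event under $p_T$), take $\mathbf{H}_m=\mathbf{I}$; the required identity $\mathbf{H}_m\mathbf{q}_m=0=\|\mathbf{q}_m\|\mathbf{e}$ holds trivially. Otherwise set $\mathbf{u}_m:=\mathbf{q}_m/\|\mathbf{q}_m\|$, a unit vector in $U^\perp$. If $\mathbf{u}_m=\mathbf{e}$, take $\mathbf{H}_m=\mathbf{I}$. If not, use the Householder reflection
\begin{equation}
\mathbf{H}_m := \mathbf{I} - 2\mathbf{w}_m\mathbf{w}_m^\top,\qquad \mathbf{w}_m:=\frac{\mathbf{u}_m-\mathbf{e}}{\|\mathbf{u}_m-\mathbf{e}\|}.
\end{equation}
This matrix is symmetric and orthogonal. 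The standard identity for reflections between unit vectors gives $\mathbf{H}_m\mathbf{u}_m=\mathbf{e}$, hence $\mathbf{H}_m\mathbf{q}_m=\|\mathbf{q}_m\|\mathbf{e}$.

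\textbf{The restriction to $U$ is the identity.} Both $\mathbf{u}_m$ and $\mathbf{e}$ lie in $U^\perp$, so $\mathbf{w}_m\in U^\perp$. Therefore $\mathbf{w}_m^\top\mathbf{v}=0$ for every $\mathbf{v}\in U$, which gives $\mathbf{H}_m\mathbf{v}=\mathbf{v}$. This proves $\mathbf{H}_m|_U=\mathbf{I}$. Since $\mathbf{H}_m$ is orthogonal and fixes $U$ pointwise, it also maps $U^\perp$ into $U^\perp$.

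\textbf{Containment of the aligned trajectories.} By Cor.~\ref{cor:trajectory_low_dim}, each state can be written as $\mathbf{x}_{t_n}^{(m)}=\mathbf{v}_n+c_n\mathbf{q}_m$ with $\mathbf{v}_n\in U$ and $c_n\in\mathbb R$. Applying $\mathbf{H}_m$ gives
\begin{equation}
\bar{\mathbf{x}}_{t_n}^{(m)}=\mathbf{v}_n+c_n\|\mathbf{q}_m\|\,\mathbf{e}\in U\oplus\operatorname{span}\{\mathbf{e}\}=\bar{\mathcal S}.
\end{equation}
Because $\mathbf{e}\perp U$, we get $\dim\bar{\mathcal S}=r+1$. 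One can also make the coefficients explicit: the $Q$-dynamics $Q\,\mathrm{d}\mathbf{x}_t=a_tQ\mathbf{x}_t\,\mathrm{d}t$ from the proof sketch yield $c_n=\exp\bigl(\int_T^{t_n}a_s\,\mathrm{d}s\bigr)$, a scalar common to all $m$. This is not needed for the corollary itself.

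The only step needing care is degeneracy: the cases $\mathbf{q}_m=\mathbf{0}$ and $\mathbf{u}_m=\mathbf{e}$, where the Householder vector is undefined. Both are handled by taking $\mathbf{H}_m=\mathbf{I}$ as above. The construction also needs $U^\perp\neq\{0\}$, i.e.\ $r<d$, which is implicit in the statement since $\mathbf{e}\in U^\perp$ is assumed to be a unit vector.
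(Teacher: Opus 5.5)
Your proof is correct and follows the paper's argument. Each $\mathbf{H}_m$ fixes the $U$-component, and because $Q\mathbf{x}_{t}^{(m)}=\rho_t\,Q\mathbf{x}_T^{(m)}$ (which you access through Cor.~\ref{cor:trajectory_low_dim}, and which is the source of your $c_n=\rho_{t_n}$), it sends the $U^\perp$-component to $\rho_t\|Q\mathbf{x}_T^{(m)}\|\,\mathbf{e}$. Your one addition is the explicit Householder construction with the degenerate cases handled, which proves the existence of $\mathbf{H}_m$ that the paper takes for granted when it simply says ``choose''.
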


PF-ODE exhibits two complementary structural properties: cross-trajectory consistency and intrinsic individual low-dimensionality. Thm.~\ref{thm:main_consistency} establishes that, different sampling trajectories can be highly aligned through simple orthogonal transformation. Cor.~\ref{cor:trajectory_low_dim} further demonstrates that each trajectory is strictly confined to an \((r+1)\)-dimensional subspace. Notably, after a simple per-trajectory alignment introduced by Cor.~\ref{cor:low_dimensional_trajectory_alignment}, the entire collection of trajectories resides within a universal \((r+1)\)-dimensional subspace. Therefore, the high-dimensional sampling dynamics are effectively governed by only a few latent coordinates: one radial coordinate in \(U^\perp\) and at most \(r\) coordinates within the mean subspace \(U\).

\begin{figure}[t]
    \centering
    \includegraphics[width=\linewidth]{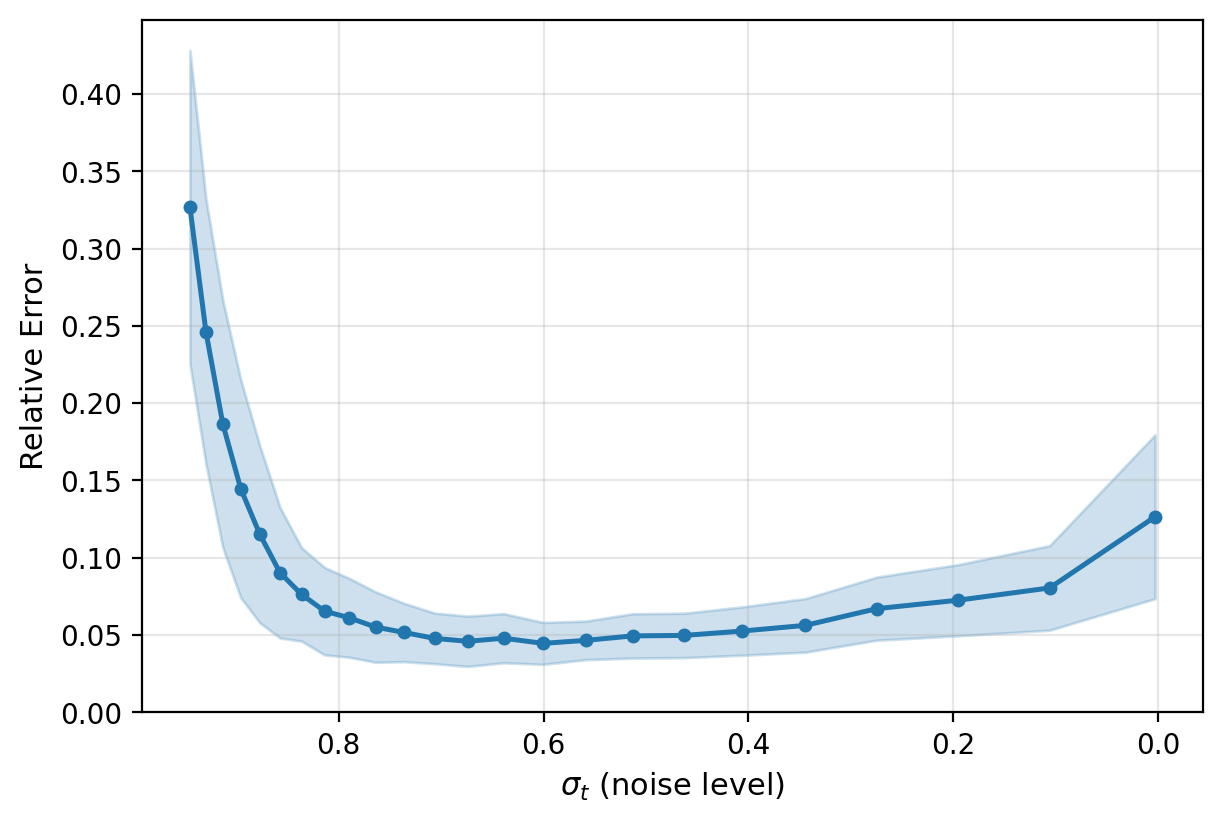}
    \caption{Relative prediction error vs. noise level $\sigma_t$. We quantify the projection error of each score onto the subspace spanned by its corresponding historical buffer. The relative error is strictly bounded, using a small window size of $r=3$.}
    \label{fig:empirical_low_dimensional}
\end{figure}

More importantly, the intrinsic low-dimensionality established in Cor.~\ref{cor:trajectory_low_dim} implies that, within any local window of a given trajectory, both the historical scores and the target output reside in a subspace. While this geometric regularity is rigorously established under mild assumption, its validity in real-world generative scenarios warrants empirical verification. To bridge this gap, we investigate the local score dynamics using sampling trajectories from Stable Diffusion 3.5. Specifically, we quantify the orthogonal projection error of the current score $\mathbf{s}_t$ onto the local subspace spanned by a short historical buffer, denoted as $\text{span}(\mathbf{s}_{t+r}, \mathbf{s}_{t+r-1}, \ldots, \mathbf{s}_{t+1})$. As illustrated in Fig.~\ref{fig:empirical_low_dimensional}, the relative projection error is strictly bounded, dropping to a mere 4\%–5\% even with a small window size $r=3$. This empirical finding confirms that even for highly complex data distributions, the immediate future score remains strictly confined within the linear span of its recent history. This empirical evident for low-dimensionality seamlessly transforms our theoretical insights into a practical algorithm, leading to our linear extrapolation framework in the subsequent section.

\subsection{Optimal Linear Leaps with Aligned Trajectories}

In this section, we present our method grounded in the the local linear dependence established above. This geometric regularity underpins the use of a linear extrapolator to predict next score function. The overall pipeline of our approach is illustrated in Fig.~\ref{fig:pipeline}. To formalize this process, suppose we collect an ensemble of $M$ fine-grained reference sampling trajectories. For the $m$-th trajectory ($m=1,\ldots,M$), denote $\tilde{\mathbf{s}}_n^{(m)}$ the model output (e.g., estimated score function) at timestep $t_n$, and let $\tilde{\mathbf{x}}_{n-1}^{(m)}$ be the next state updated using $\tilde{\mathbf{s}}_n^{(m)}$.

Given a window size $r$, our objective at timestep $t_n$ is to predict the next model output $\tilde{\mathbf{s}}_{n-1}$ by utilizing $r$ stored previous model outputs along the sampling trajectory. Denote the local history matrix of scores and the corresponding target by
\begin{equation}
\mathbf{H}_n^{(m)}=
\left[
\tilde{\mathbf{s}}_{n+r-1}^{(m)},
\tilde{\mathbf{s}}_{n+r-2}^{(m)},
\ldots,
\tilde{\mathbf{s}}_{n}^{(m)}
\right]\in\mathbb{R}^{d\times r}.
\end{equation}
We seek an optimal coefficient vector $\mathbf{w}_n\in\mathbb{R}^r$ that satisfies:
\begin{equation}
\tilde{\mathbf{s}}_{n-1}^{(m)} \approx \mathbf{H}_n^{(m)}\mathbf{w}_n.
\end{equation}
The cross-trajectory consistency presented in Thm.~\ref{thm:main_consistency} demonstrates that, after proper alignment, different trajectories follow nearly isomorphic low-dimensional dynamics. Consequently, a \emph{single} coefficient vector $\mathbf{w}_n$ can effectively characterize the score evolution of different reference trajectories at a given sampling timestep, enabling the extraction of a universal local extrapolation law. Formally, we derive the step-wise optimal predictor within the family of linear extrapolators by solving the following multi-trajectory least-squares problem:
\begin{equation}
\mathbf{w}_n^\star
=
\arg\min_{\mathbf{w}\in\mathbb{R}^r}
\sum_{m=1}^M
\left\|
\tilde{\mathbf{s}}_{n-1}^{(m)}-\mathbf{H}_n^{(m)}\mathbf{w}
\right\|_2^2.
\end{equation}

\begin{algorithm}[t]
\caption{Coefficient Bank Construction}
\label{alg:bank_construction}
\begin{algorithmic}[1]
\Require Reference score trajectories $\{\mathcal{T}^{(m)}\}_{m=1}^M$, where $\mathcal{T}^{(m)}=\{\tilde{\mathbf{s}}_{N}^{(m)},\tilde{\mathbf{s}}_{N-1}^{(m)},\ldots,\tilde{\mathbf{s}}_1^{(m)}\}$; window size $r$; ridge coefficient $\lambda$
\Ensure Coefficient bank $\mathcal{C}=\{\mathbf{c}_{N-r+1},\mathbf{c}_{N-r},\ldots,\mathbf{c}_1\}$

\For{$n = N-r+1,N-r,\ldots,1$}
    \For{$m = 1,2,\ldots,M$}
        \For{$j = 1, \ldots, r-1$}
            \State $\Delta_{n,j}^{(m)} \gets \tilde{\mathbf{s}}_{n+j}^{(m)} - \tilde{\mathbf{s}}_{n+j-1}^{(m)}$
        \EndFor
        \State $\mathbf{B}_n^{(m)} \gets \left[ \Delta_{n,r-1}^{(m)}, \ldots, \Delta_{n,1}^{(m)}, \tilde{\mathbf{s}}_n^{(m)} \right]$
    \EndFor
    \State $\mathbf{c}_n^\star \gets \arg\min_{\mathbf{c}\in\mathbb{R}^r} \sum_{m=1}^M \left\| \tilde{\mathbf{s}}_{n-1}^{(m)} - \mathbf{B}_n^{(m)}\mathbf{c} \right\|_2^2 + \lambda \|\mathbf{c}\|_2^2$
\EndFor

\State \Return $\mathcal{C}=\{\mathbf{c}_{N-r+1}^\star, \mathbf{c}_{N-r}^\star, \ldots, \mathbf{c}_1^\star\}$
\end{algorithmic}
\end{algorithm}

\begin{algorithm}[t]
\caption{LEAP Sampling (Predict-Correct)}
\label{alg:prediction_correction}
\begin{algorithmic}[1]
\Require Initial state $\tilde{\mathbf{x}}_{N}$; coefficient bank $\mathcal{C}$; window size $r$; score network $s_{\theta}$
\Ensure Final sample $\tilde{\mathbf{x}}_0$

\State Warm-up: run base sampler for $r$ steps
\State $\mathcal{B} \gets \{\tilde{\mathbf{s}}_{N}, \tilde{\mathbf{s}}_{N-1}, \ldots, \tilde{\mathbf{s}}_{N-r+1}\}$
\State Update current state to $\tilde{\mathbf{x}}_{N-r}$

\For{$n = N-r+1$ \textbf{down to} $3$ \textbf{step} $2$}
    \State \textbf{Predict:}
    \State \quad Compute differences $\Delta_{n,j}$ from $\mathcal{B}$ for $j=1,\ldots,r-1$
    \State \quad $\hat{\mathbf{s}}_{n-1} \gets \mathbf{c}_{n,r-1}\tilde{\mathbf{s}}_{n} + \sum_{j=0}^{r-2} \mathbf{c}_{n,j}\,\Delta_{n,r-j-1}$
    \State \quad Evolve to $\hat{\mathbf{x}}_{n-2}$ via numerical solver using $\hat{\mathbf{s}}_{n-1}$
    \State \quad Append $\hat{\mathbf{s}}_{n-1}$ to $\mathcal{B}$ and remove the oldest score

    \State \textbf{Correct:}
    \State \quad Evaluate exact score $\tilde{\mathbf{s}}_{n-2} \gets s_{\theta}(\hat{\mathbf{x}}_{n-2})$
    \State \quad Evolve to $\tilde{\mathbf{x}}_{n-3}$ via numerical solver using $\tilde{\mathbf{s}}_{n-2}$
    \State \quad Append $\tilde{\mathbf{s}}_{n-2}$ to $\mathcal{B}$ and remove the oldest score
\EndFor

\State \Return $\tilde{\mathbf{x}}_0$
\end{algorithmic}
\end{algorithm}

While the formulation above is conceptually straightforward, directly regressing on $[\tilde{\mathbf{s}}_{n+r-1},\tilde{\mathbf{s}}_{n+r-2},\ldots,\tilde{\mathbf{s}}_{n}]$ is often numerically ill-posed. Since consecutive model outputs along smooth segments of the diffusion path are highly correlated, the columns of the history matrix tend to be nearly collinear. As a result, the associated least-squares system becomes poorly conditioned, making the fitted coefficients sensitive to finite-sample error, imperfect cross-trajectory alignment, and numerical instabilities. To mitigate this issue, we reparameterize the local subspace using a \emph{difference basis} rather than the \emph{raw basis}. Specifically, for each trajectory window, we define the first-order differences of the scores as:
\begin{equation}
\Delta_{n,j}^{(m)}:=\tilde{\mathbf{s}}_{n+j}^{(m)}-\tilde{\mathbf{s}}_{n+j-1}^{(m)},\qquad j=1,\ldots,r-1,
\end{equation}
and define the modified history matrix as:
\begin{equation}
\mathbf{B}_n^{(m)}=
\left[
\Delta_{n,r-1}^{(m)},
\ldots,
\Delta_{n,1}^{(m)} ,\tilde{\mathbf{s}}_n^{(m)}\right]\in\mathbb{R}^{d\times r}.
\end{equation}
Rather than directly estimating $\mathbf{w}_n$, we fit an auxiliary coefficient vector $\mathbf{c}_n\in\mathbb{R}^r$ such that
\begin{equation}
\tilde{\mathbf{s}}_{n-1}^{(m)} \approx \mathbf{B}_n^{(m)}\mathbf{c}_n.
\end{equation}
In practice, we aggregate samples from all reference trajectories and solve a ridge regression problem within the difference basis for each timestep $n=N-1,N-2,\ldots,r$:
\begin{equation}
\mathbf{c}_n^\star=\arg\min_{\mathbf{c}\in\mathbb{R}^r}\sum_{m=1}^M\left\|\tilde{\mathbf{s}}_{n-1}^{(m)}-\mathbf{B}_n^{(m)}\mathbf{c}\right\|_2^2+\lambda\|\mathbf{c}\|_2^2.
\end{equation}

Once the step-wise coefficient bank $\mathcal{C}=\{\mathbf{c}_{N-r+1},\mathbf{c}_{N-r},\ldots,\mathbf{c}_{1}\}$ has been solved, inference proceeds in two phases. First, we execute a warm-up phase by running the initial $r$ denoising steps in a standard way. This initializes the history buffer with the initial $r$ model outputs and their corresponding states:
\begin{equation}
\{\tilde{\mathbf{s}}_{N},\tilde{\mathbf{s}}_{N-2},\ldots,\tilde{\mathbf{s}}_{N-r+1}\}, \quad \{\tilde{\mathbf{x}}_{N},\tilde{\mathbf{x}}_{N-2},\ldots,\tilde{\mathbf{x}}_{N-r}\}.
\end{equation}
This warm-up phase provides the minimum context required by the extrapolator. Moving forward, we employ an interleaved sampling strategy that alternates between a network-free prediction step and a network-based correction step. Specifically, suppose the current history buffer contains $r$ most recent scores:
\begin{equation}
\{\tilde{\mathbf{s}}_{n+r-1},\tilde{\mathbf{s}}_{n+r-2},\ldots,\tilde{\mathbf{s}}_{n}\}.
\end{equation}
We first predict the next model output using the learned step-wise coefficients:
\begin{equation}
\hat{\mathbf{s}}_{n-1}
:=
\mathbf{c}_{n,r-1}\tilde{\mathbf{s}}_{n}+\sum_{j=0}^{r-2} \mathbf{c}_{n,j}\,\Delta_{n,r-j-1}.
\end{equation}
The extrapolated score $\hat{\mathbf{s}}_{n-1}$ is then utilized to evolve the trajectory to $\hat{\mathbf{x}}_{n-2}$ via numerical solvers, eliminating the need for an explicit network evaluation. The extrapolated score $\hat{\mathbf{s}}_{n-1}$ then serves as a proxy for $\tilde{\mathbf{s}}_{n-1}$ and is pushed into the history buffer, displacing the oldest entry to maintain a fixed window size of $r$.

Extrapolation alone inevitably accumulates numerical drift over time. To ensure trajectory stability, we implement an interleaved Predict-Correct scheme. Following the short warm-up stage, the solver alternates between network-free extrapolation and exact network evaluation, balancing computational efficiency with numerical robustness, providing a practical acceleration framework for diffusion sampling. Given the step-wise optimality of our linear extrapolators and their capability to bypass redundant network evaluations, we name our method as \textit{Optimal Linear Leaps} (LEAP). The full algorithms of coefficient bank construction and LEAP sampling are provided in Algorithm \ref{alg:bank_construction} and \ref{alg:prediction_correction}.

\renewcommand{\arraystretch}{0.93}
\begin{table*}[t]
\centering
\caption{Quantitative results on SD3.5-Medium and FLUX.1-dev. The best and second-best results are marked in bold and \underline{underlined}, respectively. Our proposed method (LEAP) is highlighted with a gray background.}
\label{tab:main_results}
\resizebox{\textwidth}{!}{
\begin{tabular}{llccc|ccc}
\toprule
\multirow{2}{*}{\textbf{Steps}} & \multirow{2}{*}{\textbf{Method}} & \multicolumn{3}{c|}{\textbf{SD3.5-Medium}} & \multicolumn{3}{c}{\textbf{FLUX.1-dev}} \\
\cmidrule(lr){3-5} \cmidrule(lr){6-8}
& & \textbf{HPSv2} $\uparrow$ & \textbf{ImageReward} $\uparrow$ & \textbf{CLIPScore} $\uparrow$ & \textbf{HPSv2} $\uparrow$ & \textbf{ImageReward} $\uparrow$ & \textbf{CLIPScore} $\uparrow$ \\
\midrule

\multirow{8}{*}{10} 
& Euler & 25.8866 & 0.6653 & 32.7920 & 28.3538 & 0.8110 & 30.6939 \\
& DPM++ & 26.4565 & 0.7178 & 32.9027 & 28.6365 & 0.8331 & 30.7257 \\
& iPNDM & 26.4041 & 0.7063 & 32.7579 & 28.8272 & 0.8956 & 30.9178 \\
& UniPC & 26.4450 & 0.7122 & 32.6428 & 28.8193 & 0.8839 & 31.0634 \\
& LTC-Accel & 26.2546 & 0.7324 & 33.1578 & 28.9543 & 0.9193 & 31.1217 \\
& STORK-4 & 26.0567 & 0.6619 & 32.8704 & \underline{29.2846} & \underline{0.9729} & \underline{31.2895} \\
\cmidrule{2-8}
\rowcolor{gray!15} & \textbf{LEAP} (Euler) & \underline{26.8419} & \underline{0.7692} & \textbf{33.2982} & 29.0536 & 0.9204 & 30.9412 \\
\rowcolor{gray!15} & \textbf{LEAP} (DPM++) & \textbf{27.0237} & \textbf{0.8177} & \underline{33.2850} & \textbf{29.4004} & \textbf{0.9816} & \textbf{31.4320} \\
\midrule

\multirow{8}{*}{15} 
& Euler & 27.4725 & 0.8416 & 33.4644 & 29.2580 & 0.9470 & 31.0488 \\
& DPM++ & 27.8642 & 0.8438 & 33.4704 & 29.5030 & 0.9588 & 31.2782 \\
& iPNDM & 27.9220 & 0.8770 & 33.4416 & 29.5056 & 0.9491 & \underline{31.3836} \\
& UniPC & 27.8680 & 0.8788 & 33.4703 & 29.4668 & 0.9414 & 31.2339 \\
& LTC-Accel & 27.8174 & 0.9196 & \textbf{33.7184} & 29.5874 & 0.9576 & 31.1906 \\
& STORK-4 & 27.4224 & 0.8253 & 33.5179 & \textbf{29.9693} & \textbf{1.0308} & 31.3041 \\
\cmidrule{2-8}
\rowcolor{gray!15} & \textbf{LEAP} (Euler) & \underline{28.1826} & \underline{0.9522} & \underline{33.6817} & 29.6785 & 0.9719 & 31.1046 \\
\rowcolor{gray!15} & \textbf{LEAP} (DPM++) & \textbf{28.2152} & \textbf{0.9806} & 33.5617 & \underline{29.9024} & \underline{1.0101} & \textbf{31.4981} \\
\midrule

\multirow{8}{*}{20} 
& Euler & 28.2691 & 0.9201 & 33.6945 & 29.4556 & 0.9398 & 30.9759 \\
& DPM++ & 28.4878 & 0.9560 & 33.7391 & 29.6523 & 0.9460 & 31.0277 \\
& iPNDM & 28.4252 & 0.9317 & \textbf{33.9325} & 29.6963 & 0.9886 & 31.2203 \\
& UniPC & 28.4756 & 0.9364 & 33.7757 & 29.6773 & 0.9840 & 31.1858 \\
& LTC-Accel & 28.3403 & 0.9547 & 33.7901 & 29.7326 & 0.9884 & 31.1142 \\
& STORK-4 & 28.1835 & 0.9035 & \underline{33.8360} & 29.7927 & 1.0066 & 31.2650 \\
\cmidrule{2-8}
\rowcolor{gray!15} & \textbf{LEAP} (Euler) & \underline{28.5727} & \underline{0.9701} & 33.7192 & \underline{29.9370} & \underline{1.0320} & \underline{31.3433} \\
\rowcolor{gray!15} & \textbf{LEAP} (DPM++) & \textbf{28.6259} & \textbf{0.9915} & 33.5356 & \textbf{30.1514} & \textbf{1.0514} & \textbf{31.6484} \\
\bottomrule
\end{tabular}
}
\end{table*}

\section{Experiments}
\label{sec:experiments}
We evaluate the proposed LEAP on three cutting-edge diffusion models: SD3.5-Medium~\cite{esser2024scaling} and FLUX.1-Dev~\cite{flux1dev} for text-image task and TRELLIS.2~\cite{xiang2025native} for image-3D task. The quality of generated images is quantitatively evaluated using three metrics, including ImageReward~\cite{xu2023imagereward}, CLIP Score~\cite{radford2021learning}, and HPSv2~\cite{wu2023human}. The prompts are drawn from the DrawBench~\cite{saharia2022photorealistic} dataset. For image-3D task, we evaluate 3D object on a test set of 100 randomly sampled images from the Google Scanned Objects (GSO) dataset~\cite{downs2022google} by PSNR and CLIP Score. By default, we fix a window size of $r=5$ and set the ridge regularization coefficient to $\lambda=0.0001$ across all experiments. Unless otherwise specified, LEAP is compared with existing state-of-the-art solvers under identical inference settings. 

\subsection{Main Results}
\textit{\textbf{Text-to-image Generation}}. 
We evaluate LEAP on text-to-image generation, benchmarking it against representative solvers including Euler method~\cite{song2020denoising}, DPM++~\cite{lu2025dpm}, UniPC~\cite{zhao2023unipc}, iPNDM~\cite{von-platen-etal-2022-diffusers}, LTC-Accel~\cite{zhu2025accelerating} and STORK-4~\cite{tan2025stork}. As shown in Tab.~\ref{tab:main_results}, LEAP consistently achieves top-tier performance across diverse sampling budgets, effectively preserving both text-image alignment and perceptual quality for accelerated sampling. 
Moreover, LEAP consistently enhances the performance of both first-order and higher-order solvers, demonstrating its versatile plug-and-play capability. By leveraging a pre-computed, step-wise coefficient bank, LEAP seamlessly adapts any solver without any solver-specific hyperparameter tuning or model fine-tuning, making it a solver-agnostic enhancement module.
Fig.~\ref{fig:qualitative_comparison_1} provides a qualitative comparison of LEAP against state-of-the-art solvers on SD3.5-Medium. Our LEAP exhibits superior performance across four challenging scenarios including (1) text rendering: LEAP accurately synthesizes  legible text, whereas baselines suffer from typographic errors; (2) physical lighting: LEAP captures realistic transparency and intricate light refraction; (3) attribute binding: LEAP precisely binds the localized attributes to their respective partitions without semantic leakage.

\renewcommand{\arraystretch}{0.9}
\begin{table*}[t]
  \centering
  \caption{Quantitative comparison of image-to-3D generation on the GSO dataset via TRELLIS.2.}
  \label{tab:main_results_3d}
  \resizebox{0.9\linewidth}{!}{
  \begin{tabular}{l ccccc ccccc}
    \toprule
    \multirow{2}{*}{\textbf{Method}} & \multicolumn{5}{c}{\textbf{PSNR $\uparrow$}} & \multicolumn{5}{c}{\textbf{CLIP Score $\uparrow$}} \\
    \cmidrule(lr){2-6} \cmidrule(lr){7-11}
    & 10 & 15 & 20 & 25 & 30 & 10 & 15 & 20 & 25 & 30 \\
    \midrule
    Euler & 7.250 & 7.396 & 7.235 & 7.361 & 7.405 & 0.8638 & 0.8654 & 0.8682 & 0.8650 & 0.8658 \\
    \rowcolor{gray!15} \textbf{LEAP} (Euler) & \textbf{7.414} & \textbf{7.537} & \textbf{7.337} & \textbf{7.410} & \textbf{7.408} & \textbf{0.8677} & \textbf{0.8674} & \textbf{0.8693} & \textbf{0.8678} & \textbf{0.8709} \\
    \bottomrule
  \end{tabular}
  }
\end{table*}

\textit{\textbf{Image-to-3D Generation}}.
We further validate LEAP on image-to-3D generation using the TRELLIS.2~\cite{xiang2025native} architecture. 
As presented in Tab.~\ref{tab:main_results_3d}, our method achieves systematic improvements in both PSNR and CLIP Score over the baseline on GSO dataset.
The qualitative results shown in Fig.~\ref{fig:qualitative_comparison_2} demonstrate LEAP's superior capability in (1) topological correctness, maintaining precise connectivity in complex jointed structures; (2) surface fidelity; (3) geometric completeness, effectively generating intricate and high-frequency geometries, yielding sharper textural details and superior color accuracy.

\begin{figure}[t]
    \centering
    \setlength{\tabcolsep}{1pt} 
    \renewcommand{\arraystretch}{0.5} 

    \begin{tabular}{m{1.3cm} m{2.3cm} m{2.3cm} m{2.3cm}}

        \makecell[c]{\small DPM++} & 
        \includegraphics[width=\linewidth]{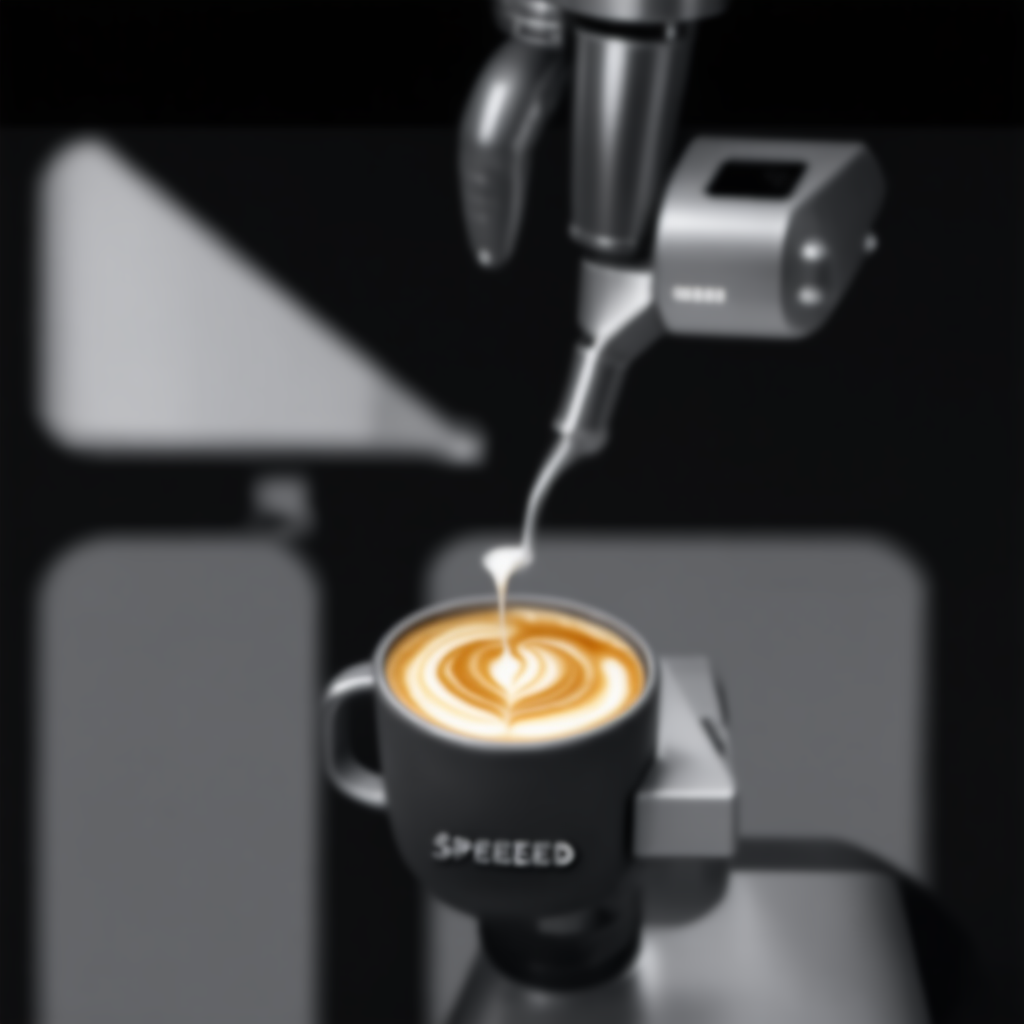} & 
        \includegraphics[width=\linewidth]{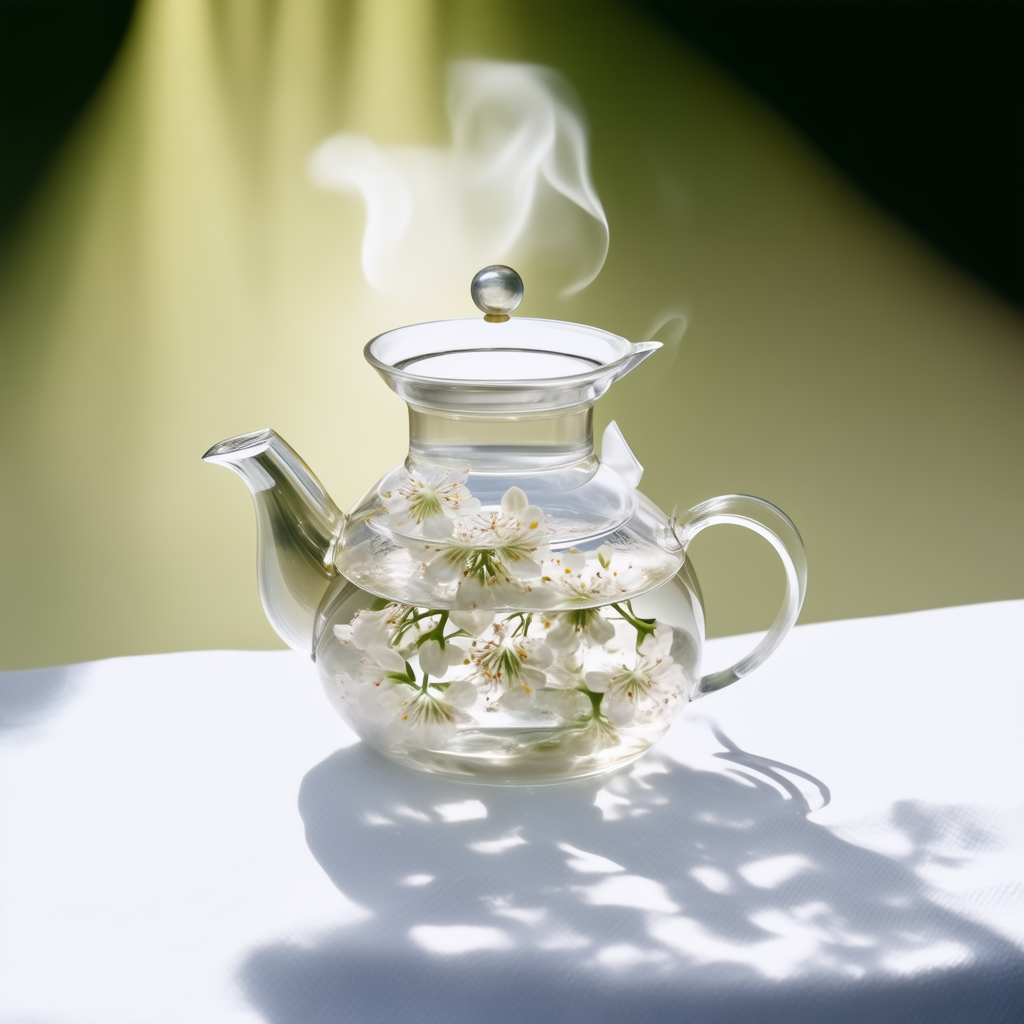} & 
        \includegraphics[width=\linewidth]{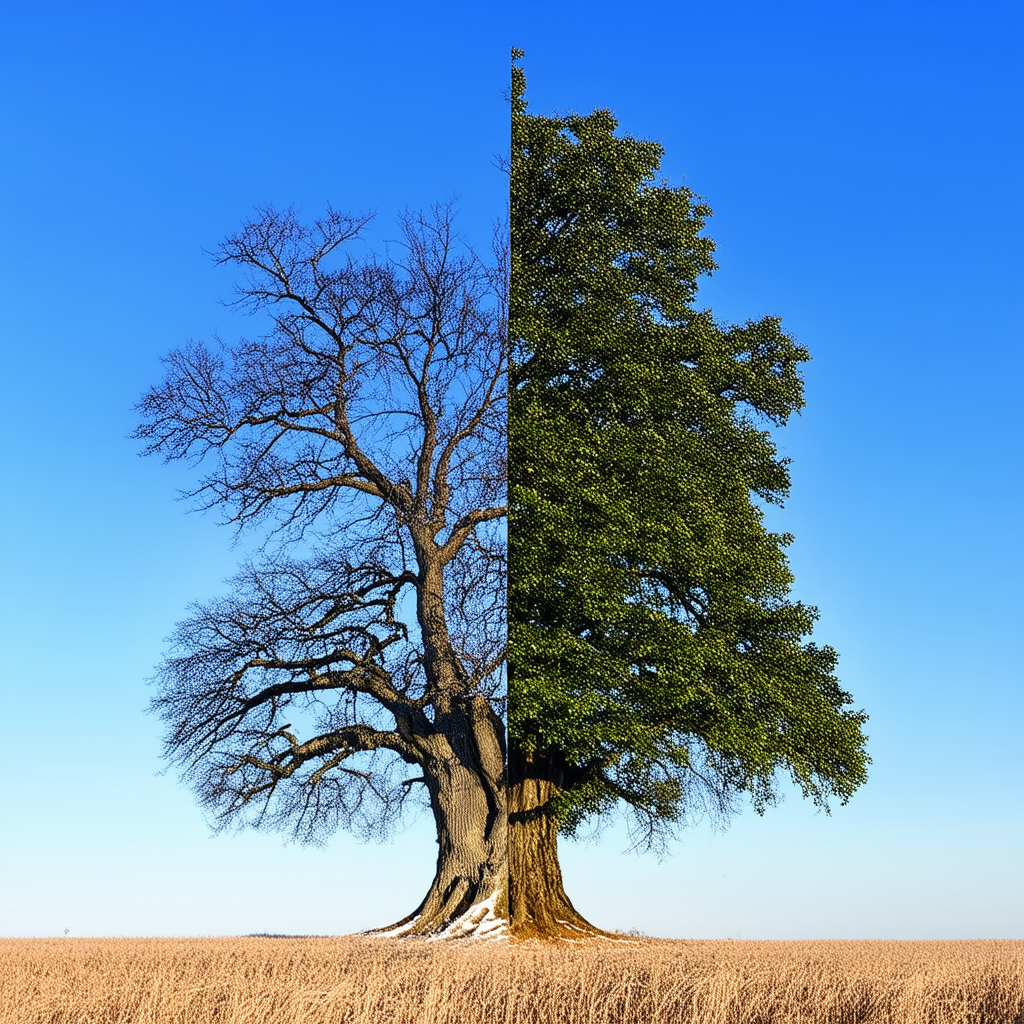} \\
        \addlinespace[0.2mm] 

        \makecell[c]{\small LTC-Accel} & 
        \includegraphics[width=\linewidth]{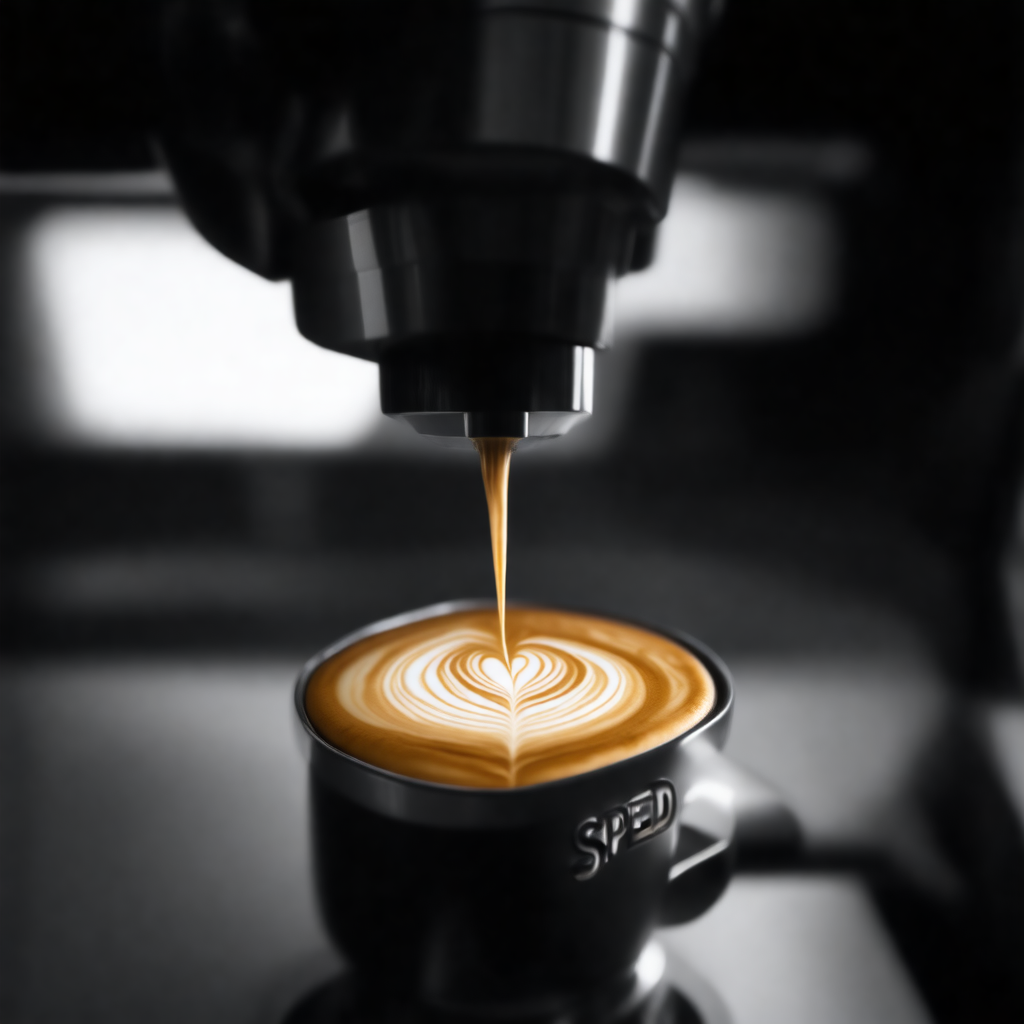} & 
        \includegraphics[width=\linewidth]{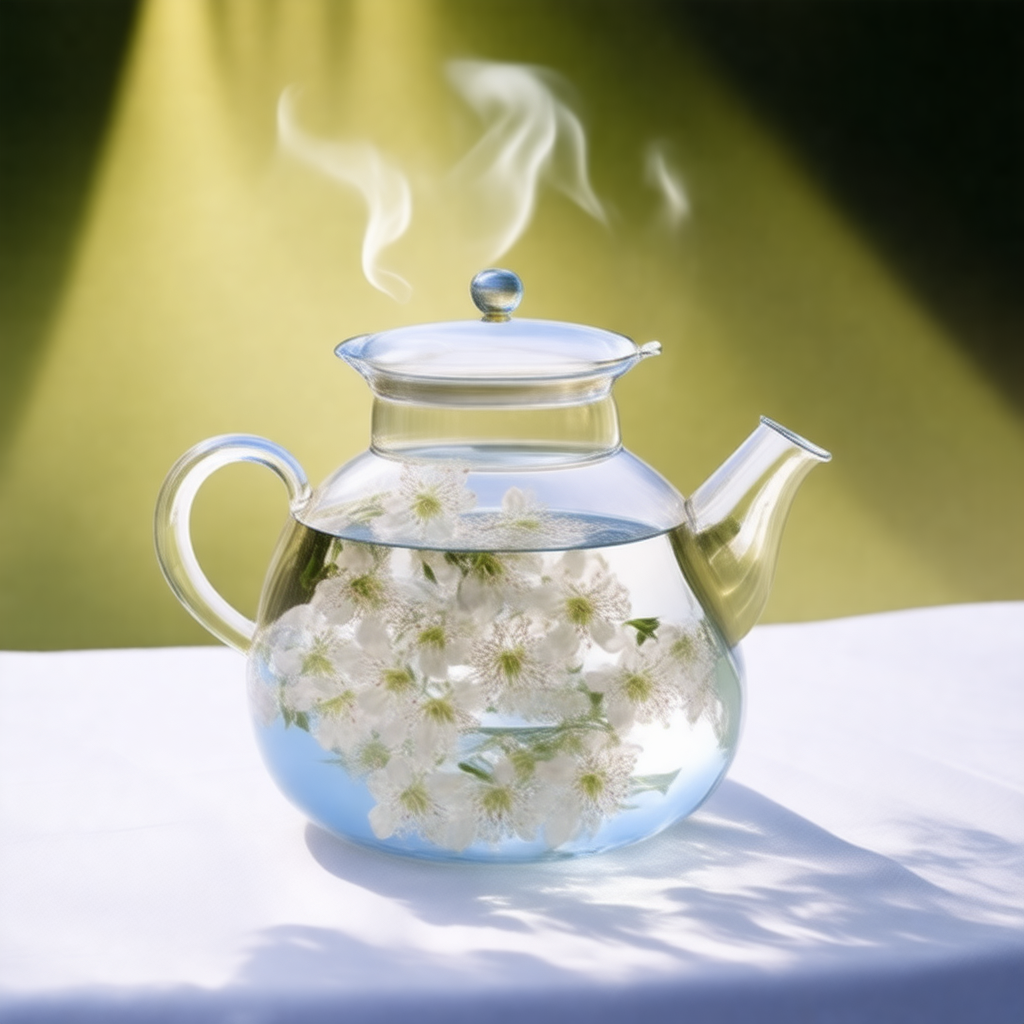} & 
        \includegraphics[width=\linewidth]{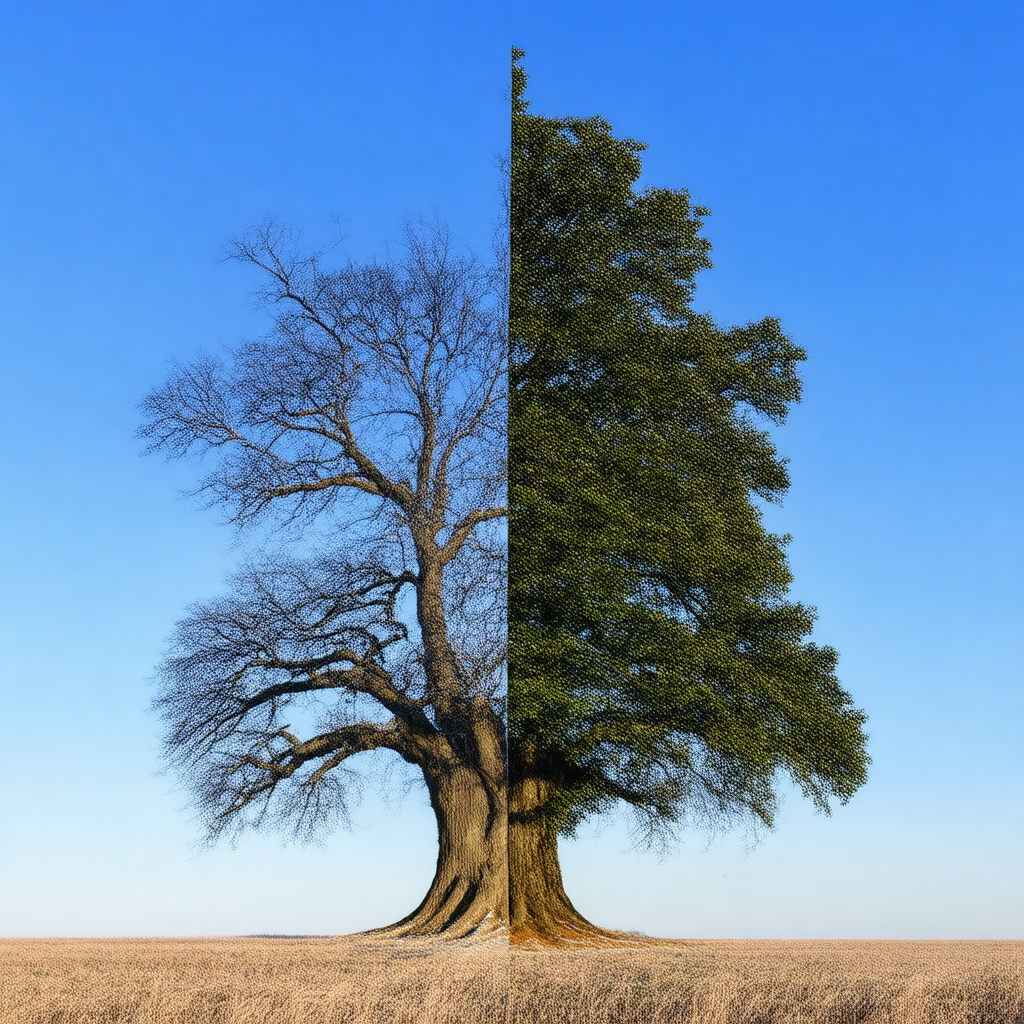} \\
        \addlinespace[0.2mm]

        \makecell[c]{\small STORK4} & 
        \includegraphics[width=\linewidth]{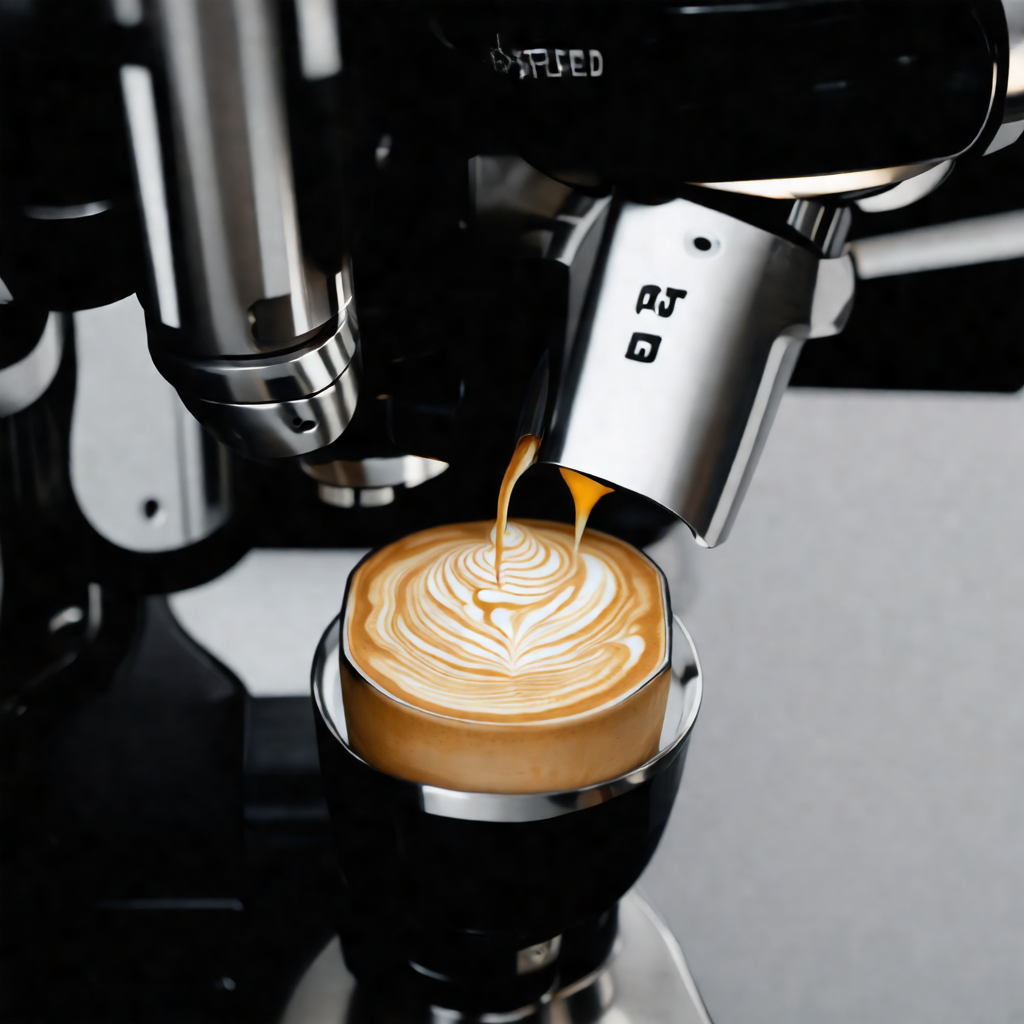} & 
        \includegraphics[width=\linewidth]{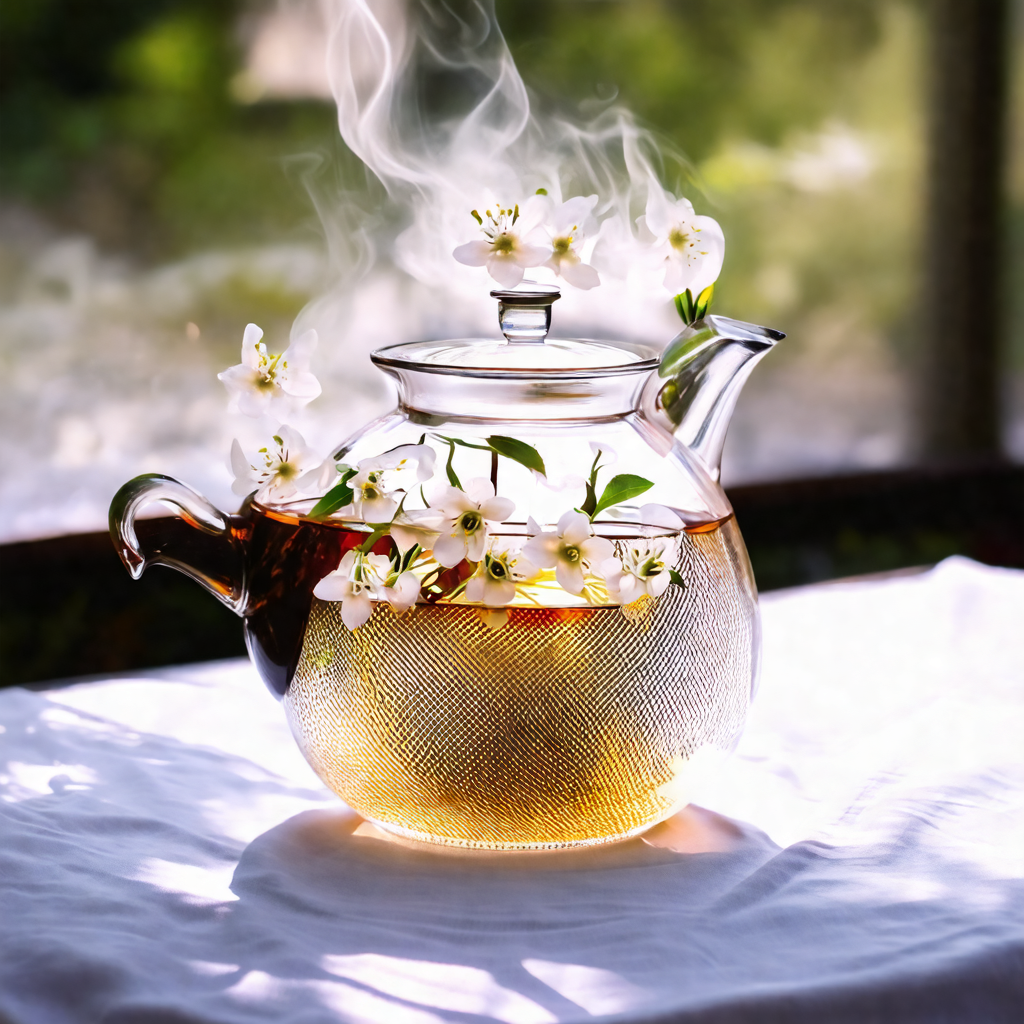} & 
        \includegraphics[width=\linewidth]{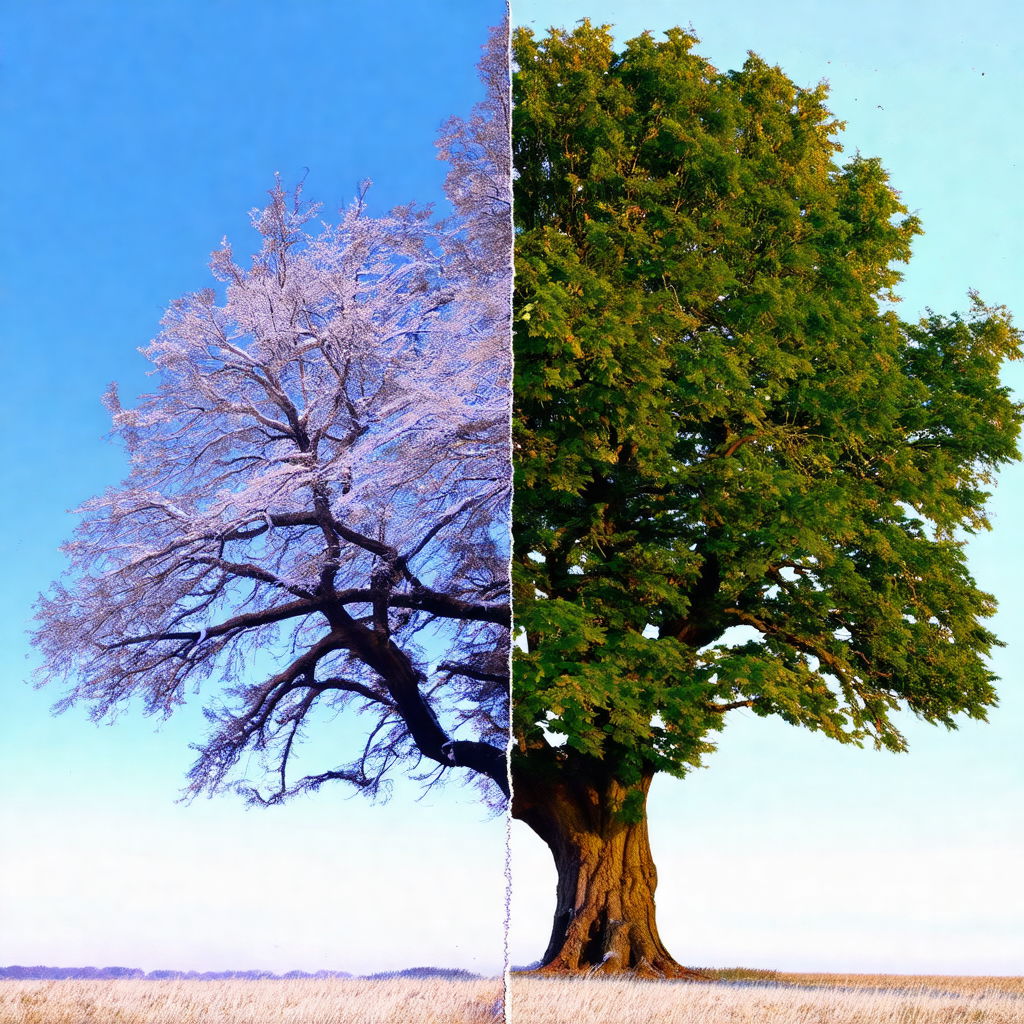} \\
        \addlinespace[0.2mm]
        
        \makecell[c]{\small \textbf{LEAP}} & 
        \includegraphics[width=\linewidth]{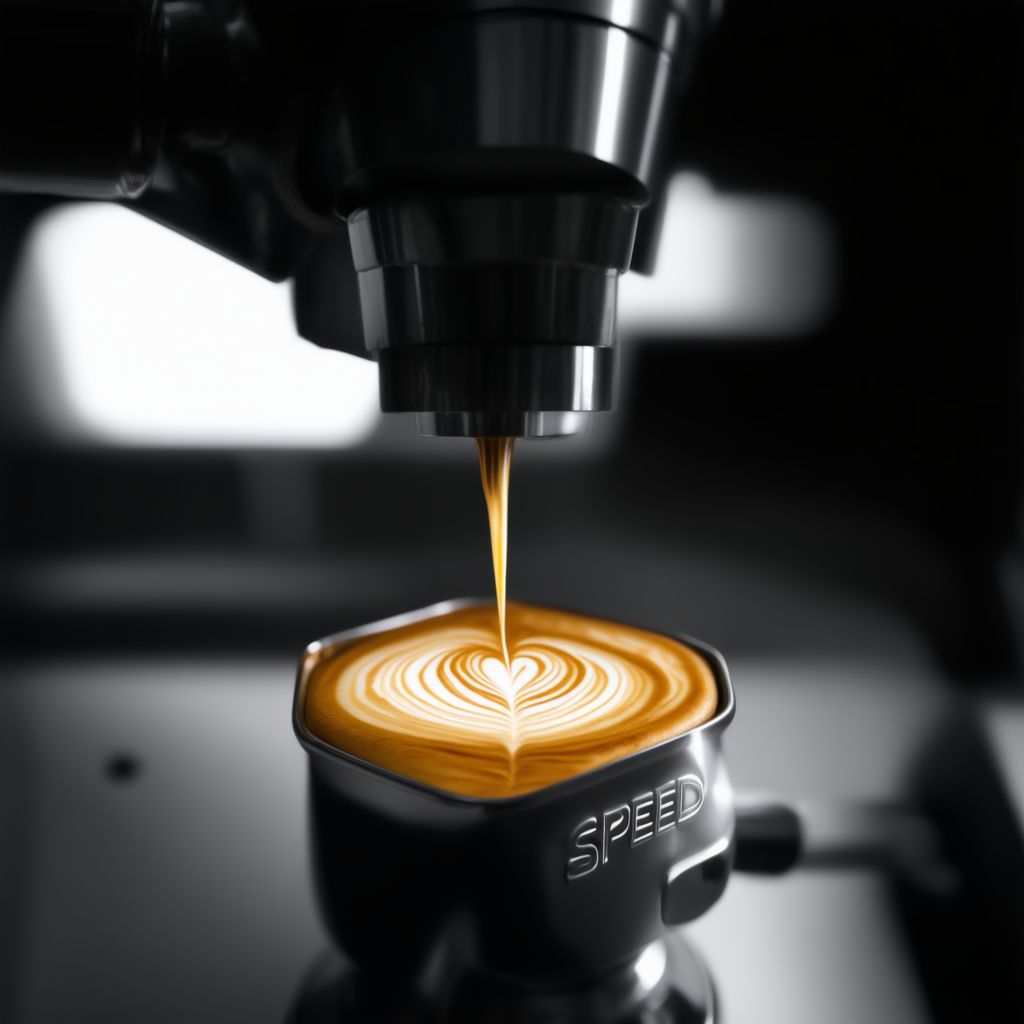} & 
        \includegraphics[width=\linewidth]{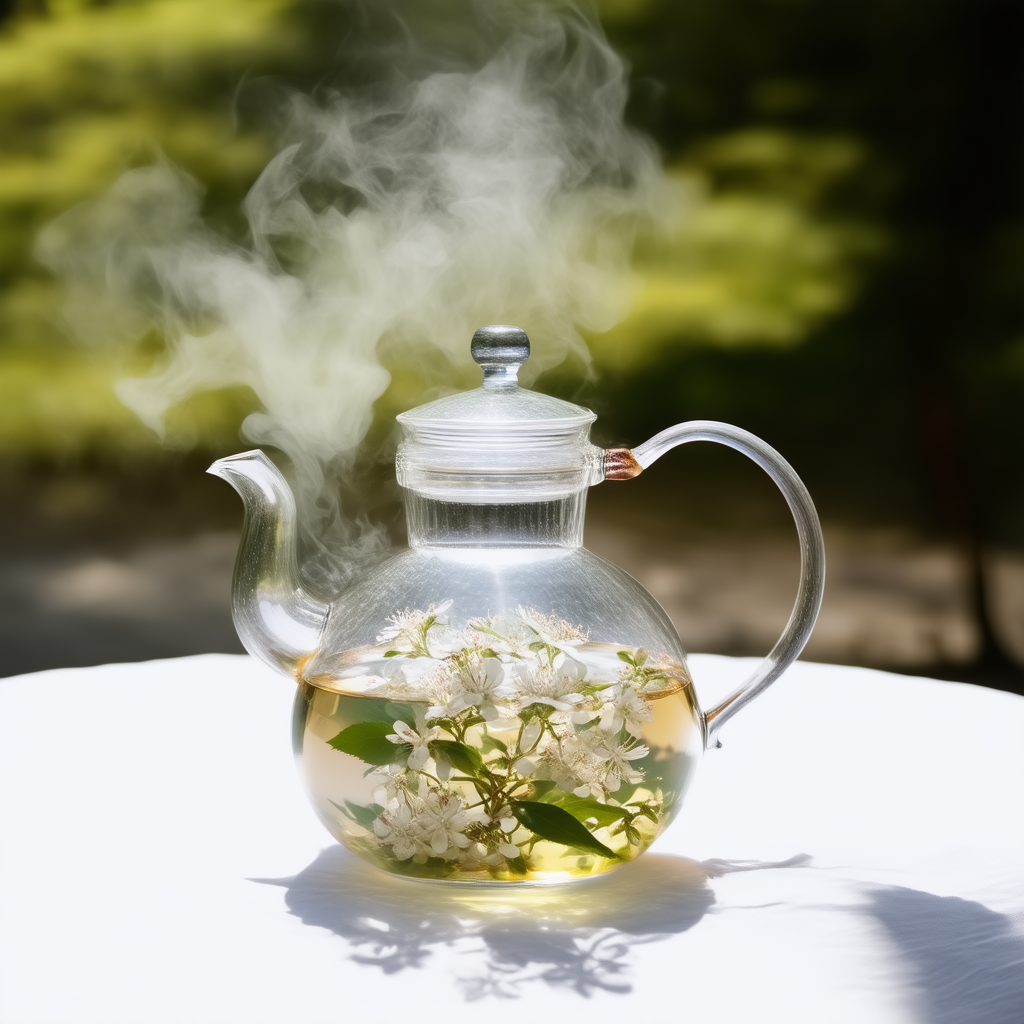} & 
        \includegraphics[width=\linewidth]{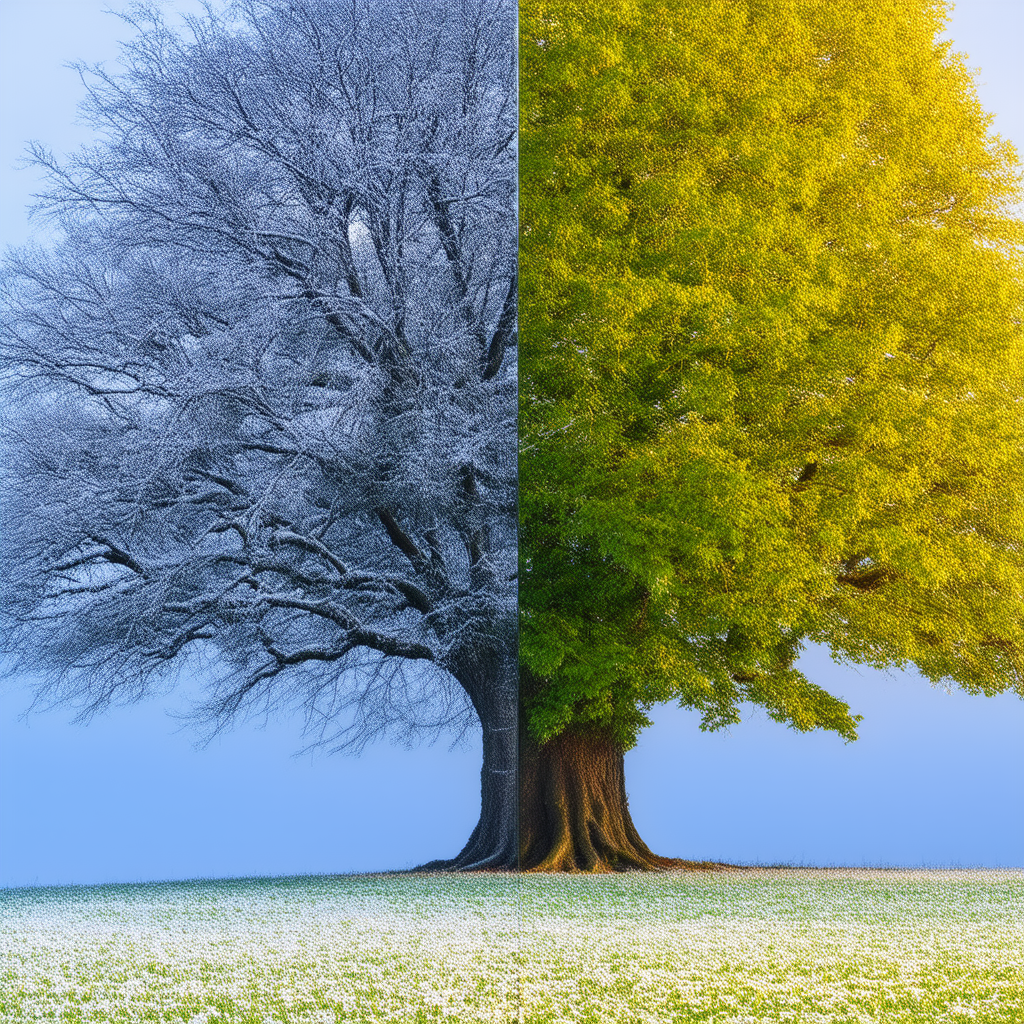} \\

    \end{tabular}
    
    \caption{Text-to-image generation on SD3.5-Medium using 10 inference steps across diverse prompts.}
    \label{fig:qualitative_comparison_1}
\end{figure}

\begin{figure}[t]
    \centering
    \setlength{\tabcolsep}{2.1pt}
    \renewcommand{\arraystretch}{0.5}

    \setlength{\fboxsep}{0pt}
    \setlength{\fboxrule}{0.5pt}

    \begin{tabular}{ m{0.48\linewidth} m{0.48\linewidth} }
        
        \makecell[c]{Euler} & \makecell[c]{\textbf{LEAP}} \\
        \addlinespace[0.5mm]
        
        \fbox{\includegraphics[width=\linewidth]{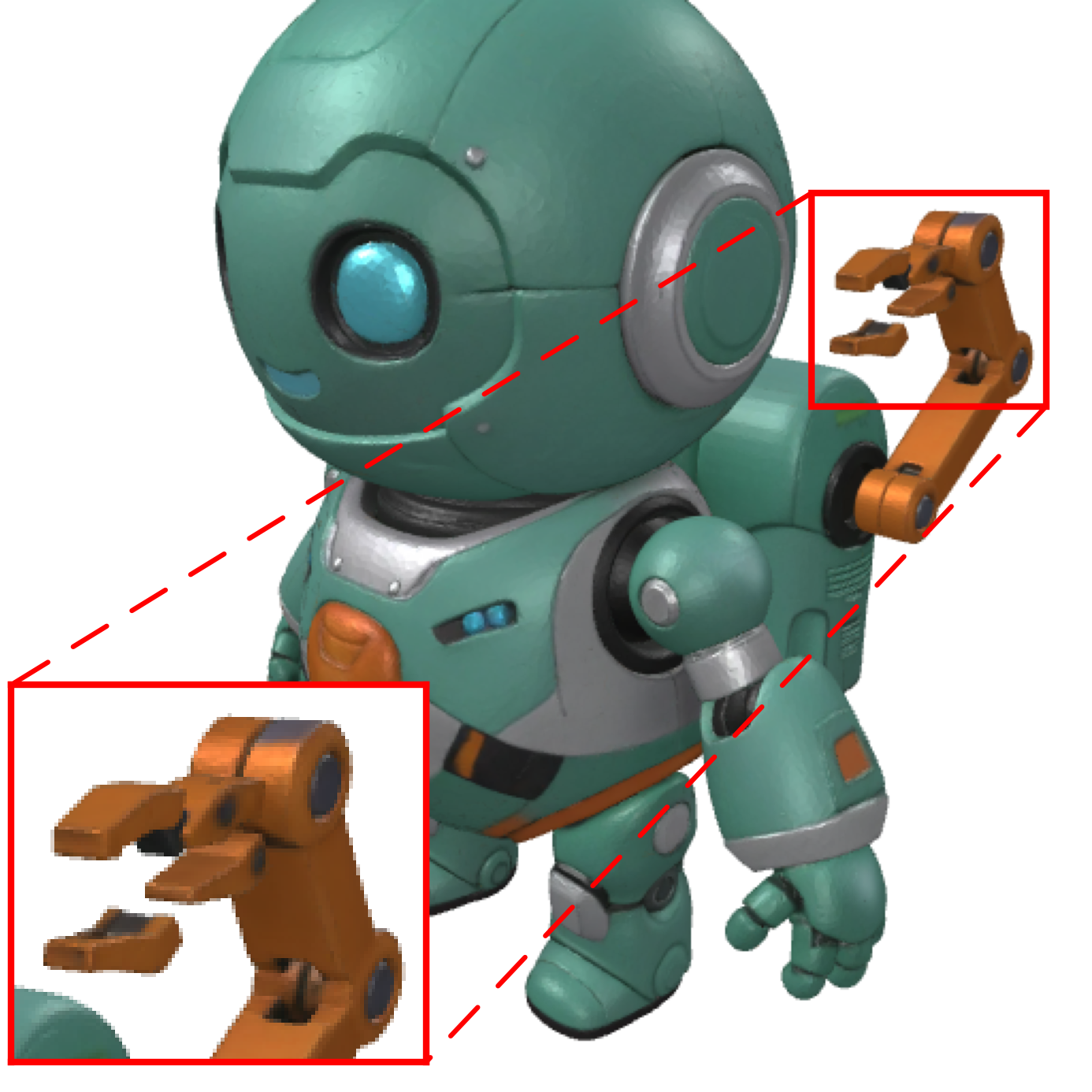}} & 
        \fbox{\includegraphics[width=\linewidth]{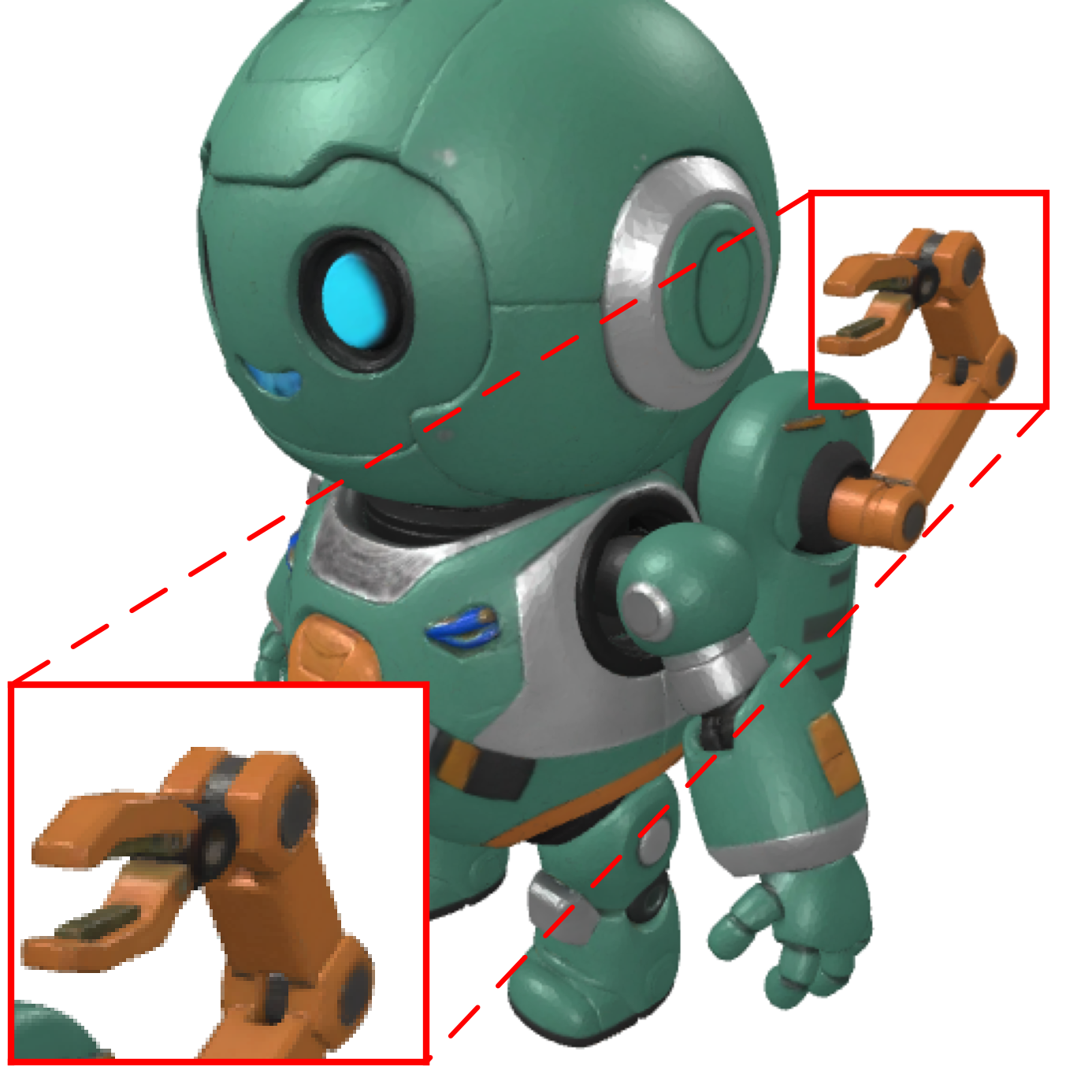}} \\
        \addlinespace[0.5mm] 

    \end{tabular}
    
    \begin{tabular}{m{0.232\linewidth} m{0.232\linewidth} m{0.232\linewidth} m{0.232\linewidth}}

        \fbox{\includegraphics[width=\linewidth]{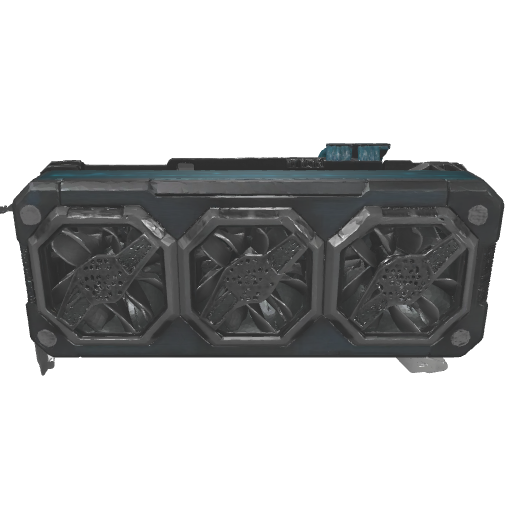}} & 
        \fbox{\includegraphics[width=\linewidth]{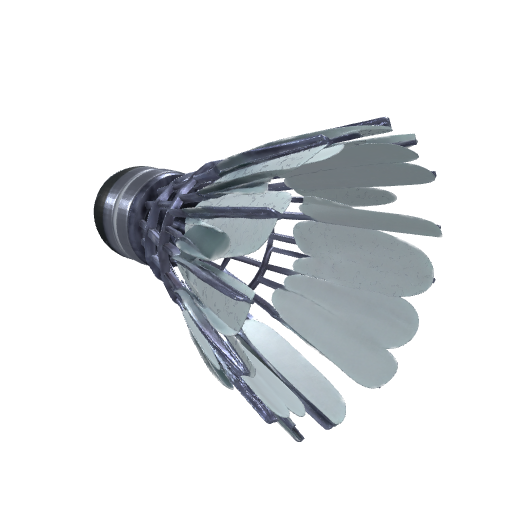}} & 
        \fbox{\includegraphics[width=\linewidth]{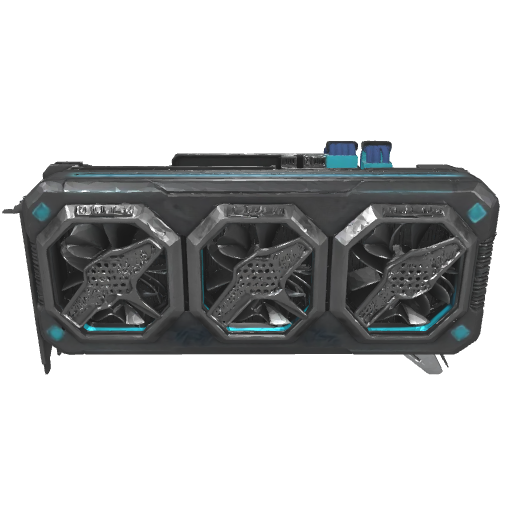}} &
        \fbox{\includegraphics[width=\linewidth]{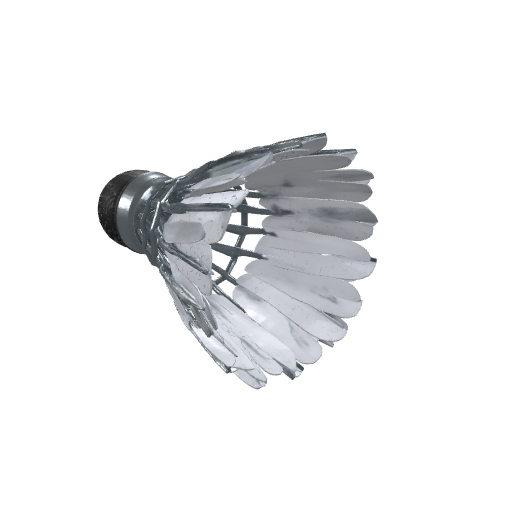}} \\
    \end{tabular}

    \caption{Image-to-3D generation on TRELLIS.2 using 10 inference steps across different image prompts.}
    \label{fig:qualitative_comparison_2}
    \vspace{-10pt}
\end{figure}

\subsection{Ablation Study}
To systematically validate the contribution of each key design choice in the proposed method, we conduct comprehensive ablation studies on SD3.5-Medium. 
To ensure a rigorous comparison, we isolate the impact of each component by maintaining a fixed $N$-step sampling schedule, identical random seeds and a consistent ensemble of reference trajectory (default $M=32$). 
(a) \textbf{Window Size} $r$. As shown in Fig~\ref{fig:ablation_ws}, we adopt $r=5$ as the default window size, as it delivers peak or near-peak performance in most scenarios. 
(b) \textbf{Penalty Strength} $\lambda$. Fig.~\ref{fig:ablation_lambda} demonstrates that LEAP is robust to the penalty coefficient $\lambda$ across a wide range of values. We retain this penalty term across all experiments to ensure theoretical consistency and numerical stability. 
(c) \textbf{Number of Reference Trajectories} $M$. As shown in Fig~\ref{fig:ablation_m}, LEAP exhibits remarkable data efficiency, maintaining high performance even with minimal reference data ($M \le 2$). This observation perfectly corroborates our theoretical analysis regarding cross-trajectory consistency. We default to $M=32$ to ensure statistically robust coverage across diverse prompts. 
(d) \textbf{Basis Construction.} Fig.~\ref{fig:basis_ablation} underscores the critical role of our difference-basis reparameterization. The difference basis effectively rectifies the ill-conditioning, ensuring numerical stability and enabling consistent quality improvements across all NFE settings.

\begin{figure*}[t]
    \centering
    \begin{subfigure}[b]{0.32\textwidth}
        \centering
        \includegraphics[width=\linewidth]{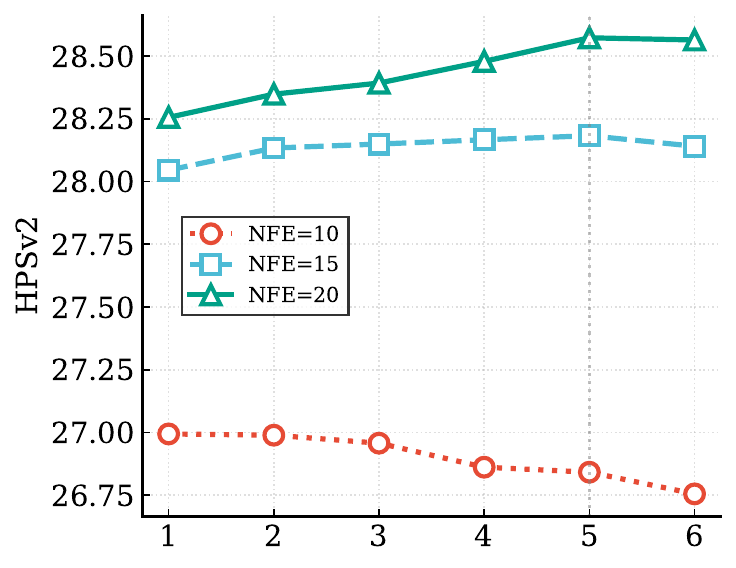}
        \caption{Window Size ($r$)}
        \label{fig:ablation_ws}
    \end{subfigure}
    \hfill
    \begin{subfigure}[b]{0.32\textwidth}
        \centering
        \includegraphics[width=\linewidth]{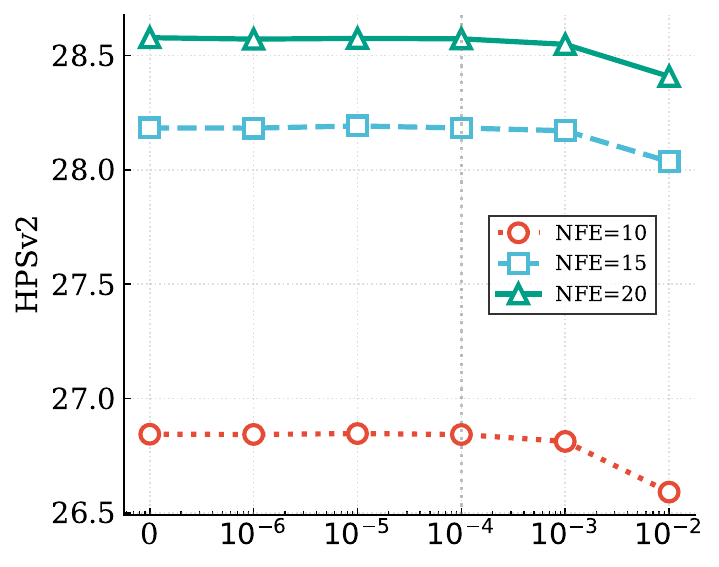}
        \caption{Penalty Strength ($\lambda$)}
        \label{fig:ablation_lambda}
    \end{subfigure}
    \hfill
    \begin{subfigure}[b]{0.32\textwidth}
        \centering
        \includegraphics[width=\linewidth]{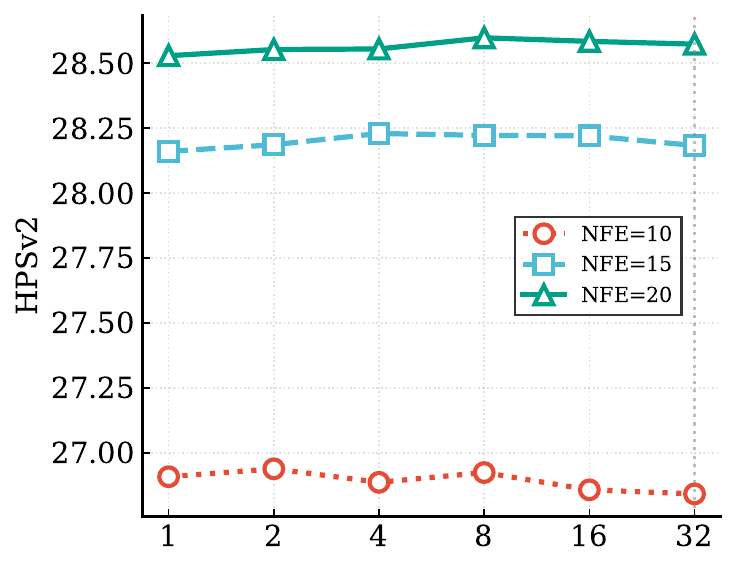}
        \caption{Number of Reference Trajectories ($M$)}
        \label{fig:ablation_m}
    \end{subfigure}
    
    \caption{Ablation studies on key hyperparameters in Stable Diffusion 3.5-Medium. We evaluate the impact of (a) Window Size, (b) Penalty Strength, and (c) Number of Tokens on the HPSv2 score across different NFEs. The default configuration is indicated by the dashed vertical line.}
    \label{fig:total_ablation}
\end{figure*}

\begin{figure}
    \centering
    \includegraphics[width=0.95\linewidth]{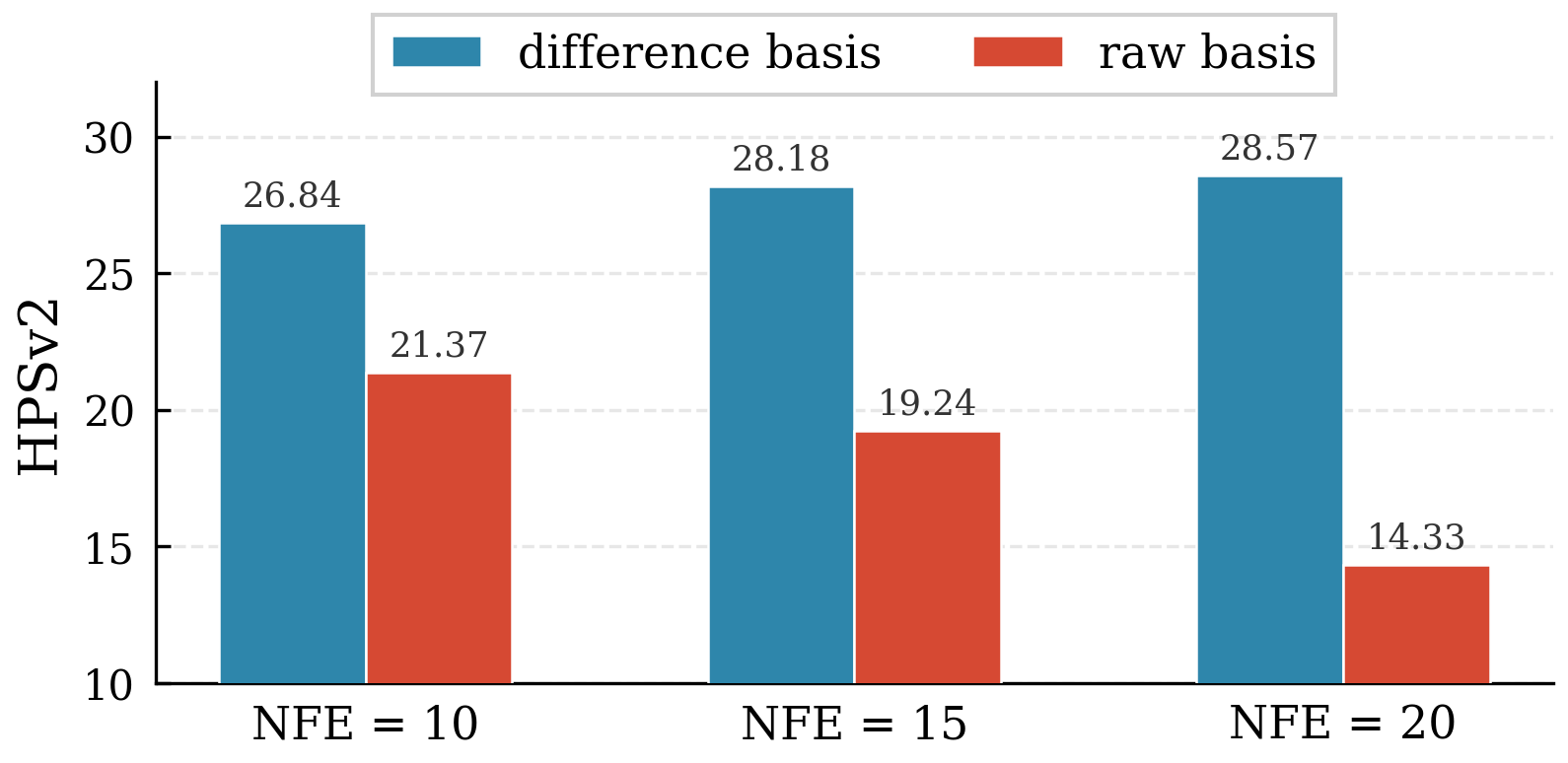}
    \caption{Ablation study on basis construction in Stable Diffusion 3.5-Medium. Our proposed different basis effectively addresses the issue of ill-conditioning.}
    \label{fig:basis_ablation}
    \vspace{-2mm}
\end{figure}

\section{Related Works}
\textit{\textbf{Fast numerical solvers}}. A predominant branch of training-free acceleration methods focuses on improving the numerical solver employed in reverse-time sampling. Early work such as DDIM~\cite{song2020denoising} established the ODE-based paradigm for fast deterministic generation. Subsequent research has developed increasingly effective high-order and multi-step solvers. 
PNDM~\cite{liu2022pseudo} introduced multi-step pseudo numerical methods, showing that leveraging historical samples can significantly improve the quality-speed trade-off over first-order samplers. Its improved implementation, iPNDM~\cite{von-platen-etal-2022-diffusers}, is frequently utilized in modern diffusion sampling. 
DEIS~\cite{zhang2022fast} further exploited the semi-linear structure of diffusion dynamics through an exponential-integrator formulation. Similarly, DPM-Solver~\cite{lu2022dpm} and DPM++~\cite{lu2025dpm} derived dedicated high-order solvers by analytically isolating the linear components, significantly reducing discretization errors, and have emerged as a robust practical standard, particularly for guided sampling scenarios. UniPC~\cite{zhao2023unipc} unified the predictor-corrector paradigm. Subsequent refinements such as DPM-Solver-v3~\cite{zheng2023dpm} and DC-Solver~\cite{zhao2024dc}, further improved solver parameterization and correction stability. LTC-Accel~\cite{zhu2025accelerating} leverages the local transition coherence of generative trajectories to dynamically approximate adjacent network outputs, achieving training-free acceleration. STORK~\cite{tan2025stork} addressed numerical stiffness and removed the reliance on semi-linear assumptions and proposed a stabilized solver. 
Despite their differences, these methods still construct update rules primarily from numerical analysis or step-wise trend heuristics. 
In contrast, our method revisits multi-step solving from a trajectory-geometric perspective and seeks to exploit the intrinsic low-dimensional and cross-trajectory consistent structure of generative trajectories more directly.

\textit{\textbf{Diffusion sampling dynamics}}. Prior research on diffusion sampling dynamics has mainly established trajectory regularity through empirical observations, supported by preliminary theoretical insights under idealized settings. 
Notably, Wang et al.~\cite{wang2023diffusion} observed that individual reverse-time sampling trajectories are effectively low-dimensional, often resembling two-dimensional rotations. 
Chen et al.~\cite{chen2024trajectory} studied deterministic PF-ODE trajectories, finding that each trajectory closely follows the displacement between its endpoints, with residuals neatly captured by two or three orthogonal directions, yielding a characteristic boomerang-like shape. 
Their subsequent work~\cite{chen2025geometric} strengthened the cross-trajectory consistency claim by aligning projected trajectories with orthogonal Procrustes, revealing a nearly canonical trajectory shape after sample-specific rotations and reflections. While these works have empirically uncovered the trajectory regularity of diffusion sampling dynamics, their theoretical grounding relies largely on KDE-based surrogates of the empirical data distribution, analytically tractable Gaussian models, and supportive Gaussian-mixture calculations. 
To the best of our knowledge, prior literature lacks a unified theorem within a Mixture-of-Gaussian framework that jointly formalizes both trajectory low-dimensionality and cross-trajectory isomorphism. In contrast, our work provides a principled derivation that simultaneously establishes these two properties.

\section{Conclusion}
In this paper, we introduced LEAP, a novel, training-free multi-step ODE solver that effectively addresses the performance degradation of diffusion models in low-NFE regimes. Our approach is built upon a rigorous investigation of the geometric properties of diffusion sampling trajectories. By proving the intrinsic low-dimensionality and cross-trajectory consistency of generative paths, we formulated a supervised least-squares optimization within the local trajectory subspace. This allows LEAP to derive optimal, sample-agnostic extrapolation coefficients, achieving accurate future state approximations with minimal computational overhead. Extensive evaluations demonstrate that LEAP serves as a versatile, plug-and-play enhancement for modern large-scale diffusion models. LEAP consistently outperforms state-of-the-art baselines across diverse modalities, including text-to-image generation using Stable Diffusion 3.5 and FLUX.1-Dev and image-to-3D generation using TRELLIS.2. Future work will delve deeper into the geometric properties of diffusion sampling trajectories under practical settings to further align our theoretical framework with realistic, finite-step sampling scenarios.

\begin{acks}
This work is supported by the National Natural Science Foundation of China (No. 62476244), the Starry Night Science Fund of Zhejiang University Shanghai Institute for Advanced Study, China (Grant No: SN-ZJU-SIAS-001) and the advanced computing resources provided by the Supercomputing Center of Hangzhou City University.
\end{acks}
\bibliographystyle{ACM-Reference-Format}
\bibliography{refs}

@String{Computer = "{IEEE} Computer" }

@String{Springer = "Springer-Verlag" }

@article{ho2020denoising,
  title={Denoising diffusion probabilistic models},
  author={Ho, Jonathan and Jain, Ajay and Abbeel, Pieter},
  journal={Advances in neural information processing systems},
  volume={33},
  pages={6840--6851},
  year={2020}
}

@inproceedings{sohl2015deep,
  title={Deep unsupervised learning using nonequilibrium thermodynamics},
  author={Sohl-Dickstein, Jascha and Weiss, Eric and Maheswaranathan, Niru and Ganguli, Surya},
  booktitle={International conference on machine learning},
  pages={2256--2265},
  year={2015},
  organization={pmlr}
}

@article{song2020improved,
  title={Improved techniques for training score-based generative models},
  author={Song, Yang and Ermon, Stefano},
  journal={Advances in neural information processing systems},
  volume={33},
  pages={12438--12448},
  year={2020}
}

@article{song2020score,
  title={Score-based generative modeling through stochastic differential equations},
  author={Song, Yang and Sohl-Dickstein, Jascha and Kingma, Diederik P and Kumar, Abhishek and Ermon, Stefano and Poole, Ben},
  journal={arXiv preprint arXiv:2011.13456},
  year={2020}
}

@inproceedings{rombach2022high,
  title={High-resolution image synthesis with latent diffusion models},
  author={Rombach, Robin and Blattmann, Andreas and Lorenz, Dominik and Esser, Patrick and Ommer, Bj{\"o}rn},
  booktitle={Proceedings of the IEEE/CVF conference on computer vision and pattern recognition},
  pages={10684--10695},
  year={2022}
}

@article{ramesh2022hierarchical,
  title={Hierarchical text-conditional image generation with clip latents},
  author={Ramesh, Aditya and Dhariwal, Prafulla and Nichol, Alex and Chu, Casey and Chen, Mark},
  journal={arXiv preprint arXiv:2204.06125},
  volume={1},
  number={2},
  pages={3},
  year={2022}
}

@article{saharia2022photorealistic,
  title={Photorealistic text-to-image diffusion models with deep language understanding},
  author={Saharia, Chitwan and Chan, William and Saxena, Saurabh and Li, Lala and Whang, Jay and Denton, Emily L and Ghasemipour, Kamyar and Gontijo Lopes, Raphael and Karagol Ayan, Burcu and Salimans, Tim and others},
  journal={Advances in neural information processing systems},
  volume={35},
  pages={36479--36494},
  year={2022}
}

@article{ho2022video,
  title={Video diffusion models},
  author={Ho, Jonathan and Salimans, Tim and Gritsenko, Alexey and Chan, William and Norouzi, Mohammad and Fleet, David J},
  journal={Advances in neural information processing systems},
  volume={35},
  pages={8633--8646},
  year={2022}
}

@article{blattmann2023stable,
  title={Stable video diffusion: Scaling latent video diffusion models to large datasets},
  author={Blattmann, Andreas and Dockhorn, Tim and Kulal, Sumith and Mendelevitch, Daniel and Kilian, Maciej and Lorenz, Dominik and Levi, Yam and English, Zion and Voleti, Vikram and Letts, Adam and others},
  journal={arXiv preprint arXiv:2311.15127},
  year={2023}
}

@article{kong2020diffwave,
  title={Diffwave: A versatile diffusion model for audio synthesis},
  author={Kong, Zhifeng and Ping, Wei and Huang, Jiaji and Zhao, Kexin and Catanzaro, Bryan},
  journal={arXiv preprint arXiv:2009.09761},
  year={2020}
}

@article{chen2020wavegrad,
  title={Wavegrad: Estimating gradients for waveform generation},
  author={Chen, Nanxin and Zhang, Yu and Zen, Heiga and Weiss, Ron J and Norouzi, Mohammad and Chan, William},
  journal={arXiv preprint arXiv:2009.00713},
  year={2020}
}

@article{salimans2022progressive,
  title={Progressive distillation for fast sampling of diffusion models},
  author={Salimans, Tim and Ho, Jonathan},
  journal={arXiv preprint arXiv:2202.00512},
  year={2022}
}

@article{luo2023latent,
  title={Latent consistency models: Synthesizing high-resolution images with few-step inference},
  author={Luo, Simian and Tan, Yiqin and Huang, Longbo and Li, Jian and Zhao, Hang},
  journal={arXiv preprint arXiv:2310.04378},
  year={2023}
}

@article{song2023consistency,
  title={Consistency models},
  author={Song, Yang and Dhariwal, Prafulla and Chen, Mark and Sutskever, Ilya},
  year={2023}
}

@article{kim2023consistency,
  title={Consistency trajectory models: Learning probability flow ode trajectory of diffusion},
  author={Kim, Dongjun and Lai, Chieh-Hsin and Liao, Wei-Hsiang and Murata, Naoki and Takida, Yuhta and Uesaka, Toshimitsu and He, Yutong and Mitsufuji, Yuki and Ermon, Stefano},
  journal={arXiv preprint arXiv:2310.02279},
  year={2023}
}

@inproceedings{esser2024scaling,
  title={Scaling rectified flow transformers for high-resolution image synthesis},
  author={Esser, Patrick and Kulal, Sumith and Blattmann, Andreas and Entezari, Rahim and M{\"u}ller, Jonas and Saini, Harry and Levi, Yam and Lorenz, Dominik and Sauer, Axel and Boesel, Frederic and others},
  booktitle={Forty-first international conference on machine learning},
  year={2024}
}

@misc{flux1dev,
  author       = {{Black Forest Labs}},
  title        = {{FLUX.1} [dev]},
  year         = {2024},
  publisher    = {GitHub},
  journal      = {GitHub repository},
  howpublished = {\url{https://github.com/black-forest-labs/flux}}
}

@article{lu2022dpm,
  title={Dpm-solver: A fast ode solver for diffusion probabilistic model sampling in around 10 steps},
  author={Lu, Cheng and Zhou, Yuhao and Bao, Fan and Chen, Jianfei and Li, Chongxuan and Zhu, Jun},
  journal={Advances in neural information processing systems},
  volume={35},
  pages={5775--5787},
  year={2022}
}

@article{song2020denoising,
  title={Denoising diffusion implicit models},
  author={Song, Jiaming and Meng, Chenlin and Ermon, Stefano},
  journal={arXiv preprint arXiv:2010.02502},
  year={2020}
}

@article{zhao2023unipc,
  title={Unipc: A unified predictor-corrector framework for fast sampling of diffusion models},
  author={Zhao, Wenliang and Bai, Lujia and Rao, Yongming and Zhou, Jie and Lu, Jiwen},
  journal={Advances in Neural Information Processing Systems},
  volume={36},
  pages={49842--49869},
  year={2023}
}

@article{liu2022pseudo,
  title={Pseudo numerical methods for diffusion models on manifolds},
  author={Liu, Luping and Ren, Yi and Lin, Zhijie and Zhao, Zhou},
  journal={arXiv preprint arXiv:2202.09778},
  year={2022}
}

@inproceedings{zhou2024fast,
  title={Fast ode-based sampling for diffusion models in around 5 steps},
  author={Zhou, Zhenyu and Chen, Defang and Wang, Can and Chen, Chun},
  booktitle={Proceedings of the IEEE/CVF Conference on Computer Vision and Pattern Recognition},
  pages={7777--7786},
  year={2024}
}

@article{chen2025geometric,
  title={Geometric regularity in deterministic sampling dynamics of diffusion-based generative models},
  author={Chen, Defang and Zhou, Zhenyu and Wang, Can and Lyu, Siwei},
  journal={Journal of Statistical Mechanics: Theory and Experiment},
  volume={2025},
  number={12},
  pages={124002},
  year={2025},
  publisher={IOP Publishing}
}

@inproceedings{xue2024accelerating,
  title={Accelerating diffusion sampling with optimized time steps},
  author={Xue, Shuchen and Liu, Zhaoqiang and Chen, Fei and Zhang, Shifeng and Hu, Tianyang and Xie, Enze and Li, Zhenguo},
  booktitle={Proceedings of the IEEE/CVF Conference on Computer Vision and Pattern Recognition},
  pages={8292--8301},
  year={2024}
}

@article{sabour2024align,
  title={Align your steps: Optimizing sampling schedules in diffusion models},
  author={Sabour, Amirmojtaba and Fidler, Sanja and Kreis, Karsten},
  journal={arXiv preprint arXiv:2404.14507},
  year={2024}
}

@inproceedings{ma2024deepcache,
  title={Deepcache: Accelerating diffusion models for free},
  author={Ma, Xinyin and Fang, Gongfan and Wang, Xinchao},
  booktitle={Proceedings of the IEEE/CVF conference on computer vision and pattern recognition},
  pages={15762--15772},
  year={2024}
}

@article{li2024faster,
  title={Faster diffusion: Rethinking the role of the encoder for diffusion model inference},
  author={Li, Senmao and Hu, Taihang and van de Weijer, Joost and Khan, Fahad S and Liu, Tao and Li, Linxuan and Yang, Shiqi and Wang, Yaxing and Cheng, Ming-Ming and Yang, Jian},
  journal={Advances in Neural Information Processing Systems},
  volume={37},
  pages={85203--85240},
  year={2024}
}

@inproceedings{yu2025ab,
  title={Ab-cache: Training-free acceleration of diffusion models via adams-bashforth cached feature reuse},
  author={Yu, Zichao and Zou, Zhen and Shao, Guojiang and Zhang, Chenwei and Xu, Shengze and Huang, Jie and Zhao, Feng and Cun, Xiaodong and Zhang, Wenyi},
  booktitle={Proceedings of the 33rd ACM International Conference on Multimedia},
  pages={10408--10417},
  year={2025}
}

@inproceedings{wimbauer2024cache,
  title={Cache me if you can: Accelerating diffusion models through block caching},
  author={Wimbauer, Felix and Wu, Bichen and Schoenfeld, Edgar and Dai, Xiaoliang and Hou, Ji and He, Zijian and Sanakoyeu, Artsiom and Zhang, Peizhao and Tsai, Sam and Kohler, Jonas and others},
  booktitle={Proceedings of the IEEE/CVF Conference on Computer Vision and Pattern Recognition},
  pages={6211--6220},
  year={2024}
}

@inproceedings{wan2025elucidating,
  title={Elucidating flow matching ODE dynamics via data geometry and denoisers},
  author={Wan, Zhengchao and Wang, Qingsong and Mishne, Gal and Wang, Yusu},
  booktitle={Forty-second International Conference on Machine Learning},
  year={2025}
}

@article{wang2024diffusion,
  title={Diffusion Sampling Correction via Approximately 10 Parameters},
  author={Wang, Guangyi and Peng, Wei and Li, Lijiang and Chen, Wenyu and Cai, Yuren and Su, Songzhi},
  journal={arXiv preprint arXiv:2411.06503},
  year={2024}
}

@book{oksendal2013stochastic,
  title={Stochastic differential equations: an introduction with applications},
  author={Oksendal, Bernt},
  year={2013},
  publisher={Springer Science \& Business Media}
}

@article{lipman2022flow,
  title={Flow matching for generative modeling},
  author={Lipman, Yaron and Chen, Ricky TQ and Ben-Hamu, Heli and Nickel, Maximilian and Le, Matt},
  journal={arXiv preprint arXiv:2210.02747},
  year={2022}
}

@article{liu2022flow,
  title={Flow straight and fast: Learning to generate and transfer data with rectified flow},
  author={Liu, Xingchao and Gong, Chengyue and Liu, Qiang},
  journal={arXiv preprint arXiv:2209.03003},
  year={2022}
}

@article{albergo2022building,
  title={Building normalizing flows with stochastic interpolants},
  author={Albergo, Michael S and Vanden-Eijnden, Eric},
  journal={arXiv preprint arXiv:2209.15571},
  year={2022}
}

@article{lu2025dpm,
  title={Dpm-solver++: Fast solver for guided sampling of diffusion probabilistic models},
  author={Lu, Cheng and Zhou, Yuhao and Bao, Fan and Chen, Jianfei and Li, Chongxuan and Zhu, Jun},
  journal={Machine Intelligence Research},
  volume={22},
  number={4},
  pages={730--751},
  year={2025},
  publisher={Springer}
}

@article{zheng2023dpm,
  title={Dpm-solver-v3: Improved diffusion ode solver with empirical model statistics},
  author={Zheng, Kaiwen and Lu, Cheng and Chen, Jianfei and Zhu, Jun},
  journal={Advances in Neural Information Processing Systems},
  volume={36},
  pages={55502--55542},
  year={2023}
}

@inproceedings{zhao2024dc,
  title={Dc-solver: Improving predictor-corrector diffusion sampler via dynamic compensation},
  author={Zhao, Wenliang and Wang, Haolin and Zhou, Jie and Lu, Jiwen},
  booktitle={European Conference on Computer Vision},
  pages={450--466},
  year={2024},
  organization={Springer}
}

@article{zhang2022fast,
  title={Fast sampling of diffusion models with exponential integrator},
  author={Zhang, Qinsheng and Chen, Yongxin},
  journal={arXiv preprint arXiv:2204.13902},
  year={2022}
}

@article{xu2023imagereward,
  title={Imagereward: Learning and evaluating human preferences for text-to-image generation},
  author={Xu, Jiazheng and Liu, Xiao and Wu, Yuchen and Tong, Yuxuan and Li, Qinkai and Ding, Ming and Tang, Jie and Dong, Yuxiao},
  journal={Advances in Neural Information Processing Systems},
  volume={36},
  pages={15903--15935},
  year={2023}
}

@inproceedings{radford2021learning,
  title={Learning transferable visual models from natural language supervision},
  author={Radford, Alec and Kim, Jong Wook and Hallacy, Chris and Ramesh, Aditya and Goh, Gabriel and Agarwal, Sandhini and Sastry, Girish and Askell, Amanda and Mishkin, Pamela and Clark, Jack and others},
  booktitle={International conference on machine learning},
  pages={8748--8763},
  year={2021},
  organization={PmLR}
}

@article{wu2023human,
  title={Human preference score v2: A solid benchmark for evaluating human preferences of text-to-image synthesis},
  author={Wu, Xiaoshi and Hao, Yiming and Sun, Keqiang and Chen, Yixiong and Zhu, Feng and Zhao, Rui and Li, Hongsheng},
  journal={arXiv preprint arXiv:2306.09341},
  year={2023}
}

@misc{von-platen-etal-2022-diffusers,
  author = {Patrick von Platen and Suraj Patil and Anton Lozhkov and Pedro Cuenca and Nathan Lambert and Kashif Rasul and Mishig Davaadorj and Thomas Wolf},
  title = {Diffusers: State-of-the-art diffusion models},
  year = {2022},
  publisher = {GitHub},
  journal = {GitHub repository},
  howpublished = {\url{https://github.com/huggingface/diffusers}}
}

@inproceedings{chen2024trajectory,
  title={On the Trajectory Regularity of ODE-based Diffusion Sampling},
  author={Chen, Defang and Zhou, Zhenyu and Wang, Can and Shen, Chunhua and Lyu, Siwei},
  booktitle={International Conference on Machine Learning},
  pages={7905--7934},
  year={2024}
}

@inproceedings{zhu2025accelerating,
  title={Accelerating Diffusion Sampling via Exploiting Local Transition Coherence},
  author={Zhu, Shangwen and Zhang, Han and Yang, Zhantao and Peng, Qianyu and Pu, Zhao and Wang, Huangji and Cheng, Fan},
  booktitle={Proceedings of the IEEE/CVF International Conference on Computer Vision},
  pages={18284--18293},
  year={2025}
}

@article{tan2025stork,
  title={STORK: Faster Diffusion And Flow Matching Sampling By Resolving Both Stiffness And Structure-Dependence},
  author={Tan, Zheng and Wang, Weizhen and Bertozzi, Andrea L and Ryu, Ernest K},
  journal={arXiv preprint arXiv:2505.24210},
  year={2025}
}

@article{xiang2025native,
  title={Native and compact structured latents for 3d generation},
  author={Xiang, Jianfeng and Chen, Xiaoxue and Xu, Sicheng and Wang, Ruicheng and Lv, Zelong and Deng, Yu and Zhu, Hongyuan and Dong, Yue and Zhao, Hao and Yuan, Nicholas Jing and others},
  journal={arXiv preprint arXiv:2512.14692},
  year={2025}
}

@inproceedings{downs2022google,
  title={Google scanned objects: A high-quality dataset of 3d scanned household items},
  author={Downs, Laura and Francis, Anthony and Koenig, Nate and Kinman, Brandon and Hickman, Ryan and Reymann, Krista and McHugh, Thomas B and Vanhoucke, Vincent},
  booktitle={2022 International Conference on Robotics and Automation (ICRA)},
  pages={2553--2560},
  year={2022},
  organization={Ieee}
}

@article{wang2023diffusion,
  title={Diffusion models generate images like painters: an analytical theory of outline first, details later},
  author={Wang, Binxu and Vastola, John J},
  journal={arXiv preprint arXiv:2303.02490},
  year={2023}
}

@article{hunyuan3d2025hunyuan3d,
  title={Hunyuan3d 2.1: From images to high-fidelity 3d assets with production-ready pbr material},
  author={Hunyuan3D, Team and Yang, Shuhui and Yang, Mingxin and Feng, Yifei and Huang, Xin and Zhang, Sheng and He, Zebin and Luo, Di and Liu, Haolin and Zhao, Yunfei and others},
  journal={arXiv preprint arXiv:2506.15442},
  year={2025}
}

\appendix

\clearpage
\section{Proofs for Section~\ref{sec:geometry}}
\label{app:trajectory_consistency_proofs}
This appendix section proves Theorems and Corollaries from Section~\ref{sec:geometry}. First, we provide an exact decomposition of the PF-ODE under a mild assumption. Next, we prove the deterministic pairwise alignment bound. Finally, we apply the Gaussian concentration theorem to get a high-probability bound.

\subsection{Notations}
Consider the variance-preserving forward diffusion situation:
\begin{equation}
d\mathbf{X}_t = -\frac12 \beta(t) \mathbf{X}_t\,dt + \sqrt{\beta(t)}\,d\mathbf{W}_t,
\qquad t\in[0,T],
\end{equation}
where \(\beta:[0,T]\to(0,\infty)\) is integrable. We define
\begin{equation}
\alpha_t := \exp\!\Big(-\frac12\int_0^t \beta(s)\,ds\Big),
\qquad
\nu_t^2 := \alpha_t^2 \sigma_0^2 + (1-\alpha_t^2).
\end{equation}
Suppose data distribution satisfies GMM assumption:
\begin{equation}
p_0(\mathbf{x}) = \sum_{m=1}^M \pi_m \mathcal N(\boldsymbol{\mu}_m,\sigma_0^2 \mathbf{I}_d),
\qquad
\pi_m>0,
\qquad
\sum_{m=1}^M \pi_m=1.
\end{equation}
To describe the layout of the mixed distribution center, we define the space formed by their means as
\begin{equation}
U:=\operatorname{span}\{\boldsymbol{\mu}_1,\dots,\boldsymbol{\mu}_M\},
\qquad
r:=\dim(U),
\qquad
R:=\max_{1\le m\le M}\|\boldsymbol{\mu}_m\|.
\end{equation}
We also let \(P\) to be the orthogonal projection onto \(U\), with \(Q:=\mathbf{I}-P\).

\subsection{Structural lemmas}
We now begin to derive the closed-form expressions of PF-ODE dynamics. We start our proofs by formulating the marginal probability density of the forward diffusion.

\begin{lemma}[Closed Form of Marginal Probability Function]
\label{lem:noised_marginal}
For every \(t\in[0,T]\), we have
\begin{equation}
p_t(\mathbf{x}) = \sum_{s=1}^S \pi_s \mathcal N(\alpha_t\boldsymbol{\mu}_s,\nu_t^2 \mathbf{I}_d).
\end{equation}
\end{lemma}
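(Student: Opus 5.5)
We will prove Lemma~\ref{lem:noised_marginal} directly from the explicit solution of the linear forward SDE. First, solving the variance-preserving SDE by the integrating factor $\alpha_t^{-1}$ gives the pathwise representation
\begin{equation}
\mathbf{X}_t = \alpha_t \mathbf{X}_0 + \int_0^t \frac{\alpha_t}{\alpha_s}\sqrt{\beta(s)}\,d\mathbf{W}_s ,
\end{equation}
where the stochastic integral is a centered Gaussian vector independent of $\mathbf{X}_0$ (the Brownian motion is independent of the initial condition). By the It\^o isometry its covariance is $\alpha_t^2\int_0^t \beta(s)\alpha_s^{-2}\,ds\,\mathbf{I}_d$. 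Since $\frac{d}{ds}\alpha_s^{-2}=\beta(s)\alpha_s^{-2}$, this integral equals $\alpha_t^{-2}-1$. The covariance therefore simplifies to $(1-\alpha_t^2)\mathbf{I}_d$. Integrability of $\beta$ ensures that $\alpha_s$ is absolutely continuous and bounded away from zero on $[0,T]$, so these manipulations are legitimate.

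Next, we condition on the latent mixture label. We write $\mathbf{X}_0 = \boldsymbol{\mu}_Z + \sigma_0 \boldsymbol{\xi}$, with $Z$ distributed according to $(\pi_s)$, $\boldsymbol{\xi}\sim\mathcal N(\mathbf{0},\mathbf{I}_d)$, and $Z$, $\boldsymbol{\xi}$, and the noise integral mutually independent. Conditionally on $Z=s$, the vector $\mathbf{X}_t$ is a sum of two independent Gaussians: $\alpha_t\boldsymbol{\mu}_s+\alpha_t\sigma_0\boldsymbol{\xi}$ and the noise integral. Hence $\mathbf{X}_t$ is Gaussian with mean $\alpha_t\boldsymbol{\mu}_s$ and covariance $(\alpha_t^2\sigma_0^2 + 1-\alpha_t^2)\mathbf{I}_d=\nu_t^2\mathbf{I}_d$. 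Marginalizing over $Z$ by the law of total probability yields $p_t(\mathbf{x})=\sum_s \pi_s\mathcal N(\mathbf{x};\alpha_t\boldsymbol{\mu}_s,\nu_t^2\mathbf{I}_d)$, as claimed.

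There is no genuine obstacle here. The only step needing care is the variance computation, that is, verifying the identity $\alpha_t^2\int_0^t\beta(s)\alpha_s^{-2}ds = 1-\alpha_t^2$ via the derivative of $\alpha_s^{-2}$. We should also note in passing that the appendix's indexing of components by $m$ and $M$ clashes with the trajectory index, so we will consistently use $s$ and $S$ as in the statement.
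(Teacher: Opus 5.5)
Your proof is correct and follows essentially the same route as the paper: condition on the mixture label, combine the component $\mathcal N(\boldsymbol{\mu}_s,\sigma_0^2\mathbf{I}_d)$ with the Gaussian transition kernel $\mathcal N(\alpha_t\mathbf{x}_0,(1-\alpha_t^2)\mathbf{I}_d)$ to obtain $\mathcal N(\alpha_t\boldsymbol{\mu}_s,\nu_t^2\mathbf{I}_d)$, and then average over $s$. The only difference is that you derive that transition kernel from the SDE (integrating factor plus It\^o isometry) and phrase the convolution as a sum of independent Gaussians, whereas the paper simply asserts the kernel, so your version is slightly more self-contained.
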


\begin{proof}
Suppose \(C\in\{1,\dots,S\}\) satisfy condition \(\mathbb P(C=s)=\pi_s\), and suppose the conditional probability density of the initial state is
\begin{equation}
    p(\mathbf{x}_0 \mid C=s) = \mathcal{N}(\mathbf{x}_0; \boldsymbol{\mu}_s, \sigma_0^2 \mathbf{I}_d).
\end{equation}
Because forward diffusion process defines the transition probability density:
\begin{equation}
    p(\mathbf{x}_t \mid \mathbf{x}_0) = \mathcal{N}(\mathbf{x}_t; \alpha_t \mathbf{x}_0, (1-\alpha_t^2)\mathbf{I}_d).
\end{equation}
Hence, conditional on \(C=s\), the probability density of \(\mathbf{x}_t\) is obtained by the following equation:
\begin{equation}
    \begin{aligned}
    p(\mathbf{x}_t \mid C=s) &= \int p(\mathbf{x}_t \mid \mathbf{x}_0) p(\mathbf{x}_0 \mid C=s) \, \mathrm{d}\mathbf{x}_0\\
    &= 
    \mathcal{N}\!\bigl(\mathbf{x}_t; \alpha_t\boldsymbol{\mu}_s, (\alpha_t^2\sigma_0^2 + 1-\alpha_t^2)\mathbf{I}_d\bigr) \\
    &= 
    \mathcal{N}(\mathbf{x}_t; \alpha_t\boldsymbol{\mu}_s, \nu_t^2 \mathbf{I}_d).
    \end{aligned}
\end{equation}
Averaging over \(s\) gives the claim.
\end{proof}

\begin{lemma}[Closed-form of score function]
\label{lem:score_formula}
Define
\begin{equation}
w_s(\mathbf{x},t):=
\frac{\pi_s \exp\!\Big(\frac{\alpha_t}{\nu_t^2}\langle \mathbf{x},\boldsymbol{\mu}_s\rangle
-\frac{\alpha_t^2}{2\nu_t^2}\|\boldsymbol{\mu}_s\|^2\Big)}
{\sum_{j=1}^S \pi_j \exp\!\Big(\frac{\alpha_t}{\nu_t^2}\langle \mathbf{x},\boldsymbol{\mu}_j\rangle
-\frac{\alpha_t^2}{2\nu_t^2}\|\boldsymbol{\mu}_j\|^2\Big)},
\end{equation}
and
\begin{equation}
\mathbf{m}_t(\mathbf{x}):=\sum_{s=1}^S w_s(\mathbf{x},t)\boldsymbol{\mu}_s.
\end{equation}
Then
\begin{equation}
\nabla \log p_t(\mathbf{x})
=
-\frac{1}{\nu_t^2}\bigl(\mathbf{x}-\alpha_t \mathbf{m}_t(\mathbf{x})\bigr).
\end{equation}
\end{lemma}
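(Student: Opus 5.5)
We compute the score directly from the closed form of the noised marginal in Lemma~\ref{lem:noised_marginal}. Write $\phi_s(\mathbf{x}):=\mathcal N(\mathbf{x};\alpha_t\boldsymbol{\mu}_s,\nu_t^2\mathbf{I}_d)$, so that $p_t=\sum_s\pi_s\phi_s$ and
\begin{equation}
\nabla\log p_t(\mathbf{x})=\frac{\sum_{s=1}^S \pi_s\nabla\phi_s(\mathbf{x})}{\sum_{j=1}^S\pi_j\phi_j(\mathbf{x})}.
\end{equation}
Each Gaussian satisfies $\nabla\phi_s(\mathbf{x})=-\nu_t^{-2}(\mathbf{x}-\alpha_t\boldsymbol{\mu}_s)\phi_s(\mathbf{x})$. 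Substituting this gives
\begin{equation}
\nabla\log p_t(\mathbf{x})=-\frac{1}{\nu_t^2}\sum_{s=1}^S \tilde w_s(\mathbf{x})\bigl(\mathbf{x}-\alpha_t\boldsymbol{\mu}_s\bigr),
\end{equation}
where $\tilde w_s:=\pi_s\phi_s/\sum_j\pi_j\phi_j$ are the posterior responsibilities. Since $\sum_s\tilde w_s=1$, this equals $-\nu_t^{-2}\bigl(\mathbf{x}-\alpha_t\sum_s\tilde w_s\boldsymbol{\mu}_s\bigr)$.

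It remains to show that $\tilde w_s=w_s(\mathbf{x},t)$ as defined in the statement. Expand the exponent of $\phi_s$:
\begin{equation}
-\frac{\|\mathbf{x}-\alpha_t\boldsymbol{\mu}_s\|^2}{2\nu_t^2}=-\frac{\|\mathbf{x}\|^2}{2\nu_t^2}+\frac{\alpha_t}{\nu_t^2}\langle\mathbf{x},\boldsymbol{\mu}_s\rangle-\frac{\alpha_t^2}{2\nu_t^2}\|\boldsymbol{\mu}_s\|^2 .
\end{equation}
The normalizing constant $(2\pi\nu_t^2)^{-d/2}$ and the factor $\exp(-\|\mathbf{x}\|^2/2\nu_t^2)$ do not depend on $s$. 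They therefore cancel between the numerator and denominator of $\tilde w_s$, which leaves exactly the softmax expression $w_s(\mathbf{x},t)$. Hence $\sum_s\tilde w_s\boldsymbol{\mu}_s=\mathbf{m}_t(\mathbf{x})$, and the claimed formula follows.

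No step here is a genuine obstacle; the computation is routine. The only points that need care are the following. First, differentiating under the sum is legitimate because each $\phi_s$ is smooth and $p_t>0$ everywhere, so the logarithm is well defined. Second, the $s$-independent factors must cancel cleanly, which relies on all components sharing the same covariance $\nu_t^2\mathbf{I}_d$. This shared spherical covariance is what makes the posterior weights depend on $\mathbf{x}$ only through the inner products $\langle\mathbf{x},\boldsymbol{\mu}_s\rangle$, that is, only through $P\mathbf{x}$. That observation is the structural fact used later in the decomposition of the PF-ODE.
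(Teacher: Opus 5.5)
Your proof is correct and follows the same route as the paper: differentiate the mixture density from Lemma~\ref{lem:noised_marginal} term by term, use $\nabla\phi_s=-\nu_t^{-2}(\mathbf{x}-\alpha_t\boldsymbol{\mu}_s)\phi_s$, and divide by $p_t$. You also check explicitly that the $s$-independent factors $(2\pi\nu_t^2)^{-d/2}$ and $\exp(-\|\mathbf{x}\|^2/2\nu_t^2)$ cancel, so the posterior responsibilities equal the softmax weights $w_s(\mathbf{x},t)$, which fills in a step the paper leaves implicit.
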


\begin{proof}
By Lemma~\ref{lem:noised_marginal},
\begin{equation}
p_t(\mathbf{x})=\sum_{s=1}^S \pi_s \phi_{\nu_t^2 \mathbf{I}_d}(\mathbf{x}-\alpha_t\boldsymbol{\mu}_s),
\end{equation}
where \(\phi_{\mathbf{\Sigma}}\) denotes the Gaussian density with covariance \(\mathbf{\Sigma}\). 
Therefore,
\begin{equation}
\nabla p_t(\mathbf{x})
=
-\frac{1}{\nu_t^2}
\sum_{s=1}^S \pi_s (\mathbf{x}-\alpha_t\boldsymbol{\mu}_s)\phi_{\nu_t^2 \mathbf{I}_d}(\mathbf{x}-\alpha_t\boldsymbol{\mu}_s).
\label{eq:logpt}
\end{equation}
Therefore, the closed-form of score function can be derived by dividing Eq.~\ref{eq:logpt} by \(p_t(\mathbf{x})\):
\begin{equation}
\nabla \log p_t(\mathbf{x})
=
-\frac{1}{\nu_t^2}
\Bigl(\mathbf{x}-\alpha_t\sum_{s=1}^S w_s(\mathbf{x},t)\boldsymbol{\mu}_s\Bigr)
=
-\frac{1}{\nu_t^2}\bigl(\mathbf{x}-\alpha_t \mathbf{m}_t(\mathbf{x})\bigr).
\end{equation}
\end{proof}

\begin{lemma}[Sufficiency of Mean Subspace]
\label{lem:subspace_dependence}
For every \(\mathbf{x}\in\mathbb R^d\) and \(t\in[0,T]\),
\begin{equation}
\label{eq:P_invariance}
w_s(\mathbf{x},t)=w_s(P\mathbf{x},t),
\qquad
\mathbf{m}_t(\mathbf{x})=\mathbf{m}_t(P\mathbf{x})\in U.
\end{equation}

Furthermore, let \(\mathbf{x}_{t\in[0,T]}\) be a PF-ODE trajectory. Its derivative with respect to \(t\) is given by
\begin{equation}
\label{eq:PF-ODE}
\frac{d\mathbf{x}_t}{dt}=a_t \mathbf{x}_t+b_t \mathbf{m}_t(P\mathbf{x}_t),
\end{equation}
where
\begin{equation}
a_t:=\frac12\beta(t)\Bigl(1-\nu_t^{-2}\Bigr),
\qquad
b_t:=\frac12\beta(t)\alpha_t\nu_t^{-2}.
\end{equation}
\end{lemma}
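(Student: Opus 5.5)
The proof has two independent parts: an algebraic invariance for the weights, and a substitution of the score formula of Lemma~\ref{lem:score_formula} into the PF-ODE.

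\emph{Invariance of the weights.} Every mean $\boldsymbol{\mu}_s$ lies in $U$. Since $P$ is the orthogonal projection onto $U$, $P$ is self-adjoint and $P\boldsymbol{\mu}_s=\boldsymbol{\mu}_s$. Hence
$\langle \mathbf{x},\boldsymbol{\mu}_s\rangle=\langle \mathbf{x},P\boldsymbol{\mu}_s\rangle=\langle P\mathbf{x},\boldsymbol{\mu}_s\rangle$ for every $s$.
In the definition of $w_s(\mathbf{x},t)$ in Lemma~\ref{lem:score_formula}, $\mathbf{x}$ enters only through these inner products, in both numerator and denominator. Replacing $\mathbf{x}$ by $P\mathbf{x}$ therefore leaves every exponent unchanged, which gives $w_s(\mathbf{x},t)=w_s(P\mathbf{x},t)$. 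Then $\mathbf{m}_t(\mathbf{x})=\sum_s w_s(\mathbf{x},t)\boldsymbol{\mu}_s=\mathbf{m}_t(P\mathbf{x})$. This is a linear combination of the $\boldsymbol{\mu}_s$, so it lies in $U$. This settles Eq.~\eqref{eq:P_invariance}.

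\emph{Form of the PF-ODE.} For the VP forward SDE we have $f(t)=-\tfrac12\beta(t)$ and $g(t)^2=\beta(t)$. The PF-ODE drift is
$f(t)\mathbf{x}-\tfrac12 g(t)^2\nabla\log p_t(\mathbf{x})$.
Substituting $\nabla\log p_t(\mathbf{x})=-\nu_t^{-2}(\mathbf{x}-\alpha_t\mathbf{m}_t(\mathbf{x}))$ from Lemma~\ref{lem:score_formula} and collecting terms gives
\begin{equation*}
-\tfrac12\beta(t)\bigl(1-\nu_t^{-2}\bigr)\mathbf{x}-\tfrac12\beta(t)\alpha_t\nu_t^{-2}\,\mathbf{m}_t(\mathbf{x})
= -a_t\mathbf{x}-b_t\mathbf{m}_t(\mathbf{x}).
\end{equation*}

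\emph{Main obstacle: the sign convention.} The computation above gives the stated coefficients $a_t,b_t$ with an overall minus sign in the forward-time derivative. So Eq.~\eqref{eq:PF-ODE} must be read as the velocity of the sampling trajectory as it evolves \emph{backward} in time, from $t=T$ towards $0$. This matches how $\mathbf{x}_t^{(m)}$ was introduced in Section~\ref{sec:geometry}. Formally, with the reverse-time variable $\tau=T-t$, one has $\mathrm{d}\mathbf{x}/\mathrm{d}\tau=a_t\mathbf{x}+b_t\mathbf{m}_t(\mathbf{x})$ exactly as displayed.

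I will state this convention explicitly in the proof. The only property used later is that $Q$ annihilates the $\mathbf{m}_t$ term, and that property holds under either sign convention.

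Finally, applying the invariance from the first part replaces $\mathbf{m}_t(\mathbf{x}_t)$ by $\mathbf{m}_t(P\mathbf{x}_t)$, which yields Eq.~\eqref{eq:PF-ODE}.
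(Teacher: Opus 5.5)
Your argument follows the paper's route: invariance of $\langle\mathbf{x},\boldsymbol{\mu}_s\rangle$ under $P$ for the weights, then substitution of Lemma~\ref{lem:score_formula} into the VP PF-ODE. Your sign check is also correct, and the discrepancy lies in the paper's own proof. Its second equality drops an overall minus sign, since $-\tfrac12\beta(t)\mathbf{x}_t-\tfrac12\beta(t)\nabla\log p_t(\mathbf{x}_t)=-\tfrac12\beta(t)\bigl[\mathbf{x}_t-\nu_t^{-2}\mathbf{x}_t+\alpha_t\nu_t^{-2}\mathbf{m}_t(\mathbf{x}_t)\bigr]$. So the displayed ODE holds only as a derivative in the reverse-time variable $\tau=T-t$, exactly as you say.

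Your side remark that only the $Q$-annihilation is used later is too quick. Lemma~\ref{lem:exact_decomposition} integrates the displayed equation as a forward-time ODE and obtains $\rho_t=\Phi(t,T)=\nu_T/\nu_t$, using $\tfrac{d}{dt}\log\nu_t=-a_t$. Your (correct) convention instead gives $\rho_t=\exp\bigl(\int_t^T a_v\,dv\bigr)=\nu_t/\nu_T$, hence $Q\mathbf{x}_0=(\sigma_0/\nu_T)\,Q\mathbf{x}_T$, as it must be. The structural conclusions survive under either sign: confinement to $U\oplus\operatorname{span}\{Q\mathbf{x}_T\}$, and an alignment bound with schedule-only constants. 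What does not survive unchanged is the explicit machinery. The factor $\rho_t$ and the kernel in the variation-of-constants formula must be redefined, and with them $A_T$ and $B_T$.
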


\begin{proof}
Note that the expression for \(w_s(\mathbf{x},t)\) only depends on \(\mathbf{x}\) through the inner products \(\langle \mathbf{x},\boldsymbol{\mu}_s\rangle\). Since for \(\boldsymbol{\mu}_s\in U\), we have 
\begin{equation}
\langle \mathbf{x},\boldsymbol{\mu}_s\rangle = \langle P\mathbf{x},\boldsymbol{\mu}_s\rangle,
\end{equation}
which leads to \(w_s(\mathbf{x},t)=w_s(P\mathbf{x},t)\). Furthermore, it is obvious that $\mathbf{m}_t(x) \in U$ because it is simply a sum of vectors $\mu_s \in U$:
\begin{equation}
\mathbf{m}_t(\mathbf{x})=\sum_{s=1}^S w_s(\mathbf{x},t)\boldsymbol{\mu}_s.
\end{equation}
In forward process, the PF-ODE is
\begin{equation}
\begin{aligned}
\frac{d\mathbf{x}_t}{dt}
&=
-\frac12\beta(t)\mathbf{x}_t
-\frac12\beta(t)\nabla\log p_t(\mathbf{x}_t)\\
&=
\frac12\beta(t)
\Bigl[
\mathbf{x}_t-\nu_t^{-2}\mathbf{x}_t+\alpha_t\nu_t^{-2}\mathbf{m}_t(\mathbf{x}_t)
\Bigr].
\end{aligned}
\end{equation}
Using Eq.~\ref{eq:P_invariance} immediately yields Eq.\ref{eq:PF-ODE}.
\end{proof}

\begin{lemma}[Exact \(U\oplus U^\perp\) decomposition]
\label{lem:exact_decomposition}
Let
\begin{equation}
\Phi(t_2,t_1):=\exp\!\Big(\int_{t_1}^{t_2} a_v\,dv\Big),
\qquad
\rho_t:=\Phi(t,T).
\end{equation}
Then we obtain the following equation by integrating from \(T\) to \(t\):
\begin{equation}
Q\mathbf{x}_t=\rho_t Q\mathbf{x}_T,
\end{equation}
and
\begin{equation}
P\mathbf{x}_t
=
\rho_t P\mathbf{x}_T
-
\int_t^T \Phi(t,\tau)b_\tau\,\mathbf{m}_\tau(P\mathbf{x}_\tau)\,d\tau.
\end{equation}
As a result,
\begin{equation}
\label{eq:Dt_formula}
Q\mathbf{D}_t=(\rho_t-1)Q\mathbf{x}_T.
\end{equation}
\end{lemma}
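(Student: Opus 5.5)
The plan is to project the PF-ODE of Lemma~\ref{lem:subspace_dependence} onto $U^\perp$ and onto $U$, and to integrate each projected equation as a linear ODE with an inhomogeneous term, using the variation-of-constants formula.

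\textbf{The $U^\perp$ component.} Apply $Q$ to Eq.~\ref{eq:PF-ODE}. By Lemma~\ref{lem:subspace_dependence}, $\mathbf{m}_t(P\mathbf{x}_t)\in U$, so $Q\mathbf{m}_t(P\mathbf{x}_t)=0$. Since $Q$ is a constant linear map, it commutes with $d/dt$, which gives
\[
\frac{d}{dt}Q\mathbf{x}_t=a_t\,Q\mathbf{x}_t .
\]
This is a scalar-coefficient linear ODE in the vector $Q\mathbf{x}_t$. Its unique solution with terminal value $Q\mathbf{x}_T$ at time $T$ is $Q\mathbf{x}_t=\Phi(t,T)Q\mathbf{x}_T=\rho_tQ\mathbf{x}_T$. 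To see this, note that $\partial_t\Phi(t,T)=a_t\Phi(t,T)$ and $\Phi(T,T)=1$. Equivalently, differentiating $\Phi(T,t)Q\mathbf{x}_t$ shows its derivative is zero. Integrability of $\beta$ makes $a_t$ integrable, so $\Phi$ is well defined and absolutely continuous.

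\textbf{The $U$ component.} Apply $P$ to Eq.~\ref{eq:PF-ODE}. Because $P\mathbf{m}=\mathbf{m}$ for $\mathbf{m}\in U$, we get
\[
\frac{d}{dt}P\mathbf{x}_t=a_tP\mathbf{x}_t+b_t\mathbf{m}_t(P\mathbf{x}_t).
\]
Multiply by the integrating factor $\Phi(T,t)$, i.e.\ consider $\mathbf{y}_t:=\Phi(T,t)P\mathbf{x}_t$. Then $\dot{\mathbf{y}}_t=\Phi(T,t)b_t\mathbf{m}_t(P\mathbf{x}_t)$. Integrate from $t$ to $T$ to obtain $\mathbf{y}_T-\mathbf{y}_t=\int_t^T\Phi(T,\tau)b_\tau\mathbf{m}_\tau(P\mathbf{x}_\tau)\,d\tau$. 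Multiplying through by $\Phi(t,T)$ and using the cocycle identity $\Phi(t,T)\Phi(T,\tau)=\Phi(t,\tau)$ gives the stated formula. The integrand is bounded, since $\|\mathbf{m}_\tau\|\le R$ as a convex combination of the means, so the integral is finite.

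\textbf{The displacement.} By definition $\mathbf{D}_t=\mathbf{x}_t-\mathbf{x}_T$. Applying $Q$ and substituting the first result gives $Q\mathbf{D}_t=(\rho_t-1)Q\mathbf{x}_T$.

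The main obstacle is justifying existence and uniqueness of the trajectory, so that these manipulations are legitimate. The vector field $\mathbf{x}\mapsto a_t\mathbf{x}+b_t\mathbf{m}_t(P\mathbf{x})$ is globally Lipschitz in $\mathbf{x}$: the softmax weights are smooth, and their gradient is bounded by $\alpha_t\nu_t^{-2}$ times a constant in $R$. It is also integrable in $t$. I would invoke Carathéodory's theorem to get a unique absolutely continuous solution, and then apply the product rule almost everywhere. The remaining steps are routine.
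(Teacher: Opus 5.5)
Your proposal is correct and follows the same route as the paper: apply $Q$ and $P$ to Eq.~\ref{eq:PF-ODE}, solve the homogeneous scalar linear ODE on $U^\perp$, use variation of constants on $U$, and subtract $Q\mathbf{x}_T$ to get the displacement formula. Your explicit integrating-factor computation (with the cocycle identity giving the sign of the integral) and the Carath\'eodory well-posedness remark are details the paper leaves implicit, not a different argument.
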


\begin{proof}
Apply \(Q\) to Eq.~\ref{eq:PF-ODE}. Since \(\mathbf{m}_t(P\mathbf{x}_t)\in U\), we obtain
\begin{equation}
Q\frac{d\mathbf{x}_t}{dt}=a_t Q\mathbf{x}_t,
\end{equation}
whose exact solution is
\begin{equation}
Q\mathbf{x}_t=\exp\!\Big(\int_T^t a_v\,dv\Big) Q\mathbf{x}_T = \rho_t Q\mathbf{x}_T.
\end{equation}
Similarly, for the projection $P$, we can obtain that
\begin{equation}
P\frac{d\mathbf{x}_t}{dt}=a_t P\mathbf{x}_t+b_t \mathbf{m}_t(P\mathbf{x}_t),
\end{equation}
and the variation-of-constants formula (integrating from \(T\) to \(t\)) leads to
\begin{equation}
P\mathbf{x}_t
=
\rho_t P\mathbf{x}_T
+
\int_T^t \Phi(t,\tau)b_\tau\,\mathbf{m}_\tau(P\mathbf{x}_\tau)\,d\tau.
\end{equation}
Subtracting \(Q\mathbf{x}_T\) immediately gives the Eq.~\ref{eq:Dt_formula}.
\end{proof}

\begin{lemma}[Mean-Subspace Displacement Bound]
\label{lem:mean_subspace_bound}
Define
\begin{equation}
A_T:=\sup_{0\le t\le T}|\rho_t-1|,
\qquad
B_T:=\sup_{0\le t\le T}\int_t^T \Phi(t,\tau)|b_\tau|\,d\tau.
\end{equation}
Then every trajectory satisfies
\begin{equation}
\sup_{0\le t\le T}\|P\mathbf{D}_t\|
\le
A_T\|P\mathbf{x}_T\| + B_T R.
\end{equation}
\end{lemma}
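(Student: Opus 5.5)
The plan is to project the exact decomposition of Lemma~\ref{lem:exact_decomposition} onto $U$ and bound its two terms separately. By definition $\mathbf{D}_t=\mathbf{x}_t-\mathbf{x}_T$. Since $P$ is linear and idempotent, the lemma gives
\begin{equation*}
P\mathbf{D}_t=(\rho_t-1)P\mathbf{x}_T-\int_t^T \Phi(t,\tau)b_\tau\,\mathbf{m}_\tau(P\mathbf{x}_\tau)\,d\tau .
\end{equation*}
The triangle inequality together with the triangle inequality for Bochner/Riemann integrals of vector-valued functions then yields
\begin{equation*}
\|P\mathbf{D}_t\|\le |\rho_t-1|\,\|P\mathbf{x}_T\|+\int_t^T \Phi(t,\tau)|b_\tau|\,\|\mathbf{m}_\tau(P\mathbf{x}_\tau)\|\,d\tau .
\end{equation*}
Here I use that $\Phi(t,\tau)>0$, being an exponential.

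The key pointwise estimate is $\|\mathbf{m}_\tau(\mathbf{y})\|\le R$ for every $\mathbf{y}$ and $\tau$. By Lemma~\ref{lem:score_formula}, $\mathbf{m}_\tau(\mathbf{y})=\sum_s w_s(\mathbf{y},\tau)\boldsymbol{\mu}_s$. The weights $w_s$ are softmax weights, so they are nonnegative and sum to one. Hence $\mathbf{m}_\tau(\mathbf{y})$ is a convex combination of the $\boldsymbol{\mu}_s$, and convexity of the norm gives $\|\mathbf{m}_\tau(\mathbf{y})\|\le\max_s\|\boldsymbol{\mu}_s\|=R$. Substituting this bound, the integral term is at most $R\int_t^T\Phi(t,\tau)|b_\tau|\,d\tau\le B_T R$. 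The first term is at most $A_T\|P\mathbf{x}_T\|$. Taking the supremum over $t\in[0,T]$ finishes the argument, because the right-hand side $A_T\|P\mathbf{x}_T\|+B_TR$ no longer depends on $t$.

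The only delicate point is well-posedness and measurability: the integrand must be integrable and the variation-of-constants formula must be valid. I would handle this by noting that $\mathbf{m}_\tau$ is bounded and smooth in $\mathbf{y}$. It is continuous in $\tau$ when $\beta$ is continuous; for merely integrable $\beta$ one uses Carathéodory solutions. In either case the trajectory exists and is absolutely continuous. The integral formula of Lemma~\ref{lem:exact_decomposition}, which is taken as given, then makes the bound rigorous. If $A_T$ or $B_T$ is infinite, the statement holds trivially, so no finiteness assumption is needed. I do not expect a real obstacle; the convex-combination bound on $\mathbf{m}_\tau$ is the substantive step.
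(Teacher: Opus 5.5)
Your proposal is correct and follows essentially the same route as the paper. Like the paper, you project the variation-of-constants formula of Lemma~\ref{lem:exact_decomposition} onto $U$, bound $\|\mathbf{m}_\tau(P\mathbf{x}_\tau)\|\le R$ because $\mathbf{m}_\tau$ is a convex combination of the means, apply the triangle inequality, and take the supremum over $t$; your remarks on $\Phi(t,\tau)>0$ and well-posedness only make explicit what the paper leaves implicit.
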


\begin{proof}
By Lemma~\ref{lem:exact_decomposition},
\begin{equation}
P\mathbf{D}_t
=
(\rho_t-1)P\mathbf{x}_T
-
\int_t^T \Phi(t,\tau)b_\tau\,\mathbf{m}_\tau(P\mathbf{x}_\tau)\,d\tau.
\end{equation}
Since \(\mathbf{m}_\tau(P\mathbf{x}_\tau)\) is a convex combination of \(\boldsymbol{\mu}_1,\dots,\boldsymbol{\mu}_M\), it's obvious that
\begin{equation}
\|\mathbf{m}_\tau(P\mathbf{x}_\tau)\|\le R.
\end{equation}
Hence by using the triangle inequality, we have
\begin{equation}
\|P\mathbf{D}_t\|
\le
|\rho_t-1|\|P\mathbf{x}_T\|
+
\int_t^T \Phi(t,\tau)|b_\tau|\,R\,d\tau.
\end{equation}
Taking the supremum over \(t\in[0,T]\) proves the lemma.
\end{proof}

\begin{lemma}[Minimum orthogonal alignment error]
\label{lem:best_orthogonal_alignment}
For any \(\mathbf{u},\mathbf{v}\in\mathbb R^m\),
\begin{equation}
\min_{\mathbf{H}\in O(m)} \|\mathbf{u}-\mathbf{H}\mathbf{v}\| = \bigl|\|\mathbf{u}\|-\|\mathbf{v}\|\bigr|.
\end{equation}
\end{lemma}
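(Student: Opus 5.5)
We prove the identity by establishing a lower bound valid for every orthogonal matrix and then exhibiting an orthogonal matrix that attains it.

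\emph{Lower bound.} Fix any $\mathbf{H}\in O(m)$. Orthogonal maps preserve norms, so $\|\mathbf{H}\mathbf{v}\|=\|\mathbf{v}\|$. The reverse triangle inequality then gives
\begin{equation}
\|\mathbf{u}-\mathbf{H}\mathbf{v}\|\ \ge\ \bigl|\|\mathbf{u}\|-\|\mathbf{H}\mathbf{v}\|\bigr| = \bigl|\|\mathbf{u}\|-\|\mathbf{v}\|\bigr|.
\end{equation}
Hence the infimum over $O(m)$ is at least $\bigl|\|\mathbf{u}\|-\|\mathbf{v}\|\bigr|$.

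\emph{Attainment.} It suffices to find $\mathbf{H}\in O(m)$ with $\mathbf{H}\mathbf{v}=\frac{\|\mathbf{v}\|}{\|\mathbf{u}\|}\mathbf{u}$, i.e.\ mapping $\mathbf{v}$ onto the ray through $\mathbf{u}$. Then $\mathbf{u}$ and $\mathbf{H}\mathbf{v}$ are nonnegative multiples of the same unit vector $\hat{\mathbf{u}}=\mathbf{u}/\|\mathbf{u}\|$, so
\begin{equation}
\|\mathbf{u}-\mathbf{H}\mathbf{v}\|=\bigl|\|\mathbf{u}\|-\|\mathbf{v}\|\bigr|\,\|\hat{\mathbf{u}}\|=\bigl|\|\mathbf{u}\|-\|\mathbf{v}\|\bigr|.
\end{equation}

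\emph{Constructing $\mathbf{H}$.} We split into cases.
\begin{itemize}
\item If $\mathbf{u},\mathbf{v}\neq\mathbf{0}$, set $\hat{\mathbf{u}}=\mathbf{u}/\|\mathbf{u}\|$ and $\hat{\mathbf{v}}=\mathbf{v}/\|\mathbf{v}\|$. If $\hat{\mathbf{u}}=\hat{\mathbf{v}}$, take $\mathbf{H}=\mathbf{I}$. Otherwise take the Householder reflection
\begin{equation}
\mathbf{H}=\mathbf{I}-2\mathbf{w}\mathbf{w}^\top,\qquad \mathbf{w}=\frac{\hat{\mathbf{v}}-\hat{\mathbf{u}}}{\|\hat{\mathbf{v}}-\hat{\mathbf{u}}\|}.
\end{equation}
It is orthogonal and symmetric. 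Since $\|\hat{\mathbf{u}}\|=\|\hat{\mathbf{v}}\|=1$, we have $\|\hat{\mathbf{v}}-\hat{\mathbf{u}}\|^2=2(1-\langle\hat{\mathbf{u}},\hat{\mathbf{v}}\rangle)$ and $\langle\hat{\mathbf{v}}-\hat{\mathbf{u}},\hat{\mathbf{v}}\rangle=1-\langle\hat{\mathbf{u}},\hat{\mathbf{v}}\rangle$, so $\mathbf{H}\hat{\mathbf{v}}=\hat{\mathbf{v}}-(\hat{\mathbf{v}}-\hat{\mathbf{u}})=\hat{\mathbf{u}}$. By linearity, $\mathbf{H}\mathbf{v}=\|\mathbf{v}\|\hat{\mathbf{u}}$.
\item If $\mathbf{v}=\mathbf{0}$, then every $\mathbf{H}$ gives $\|\mathbf{u}-\mathbf{H}\mathbf{v}\|=\|\mathbf{u}\|$, which equals the claimed value.
\item If $\mathbf{u}=\mathbf{0}$, then every $\mathbf{H}$ gives $\|\mathbf{H}\mathbf{v}\|=\|\mathbf{v}\|$, which again equals the claimed value.
\end{itemize}
In every case the lower bound is attained, so the infimum is a minimum and equals $\bigl|\|\mathbf{u}\|-\|\mathbf{v}\|\bigr|$.

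No step is a real obstacle. The only care needed is writing down an explicit orthogonal map sending one direction to another and handling the degenerate zero-vector cases, both done above. One could alternatively extend $\hat{\mathbf{v}}$ and $\hat{\mathbf{u}}$ to orthonormal bases and map one basis to the other.
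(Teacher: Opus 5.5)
Your proof is correct and follows the same route as the paper. The lower bound comes from the reverse triangle inequality together with norm preservation. Attainment uses an orthogonal map sending $\mathbf{v}/\|\mathbf{v}\|$ to $\mathbf{u}/\|\mathbf{u}\|$, and the zero-vector cases are trivial. The only difference is that you build this map explicitly as a Householder reflection (your computation of $\mathbf{H}\hat{\mathbf{v}}=\hat{\mathbf{u}}$ checks out), whereas the paper simply asserts that such an $\mathbf{H}$ exists.
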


\begin{proof}
For any \(\mathbf{H}\in O(m)\), the inequality
\begin{equation}
\|\mathbf{u}-\mathbf{H}\mathbf{v}\|\ge \bigl|\|\mathbf{u}\|-\|\mathbf{H}\mathbf{v}\|\bigr|=\bigl|\|\mathbf{u}\|-\|\mathbf{v}\|\bigr|
\end{equation}
always holds. Conversely, if \(\mathbf{u},\mathbf{v}\neq 0\), choose \(\mathbf{H}\in O(m)\) such that
\begin{equation}
\mathbf{H}\frac{\mathbf{v}}{\|\mathbf{v}\|}=\frac{\mathbf{u}}{\|\mathbf{u}\|}.
\end{equation}
Then \(\mathbf{H}\mathbf{v}=(\|\mathbf{v}\|/\|\mathbf{u}\|)\mathbf{u}\), so
\begin{equation}
\|\mathbf{u}-\mathbf{H}\mathbf{v}\|=\Bigl\|\mathbf{u}-\frac{\|\mathbf{v}\|}{\|\mathbf{u}\|}\mathbf{u}\Bigr\|=\bigl|\|\mathbf{u}\|-\|\mathbf{v}\|\bigr|.
\end{equation}
Finally, if \(\mathbf{u}=0\) or \(\mathbf{v}=0\), the result holds trivially.
\end{proof}

\subsection{A uniform high-probability bound}
Building upon the deterministic trajectory decomposition provided in the previous subsection, this part gives a uniform high-probability bound for pairwise trajectory alignment proof.
\begin{proposition}[Deterministic pairwise alignment]
\label{prop:deterministic_pairwise_alignment}
Suppose \(\mathbf{x}_{t \in [0,T]}^{(1)}\) and \(\mathbf{x}_{t \in [0,T]}^{(2)}\) are two exact PF-ODE trajectories with starting noise \(\mathbf{x}_T^{(1)}\) and \(\mathbf{x}_T^{(2)}\). Then there exists \(\mathbf{H}\in O(d)\) such that
\begin{equation}
\begin{aligned}
\sup_{0\le t\le T}
\|\mathbf{D}_t^{(1)}-\mathbf{H} \mathbf{D}_t^{(2)}\|
\le
A_T\bigl|&\|Q\mathbf{x}_T^{(1)}\|-\|Q\mathbf{x}_T^{(2)}\|\bigr| \\
&+
A_T\bigl(\|P\mathbf{x}_T^{(1)}\|+\|P\mathbf{x}_T^{(2)}\|\bigr)
+
2B_T R.
\end{aligned}
\end{equation}
Consequently,
\begin{equation}
\begin{aligned}
\max_{0\le n\le N}
\|\mathbf{D}_n^{(1)}-\mathbf{H} \mathbf{D}_n^{(2)}\|
\le
A_T\bigl|&\|Q\mathbf{x}_T^{(1)}\|-\|Q\mathbf{x}_T^{(2)}\|\bigr| \\
&+
A_T\bigl(\|P\mathbf{x}_T^{(1)}\|+\|P\mathbf{x}_T^{(2)}\|\bigr)
+
2B_T R.
\end{aligned}
\end{equation}
\end{proposition}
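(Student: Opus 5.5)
We split each displacement along $U\oplus U^\perp$ and choose a single orthogonal matrix $\mathbf{H}$ that fixes $U$ pointwise and acts only on $U^\perp$. Concretely, set $\mathbf{u}:=Q\mathbf{x}_T^{(1)}$ and $\mathbf{v}:=Q\mathbf{x}_T^{(2)}$, both in $U^\perp$. By Lemma~\ref{lem:best_orthogonal_alignment} applied inside $U^\perp$ (of dimension $d-r$), there is $\mathbf{H}_\perp\in O(U^\perp)$ with $\mathbf{H}_\perp\mathbf{v}=(\|\mathbf{v}\|/\|\mathbf{u}\|)\mathbf{u}$ when $\mathbf{u},\mathbf{v}\neq 0$, and an arbitrary orthogonal map otherwise. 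We then define $\mathbf{H}:=P+\mathbf{H}_\perp Q$. This is orthogonal on $\mathbb R^d$, commutes with $P$ and $Q$, satisfies $\mathbf{H}P=P$, and gives $\|\mathbf{u}-\mathbf{H}\mathbf{v}\|=\bigl|\|\mathbf{u}\|-\|\mathbf{v}\|\bigr|$. This $\mathbf{H}$ does not depend on $t$, which is what the supremum over $t$ requires.

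For each $t$, the triangle inequality gives
\[
\|\mathbf{D}_t^{(1)}-\mathbf{H}\mathbf{D}_t^{(2)}\|\le \|Q\mathbf{D}_t^{(1)}-\mathbf{H}Q\mathbf{D}_t^{(2)}\|+\|P\mathbf{D}_t^{(1)}\|+\|P\mathbf{D}_t^{(2)}\|,
\]
using $\mathbf{H}P\mathbf{D}_t^{(2)}=P\mathbf{D}_t^{(2)}$, $\mathbf{H}Q=Q\mathbf{H}$ and $\|\mathbf{H}\mathbf{z}\|=\|\mathbf{z}\|$. We handle the two parts separately.

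\textbf{Orthogonal part.} By Eq.~\ref{eq:Dt_formula} in Lemma~\ref{lem:exact_decomposition}, $Q\mathbf{D}_t^{(i)}=(\rho_t-1)Q\mathbf{x}_T^{(i)}$, and the factor $\rho_t$ is common to both trajectories because $a_t$ is sample-independent. Hence the first term equals $|\rho_t-1|\,\|\mathbf{u}-\mathbf{H}\mathbf{v}\|\le A_T\bigl|\|Q\mathbf{x}_T^{(1)}\|-\|Q\mathbf{x}_T^{(2)}\|\bigr|$.

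\textbf{Mean-subspace part.} Lemma~\ref{lem:mean_subspace_bound} bounds each of the remaining two terms by $A_T\|P\mathbf{x}_T^{(i)}\|+B_TR$.

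Summing the three bounds and taking $\sup_t$ gives the first claim. The discrete claim follows by restricting to the grid times $t_n$: since $\mathbf{D}_n=\mathbf{D}_{t_n}$, the maximum over $n$ is at most the supremum over $t$.

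The main obstacle is the $t$-uniformity of the alignment. We need one $\mathbf{H}$ that works for all times, and this is possible only because the $U^\perp$ dynamics is a scalar rescaling with a common factor $\rho_t$. We also have to check that $\mathbf{H}$ preserves $U$, so that the $P$-parts need no alignment and can be bounded crudely.

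A minor point concerns signs in Lemma~\ref{lem:exact_decomposition}: its stated $P$-formula and the variation-of-constants formula differ in the sign of the integral term. This does not matter here, because we bound that term in norm through $B_T$.
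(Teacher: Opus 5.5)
Your proof is correct and matches the paper's argument essentially step for step: the paper also builds $\mathbf{H}=\mathbf{H}_P\oplus\mathbf{H}_Q$ using Lemma~\ref{lem:best_orthogonal_alignment} on $U^\perp$, uses the common factor $\rho_t-1$ for the $Q$-part and Lemma~\ref{lem:mean_subspace_bound} for the $P$-parts, and differs from you only in allowing an arbitrary $\mathbf{H}_P\in O(U)$ where you take the identity. Your closing remark about a sign discrepancy is unfounded, since $\int_T^t=-\int_t^T$ makes the two expressions in Lemma~\ref{lem:exact_decomposition} identical, but it does not affect your argument.
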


\begin{proof}
Decompose \(\mathbb R^d=U\oplus U^\perp\). By Lemma~\ref{lem:best_orthogonal_alignment}, there exists \(\mathbf{H}_Q\in O(U^\perp)\) such that
\begin{equation}
\|Q\mathbf{x}_T^{(1)}-\mathbf{H}_Q Q\mathbf{x}_T^{(2)}\|
=
\bigl|\|Q\mathbf{x}_T^{(1)}\|-\|Q\mathbf{x}_T^{(2)}\|\bigr|.
\end{equation}
Choose any \(\mathbf{H}_P\in O(U)\) and define
\begin{equation}
\mathbf{H}:=\mathbf{H}_P\oplus \mathbf{H}_Q\in O(d).
\end{equation}

By Lemma~\ref{lem:exact_decomposition},
\begin{equation}
Q\mathbf{D}_t^{(k)}=(\rho_t-1)Q\mathbf{x}_T^{(k)}.
\end{equation}
Hence
\begin{equation}
Q\bigl(\mathbf{D}_t^{(1)}-\mathbf{H}\mathbf{D}_t^{(2)}\bigr)
=
(\rho_t-1)\bigl(Q\mathbf{x}_T^{(1)}-\mathbf{H}_Q Q\mathbf{x}_T^{(2)}\bigr),
\end{equation}
so
\begin{equation}
\sup_{0\le t\le T}
\|Q(\mathbf{D}_t^{(1)}-\mathbf{H}\mathbf{D}_t^{(2)})\|
\le
A_T\bigl|\|Q\mathbf{x}_T^{(1)}\|-\|Q\mathbf{x}_T^{(2)}\|\bigr|.
\end{equation}

For the \(U\)-component, Lemma~\ref{lem:mean_subspace_bound} gives
\begin{equation}
\begin{aligned}
\|P(\mathbf{D}_t^{(1)}-\mathbf{H}\mathbf{D}_t^{(2)})\|
&\le
\|P\mathbf{D}_t^{(1)}\|+\|P\mathbf{D}_t^{(2)}\| \\
&\le
A_T\|P\mathbf{x}_T^{(1)}\|+B_TR
+
A_T\|P\mathbf{x}_T^{(2)}\|+B_TR.
\end{aligned}
\end{equation}
Combining the \(U\) and \(U^\perp\) parts via the triangle inequality proves the continuous-time bound. The discrete-time bound follows from
\begin{equation}
\max_{0\le n\le N}\|\mathbf{D}_n^{(1)}-\mathbf{H}\mathbf{D}_n^{(2)}\|
\le
\sup_{0\le t\le T}\|\mathbf{D}_t^{(1)}-\mathbf{H}\mathbf{D}_t^{(2)}\|.
\end{equation}
\end{proof}

\begin{lemma}[Gaussian norm concentration]
\label{lem:gaussian_norm_concentration}
If \(\mathbf{g}\sim\mathcal N(0,\mathbf{I}_m)\), then for every \(t>0\),
\begin{equation}
\mathbb P\bigl(|\|\mathbf{g}\|-\mathbb E\|\mathbf{g}\||\ge t\bigr)\le 2e^{-t^2/2},
\qquad
\mathbb E\|\mathbf{g}\|\le \sqrt m.
\end{equation}
\end{lemma}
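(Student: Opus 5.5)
The statement to prove is Lemma~\ref{lem:gaussian_norm_concentration}, which has two parts. The first is a concentration inequality for the norm of a standard Gaussian vector. The second is a bound on its mean.

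I will treat the mean bound first, since it is elementary. Jensen's inequality applied to the concave square root gives
\begin{equation}
\mathbb E\|\mathbf{g}\| \le \sqrt{\mathbb E\|\mathbf{g}\|^2} = \sqrt{\textstyle\sum_{i=1}^m \mathbb E g_i^2} = \sqrt m .
\end{equation}

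For the tail bound, the plan is to invoke the Gaussian concentration inequality for Lipschitz functions (Tsirelson--Ibragimov--Sudakov / Borell). It states that if $f:\mathbb R^m\to\mathbb R$ is $L$-Lipschitz and $\mathbf g\sim\mathcal N(0,\mathbf I_m)$, then
\begin{equation}
\mathbb P\bigl(f(\mathbf g)-\mathbb E f(\mathbf g)\ge t\bigr)\le e^{-t^2/(2L^2)} .
\end{equation}
I would take $f(\mathbf x)=\|\mathbf x\|$. The reverse triangle inequality $\bigl|\|\mathbf x\|-\|\mathbf y\|\bigr|\le\|\mathbf x-\mathbf y\|$ shows $f$ is $1$-Lipschitz, so $L=1$. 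Applying the inequality to $f$ and to $-f$ (also $1$-Lipschitz) bounds the upper and lower tails by $e^{-t^2/2}$ each. A union bound then gives the factor $2$.

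The only real obstacle is whether to cite the Lipschitz concentration inequality or prove it. If a self-contained argument is wanted, I would use the Gaussian interpolation (Maurey--Pisier) argument for a smooth $1$-Lipschitz $f$. Take an independent copy $\mathbf g'$ and set $\mathbf g_\theta=\mathbf g\sin\theta+\mathbf g'\cos\theta$. Then $f(\mathbf g)-f(\mathbf g')=\int_0^{\pi/2}\langle\nabla f(\mathbf g_\theta),\mathbf g_\theta'\rangle\,d\theta$, where $\mathbf g_\theta'$ denotes the $\theta$-derivative. For each $\theta$ the pair $(\mathbf g_\theta,\mathbf g_\theta')$ is an independent standard Gaussian pair. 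Jensen's inequality over $\theta$ then bounds the moment generating function by $\mathbb E e^{\lambda f(\mathbf g)-\lambda\mathbb E f}\le e^{\pi^2\lambda^2/8}$.

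This route yields the slightly worse constant $2/\pi^2$ in the exponent instead of $1/2$. Getting the sharp constant $1/2$ requires either the Gaussian isoperimetric inequality or the Herbst argument applied to the Gaussian log-Sobolev inequality, which gives $\mathbb E e^{\lambda(f-\mathbb Ef)}\le e^{\lambda^2/2}$. The stated tail then follows from Chernoff's bound with $\lambda=t$. The norm is not smooth at the origin, so I would handle that by mollifying $f$ (or using $\sqrt{\|\mathbf x\|^2+\varepsilon^2}$, which is $1$-Lipschitz) and letting $\varepsilon\to0$ by dominated convergence. Since the paper only needs this lemma as a standard tool, I would in practice cite the log-Sobolev/Herbst result and keep only the Lipschitz verification explicit.
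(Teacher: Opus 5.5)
Your proposal is correct and takes essentially the same approach as the paper: it likewise observes that $\mathbf{g}\mapsto\|\mathbf{g}\|$ is $1$-Lipschitz and cites the standard Gaussian concentration inequality for the two-sided tail, and it obtains $\mathbb E\|\mathbf{g}\|\le\sqrt m$ from Jensen's inequality. Your optional self-contained derivation goes beyond the paper, which simply cites the result. In it, the Maurey--Pisier route gives only the weaker constant, while the Herbst/log-Sobolev route with smoothing recovers the stated $e^{-t^2/2}$.
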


\begin{proof}
The map \(\mathbf{g}\mapsto \|\mathbf{g}\|\) is \(1\)-Lipschitz, so the first inequality is the standard Gaussian concentration inequality. The second follows from Jensen's inequality:
\begin{equation}
\mathbb E\|\mathbf{g}\|
\le
\sqrt{\mathbb E\|\mathbf{g}\|^2}
=
\sqrt m.
\end{equation}
\end{proof}

\begin{theorem}[Explicit version of Theorem~\ref{thm:main_consistency}]
\label{thm:explicit_consistency}
Suppose that \(\mathbf{x}_T^{(1)},\dots,\mathbf{x}_T^{(M)}\) are i.i.d. samples from the exact noised marginal \(p_T\). Then for every \(\delta\in(0,1)\), with probability at least \(1-\delta\),
\begin{equation}
\begin{aligned}
\max_{1\le a,b\le M}\;
\inf_{\mathbf{H}\in O(d)}
\max_{0\le n\le N}
&\|\mathbf{D}_n^{(a)}-\mathbf{H} \mathbf{D}_n^{(b)}\| \\
&\le
2\sqrt{2}\,A_T\nu_T\sqrt{\log\frac{4M}{\delta}} \\
&+
2A_T\Bigl(\alpha_T R+\nu_T\Bigl[\sqrt r+\sqrt{2\log\frac{2M}{\delta}}\Bigr]\Bigr)
+
2B_T R.
\end{aligned}
\end{equation}
\end{theorem}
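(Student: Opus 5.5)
We plan to combine the deterministic bound of Proposition~\ref{prop:deterministic_pairwise_alignment} with Gaussian concentration (Lemma~\ref{lem:gaussian_norm_concentration}) and a union bound over the $M$ trajectories. By Proposition~\ref{prop:deterministic_pairwise_alignment}, for every pair $(a,b)$ the infimum over $\mathbf{H}\in O(d)$ is at most
\begin{equation*}
A_T\bigl|\|Q\mathbf{x}_T^{(a)}\|-\|Q\mathbf{x}_T^{(b)}\|\bigr|+A_T\bigl(\|P\mathbf{x}_T^{(a)}\|+\|P\mathbf{x}_T^{(b)}\|\bigr)+2B_TR .
\end{equation*}
So it suffices to control, uniformly over $m$, two random quantities. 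The first is the fluctuation of $\|Q\mathbf{x}_T^{(m)}\|$ around a common center. The second is $\|P\mathbf{x}_T^{(m)}\|$.

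By Lemma~\ref{lem:noised_marginal}, we represent $\mathbf{x}_T^{(m)}=\alpha_T\boldsymbol{\mu}_{C_m}+\nu_T\mathbf{g}^{(m)}$, where $C_m$ is the component label and $\mathbf{g}^{(m)}\sim\mathcal N(0,\mathbf{I}_d)$ is independent of $C_m$. Since $\boldsymbol{\mu}_{C_m}\in U$, we get
\begin{equation*}
Q\mathbf{x}_T^{(m)}=\nu_T Q\mathbf{g}^{(m)}, \qquad P\mathbf{x}_T^{(m)}=\alpha_T\boldsymbol{\mu}_{C_m}+\nu_T P\mathbf{g}^{(m)} .
\end{equation*}
By rotation invariance, $Q\mathbf{g}^{(m)}$ is a standard Gaussian in the $(d-r)$-dimensional space $U^\perp$, and $P\mathbf{g}^{(m)}$ is a standard Gaussian in the $r$-dimensional space $U$.

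\textbf{Orthogonal part.} Let $c:=\mathbb E\|Q\mathbf{g}\|$. We apply the first inequality of Lemma~\ref{lem:gaussian_norm_concentration} with $t=\sqrt{2\log(4M/\delta)}$. A union bound over $m$ then gives $\bigl|\|Q\mathbf{g}^{(m)}\|-c\bigr|\le\sqrt{2\log(4M/\delta)}$ for all $m$, with failure probability at most $M\cdot 2e^{-t^2/2}=\delta/2$. On this event, the triangle inequality through $c$ gives
\begin{equation*}
\bigl|\|Q\mathbf{x}_T^{(a)}\|-\|Q\mathbf{x}_T^{(b)}\|\bigr|\le 2\nu_T\sqrt{2\log(4M/\delta)}=2\sqrt2\,\nu_T\sqrt{\log(4M/\delta)} .
\end{equation*}
This is exactly the first term of the target.

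\textbf{Subspace part.} We bound $\|P\mathbf{x}_T^{(m)}\|\le\alpha_T R+\nu_T\|P\mathbf{g}^{(m)}\|$. We use the one-sided tail $\mathbb P(\|P\mathbf{g}\|\ge\sqrt r+t)\le 2e^{-t^2/2}$, which follows from both inequalities of Lemma~\ref{lem:gaussian_norm_concentration}. Taking $t=\sqrt{2\log(4M/\delta)}$ and a union bound over $m$ costs another $\delta/2$.

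The statement has $\sqrt{2\log(2M/\delta)}$ rather than $\log(4M/\delta)$ here, which requires the sharper one-sided bound $e^{-t^2/2}$. That bound holds because the Lipschitz-concentration inequality is two-sided with constant $1$ on each side, so each side individually has tail $e^{-t^2/2}$. With this, each subspace failure event has probability at most $\delta/(2M)$. Summing $\|P\mathbf{x}_T^{(a)}\|+\|P\mathbf{x}_T^{(b)}\|$ then yields the term
\begin{equation*}
2A_T\Bigl(\alpha_TR+\nu_T\bigl[\sqrt r+\sqrt{2\log(2M/\delta)}\bigr]\Bigr).
\end{equation*}

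On the intersection of the two events, which has probability at least $1-\delta$, we substitute both bounds into the deterministic inequality and take the maximum over $(a,b)$. The main obstacle is purely bookkeeping: allocating the $\delta$ budget so the logarithmic constants $4M/\delta$ and $2M/\delta$ match the stated form. In particular, the subspace tail must be handled one-sidedly.
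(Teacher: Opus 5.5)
Your proposal is correct and follows the paper's proof step for step: the same representation $\mathbf{x}_T^{(m)}=\alpha_T\boldsymbol{\mu}_{C_m}+\nu_T\mathbf{g}^{(m)}$, the same $\delta/2$ split between the $U^\perp$ and $U$ parts, and the same substitution into Proposition~\ref{prop:deterministic_pairwise_alignment}. Your remark that the $\sqrt{2\log(2M/\delta)}$ term needs the one-sided Gaussian tail $e^{-t^2/2}$ is accurate, and the paper's proof relies on it without saying so: Lemma~\ref{lem:gaussian_norm_concentration} as stated (with the factor $2$) would give failure probability $\delta$, not $\delta/2$, for the $U$-part at that threshold.
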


\begin{proof}
By Lemma~\ref{lem:noised_marginal}, the noised marginal at time \(T\) is
\begin{equation}
p_T(\mathbf{x})=\sum_{m=1}^M \pi_m \mathcal N(\alpha_T\boldsymbol{\mu}_m,\nu_T^2 \mathbf{I}_d).
\end{equation}
Hence each sample can be represented as
\begin{equation}
\mathbf{x}_T^{(k)}=\alpha_T\boldsymbol{\mu}_{C_k}+\nu_T \mathbf{g}^{(k)},
\end{equation}
where \(C_k\in\{1,\dots,M\}\), \(\mathbb P(C_k=m)=\pi_m\), and \(\mathbf{g}^{(k)}\sim \mathcal N(0,\mathbf{I}_d)\). Since \(\boldsymbol{\mu}_{C_k}\in U\),
\begin{equation}
Q\mathbf{x}_T^{(k)}=\nu_T Q\mathbf{g}^{(k)}\sim \mathcal N(0,\nu_T^2 \mathbf{I}_{d-r}),
\end{equation}
and
\begin{equation}
P\mathbf{x}_T^{(k)}=\alpha_T\boldsymbol{\mu}_{C_k}+\nu_T P\mathbf{g}^{(k)}.
\end{equation}

We first control the \(Q\)-component. By Lemma~\ref{lem:gaussian_norm_concentration} and a union bound, with probability at least \(1-\delta/2\),
\begin{equation}
\max_{1\le k\le M}
\Bigl|\|Q\mathbf{x}_T^{(k)}\|-\mathbb E\|Q\mathbf{x}_T^{(k)}\|\Bigr|
\le
\nu_T\sqrt{2\log\frac{4M}{\delta}}.
\end{equation}
Therefore, on this event,
\begin{equation}
\max_{1\le a,b\le M}
\bigl|\|Q\mathbf{x}_T^{(a)}\|-\|Q\mathbf{x}_T^{(b)}\|\bigr|
\le
2\nu_T\sqrt{2\log\frac{4M}{\delta}}.
\end{equation}

Next we control the \(P\)-component. Since \(P\mathbf{g}^{(k)}\sim \mathcal N(0,\mathbf{I}_r)\) on \(U\), Lemma~\ref{lem:gaussian_norm_concentration} and \(\|\boldsymbol{\mu}_{C_k}\|\le R\) imply that, with probability at least \(1-\delta/2\),
\begin{equation}
\max_{1\le k\le M}\|P\mathbf{x}_T^{(k)}\|
\le
\alpha_T R+\nu_T\Bigl(\sqrt r+\sqrt{2\log\frac{2M}{\delta}}\Bigr).
\end{equation}

On the intersection of these two events, whose probability is at least \(1-\delta\), Proposition~\ref{prop:deterministic_pairwise_alignment} gives, for every pair \((a,b)\), some orthogonal matrix \(\mathbf{H}_{a,b}\in O(d)\) such that
\begin{equation}
\begin{aligned}
\max_{0\le n\le N}
\|\mathbf{D}_n^{(a)}&-\mathbf{H}_{a,b}\mathbf{D}_n^{(b)}\| \\
&\le
A_T\cdot 2\nu_T\sqrt{2\log\frac{4M}{\delta}} \\
&+
A_T\cdot 2\Bigl(\alpha_T R+\nu_T\Bigl[\sqrt r+\sqrt{2\log\frac{2M}{\delta}}\Bigr]\Bigr)
+
2B_T R.
\end{aligned}
\end{equation}
Taking the maximum over \((a,b)\) and then the infimum over \(\mathbf{H}\in O(d)\) yields the stated inequality.
\end{proof}

\begin{proof}[Proof of Corollary~\ref{cor:main_consistency}]
Divide the bound in Theorem~\ref{thm:explicit_consistency} by \(\nu_T\sqrt d\). This gives
\begin{equation}
\begin{aligned}
\frac{1}{\nu_T\sqrt d}
\max_{1\le a,b\le n}\inf_{\mathbf{H}\in O(d)}&\max_{0\le \ell\le L}
\|\mathbf{D}_\ell^{(a)}-\mathbf{H} \mathbf{D}_\ell^{(b)}\| \\
&\le
2\sqrt{2}\,A_T\sqrt{\frac{\log(4M/\delta)}{d}}
+
2A_T\sqrt{\frac{r}{d}} \\
&+
2A_T\sqrt{\frac{2\log(2M/\delta)}{d}}
+
2A_T\frac{\alpha_T R}{\nu_T\sqrt d}
+
2B_T\frac{R}{\nu_T\sqrt d}.
\end{aligned}
\end{equation}
Since \(r+\log n=o(d)\) and \(R=o(\nu_T\sqrt d)\), the right-hand side converges to \(0\) for every fixed \(\delta\in(0,1)\). This proves the claim.
\end{proof}

\subsection{Proofs of the low-dimensionality Corollary}
\label{app:low_dimensionality_proofs}

We now show that under mild assumptions, each PF-ODE trajectory is confined to a low-dimensional subspace. Moreover, a simple alignment brings all trajectories into a single shared subspace.

\begin{corollary}[Explicit version of Corollary~\ref{cor:trajectory_low_dim}]
\label{cor:appendix_single_traj_low_dim}
Suppose \(\mathbf{x}_{t\in[0,T]}^{(k)}\) is an exact PF-ODE trajectory. Then
\begin{equation}
\mathbf{x}_t^{(k)}\in \mathcal S_k:=U\oplus \operatorname{span}\{Q\mathbf{x}_T^{(k)}\},
\qquad
\forall t\in[0,T].
\end{equation}
In particular,
\begin{equation}
\dim(\mathcal S_k)\le r+1.
\end{equation}
Hence, on any discrete reverse-time grid \(T=t_0>t_1>\cdots>t_L=0\), the trajectory matrix
\begin{equation}
\mathbf{X}^{(k)}\in \mathbb R^{(L+1)\times d},
\qquad
\text{row}_\ell(\mathbf{X}^{(k)})=(\mathbf{x}_\ell^{(k)})^\top,
\end{equation}
satisfies
\begin{equation}
\operatorname{rank}(\mathbf{X}^{(k)})\le r+1.
\end{equation}
Equivalently, the empirical trajectory covariance
\begin{equation}
\mathbf{\Sigma}_{\mathrm{traj}}^{(k)}
:=
\frac{1}{M+1}\sum_{m=0}^M
\bigl(\mathbf{x}_\ell^{(k)}-\bar{\mathbf{x}}^{(k)}\bigr)\bigl(\mathbf{x}_\ell^{(k)}-\bar{\mathbf{x}}^{(k)}\bigr)^\top
\end{equation}
has rank at most \(r+1\), and therefore
\begin{equation}
\lambda_j\!\left(\mathbf{\Sigma}_{\mathrm{traj}}^{(k)}\right)=0,
\qquad
j\ge r+2.
\end{equation}
\end{corollary}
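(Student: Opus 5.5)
The plan is to read everything off the exact decomposition in Lemma~\ref{lem:exact_decomposition}. Every state splits as $\mathbf{x}_t^{(k)} = P\mathbf{x}_t^{(k)} + Q\mathbf{x}_t^{(k)}$, and we handle the two pieces separately.

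The $P$-piece is immediate, since $P\mathbf{x}_t^{(k)}\in U$ by the definition of $P$.

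For the $Q$-piece, Lemma~\ref{lem:exact_decomposition} gives $Q\mathbf{x}_t^{(k)}=\rho_t Q\mathbf{x}_T^{(k)}$, a scalar multiple of the fixed vector $Q\mathbf{x}_T^{(k)}$. Hence $Q\mathbf{x}_t^{(k)}\in\operatorname{span}\{Q\mathbf{x}_T^{(k)}\}$.

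Summing the two pieces gives $\mathbf{x}_t^{(k)}\in U+\operatorname{span}\{Q\mathbf{x}_T^{(k)}\}=\mathcal S_k$ for all $t\in[0,T]$. The sum is direct because $Q\mathbf{x}_T^{(k)}\in U^\perp$. Its dimension is $r+1$ if $Q\mathbf{x}_T^{(k)}\neq 0$, and $r$ otherwise, so in either case $\dim\mathcal S_k\le r+1$.

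The matrix statements then follow by linear algebra.
\begin{itemize}
\item Every row of $\mathbf{X}^{(k)}$ is the transpose of a vector in $\mathcal S_k$. The row space is therefore contained in $\mathcal S_k$, which gives $\operatorname{rank}(\mathbf{X}^{(k)})\le r+1$.
\item For the covariance, the empirical mean $\bar{\mathbf{x}}^{(k)}$ is an average of vectors in the subspace $\mathcal S_k$, so it lies in $\mathcal S_k$. Each centered vector $\mathbf{x}_\ell^{(k)}-\bar{\mathbf{x}}^{(k)}$ therefore lies in $\mathcal S_k$ as well.
\item $\mathbf{\Sigma}_{\mathrm{traj}}^{(k)}$ is a sum of rank-one terms $\mathbf{v}\mathbf{v}^\top$ with $\mathbf{v}\in\mathcal S_k$. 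Its column space is thus contained in $\mathcal S_k$, and its rank is at most $r+1$.
\item $\mathbf{\Sigma}_{\mathrm{traj}}^{(k)}$ is positive semidefinite, so its eigenvalues are nonnegative and at most $r+1$ of them are nonzero. Ordered decreasingly, this gives $\lambda_j=0$ for $j\ge r+2$.
\end{itemize}

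No step is genuinely hard. The only care needed is notational. The covariance sum should be read over the grid index $\ell$, not $m$. The degenerate case $Q\mathbf{x}_T^{(k)}=0$, which occurs with probability zero, is still covered by the inequality $\dim\le r+1$. Finally, the argument requires $\rho_t$ to be finite; this holds because $\beta$ is integrable, which makes $a_t$ integrable.
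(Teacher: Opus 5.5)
Your proposal is correct and follows the paper's proof essentially step for step. You split $\mathbf{x}_t^{(k)}=P\mathbf{x}_t^{(k)}+Q\mathbf{x}_t^{(k)}$, use Lemma~\ref{lem:exact_decomposition} to get $Q\mathbf{x}_t^{(k)}=\rho_t Q\mathbf{x}_T^{(k)}$, and read off the rank bounds for $\mathbf{X}^{(k)}$ and $\mathbf{\Sigma}_{\mathrm{traj}}^{(k)}$ from the fact that every row and every centered vector lies in $\mathcal S_k$; the only small caveat is that your finiteness remark for $\rho_t$ also needs $\sigma_0>0$, so that $\nu_t^{-2}$ stays bounded, in addition to integrability of $\beta$.
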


\begin{proof}
By Lemma~\ref{lem:exact_decomposition},
\begin{equation}
Q\mathbf{x}_t^{(k)}=\rho_t Q\mathbf{x}_T^{(k)},
\end{equation}
which lies in the one-dimensional subspace \(\operatorname{span}\{Q\mathbf{x}_T^{(k)}\}\). On the other hand,
\begin{equation}
P\mathbf{x}_t^{(k)}\in U
\end{equation}
for all \(t\in[0,T]\). Therefore,
\begin{equation}
\mathbf{x}_t^{(k)}=P\mathbf{x}_t^{(k)}+Q\mathbf{x}_t^{(k)}\in U\oplus \operatorname{span}\{Q\mathbf{x}_T^{(k)}\}.
\end{equation}
This proves that the whole trajectory lies in the linear subspace \(\mathcal S_k\) with \(\dim(\mathcal S_k)\le r+1\). Since every row of \(\mathbf{X}^{(k)}\) belongs to \(\mathcal S_k\), we have
\begin{equation}
\operatorname{rank}(\mathbf{X}^{(k)})\le \dim(\mathcal S_k)\le r+1.
\end{equation}
Since $\mathcal S_k$ is a linear subspace, it contains the centered trajectory vectors. Thus, the image of the covariance matrix $\mathbf{\Sigma}_{\mathrm{traj}}^{(k)}$ lies within $\mathcal S_k$, implying
\begin{equation}
\operatorname{rank}\!\left(\mathbf{\Sigma}_{\mathrm{traj}}^{(k)}\right)\le r+1.
\end{equation}
This directly leads to the conclusion about the eigenvalues.
\end{proof}

\begin{corollary}[Explicit version of Corollary~\ref{cor:low_dimensional_trajectory_alignment}]
\label{cor:appendix_common_low_dim}
Fix any unit vector \(\mathbf{e}\in U^\perp\). For each trajectory \(k\), choose an orthogonal matrix \(\mathbf{H}_k\in O(d)\) such that
\begin{equation}
\mathbf{H}_k|_U = \mathbf{I},
\qquad
\mathbf{H}_k Q\mathbf{x}_T^{(k)}=\|Q\mathbf{x}_T^{(k)}\|\,\mathbf{e}.
\end{equation}
Define the aligned trajectory
\begin{equation}
\tilde{\mathbf{x}}_t^{(k)}:=\mathbf{H}_k\mathbf{x}_t^{(k)}.
\end{equation}
Then
\begin{equation}
\tilde{\mathbf{x}}_t^{(k)}\in \widetilde{\mathcal S}:=U\oplus \operatorname{span}\{\mathbf{e}\},
\qquad
\forall k,\ \forall t\in[0,T].
\end{equation}
In particular,
\begin{equation}
\dim(\widetilde{\mathcal S})\le r+1.
\end{equation}
Hence, if \(\widetilde{\mathbf{X}}\in\mathbb R^{n(L+1)\times d}\) denotes the pooled aligned trajectory matrix obtained by stacking all aligned trajectory states, then
\begin{equation}
\operatorname{rank}(\widetilde{\mathbf{X}})\le r+1.
\end{equation}
Equivalently, the pooled covariance of the aligned trajectories has rank at most \(r+1\).
\end{corollary}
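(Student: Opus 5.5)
The plan is to build each $\mathbf{H}_k$ explicitly and then apply Corollary~\ref{cor:appendix_single_traj_low_dim}, i.e.\ the decomposition in Lemma~\ref{lem:exact_decomposition}. Since $U^\perp$ is a real inner-product space of dimension $d-r\ge 1$ (the existence of the unit vector $\mathbf{e}$ presupposes this), Lemma~\ref{lem:best_orthogonal_alignment} applied inside $U^\perp$ gives an $\mathbf{H}_Q^{(k)}\in O(U^\perp)$ with $\mathbf{H}_Q^{(k)}Q\mathbf{x}_T^{(k)}=\|Q\mathbf{x}_T^{(k)}\|\,\mathbf{e}$.

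One way to obtain this map concretely is a Householder reflection. Let $\mathbf{v}=Q\mathbf{x}_T^{(k)}$. If $\mathbf{v}=\|\mathbf{v}\|\mathbf{e}$ (this includes the case $\mathbf{v}=0$), take the identity. Otherwise reflect across the hyperplane orthogonal to $\mathbf{v}-\|\mathbf{v}\|\mathbf{e}$. This vector lies in $U^\perp$, so the reflection fixes $U$ pointwise and maps $U^\perp$ to itself.

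Set $\mathbf{H}_k:=\mathbf{I}_U\oplus\mathbf{H}_Q^{(k)}$. This matrix is orthogonal, restricts to the identity on $U$, and commutes with $P$ and $Q$.

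Next we track an aligned state. Write $\mathbf{x}_t^{(k)}=P\mathbf{x}_t^{(k)}+Q\mathbf{x}_t^{(k)}$. Lemma~\ref{lem:exact_decomposition} gives $Q\mathbf{x}_t^{(k)}=\rho_t Q\mathbf{x}_T^{(k)}$. Hence
\begin{equation}
\mathbf{H}_k\mathbf{x}_t^{(k)}=P\mathbf{x}_t^{(k)}+\rho_t\|Q\mathbf{x}_T^{(k)}\|\,\mathbf{e}.
\end{equation}
The first term lies in $U$ and the second in $\operatorname{span}\{\mathbf{e}\}$, so $\tilde{\mathbf{x}}_t^{(k)}\in\widetilde{\mathcal S}$ for every $k$ and $t$. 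The sum $U\oplus\operatorname{span}\{\mathbf{e}\}$ is direct because $\mathbf{e}\perp U$, so $\dim\widetilde{\mathcal S}=r+1$.

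For the rank statement, every row of the pooled matrix $\widetilde{\mathbf{X}}$ lies in $\widetilde{\mathcal S}$, so $\operatorname{rank}(\widetilde{\mathbf{X}})\le r+1$. For the pooled covariance, subtract the pooled mean, which also lies in $\widetilde{\mathcal S}$. The centered rows then stay in this linear subspace, so the covariance has range inside $\widetilde{\mathcal S}$ and rank at most $r+1$, exactly as in Corollary~\ref{cor:appendix_single_traj_low_dim}.

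The argument has no real obstacle. The one point needing care is that $\mathbf{H}_k$ must act as the identity on $U$ while moving $Q\mathbf{x}_T^{(k)}$. This is precisely why the map is built on $U^\perp$ alone, where the degenerate case is handled above, rather than by applying Lemma~\ref{lem:best_orthogonal_alignment} in all of $\mathbb R^d$.
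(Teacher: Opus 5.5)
Your proof is correct and follows the paper's argument: the $U$-component is left fixed because $\mathbf{H}_k|_U=\mathbf{I}$, and the $U^\perp$-component becomes $\rho_t\|Q\mathbf{x}_T^{(k)}\|\,\mathbf{e}$ by Lemma~\ref{lem:exact_decomposition} and linearity. Your Householder construction also justifies the existence of $\mathbf{H}_k$, which the paper takes for granted. Since your final computation uses only the two defining properties of $\mathbf{H}_k$, it covers any admissible choice, not just the one you built.
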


\begin{proof}
By construction, \(\mathbf{H}_k\) acts as the identity on \(U\). Therefore, since \(P\mathbf{x}_t^{(k)}\in U\),
\begin{equation}
\mathbf{H}_k P\mathbf{x}_t^{(k)}=P\mathbf{x}_t^{(k)}\in U.
\end{equation}
Moreover, using Lemma~\ref{lem:exact_decomposition},
\begin{equation}
Q\mathbf{x}_t^{(k)}=\rho_t Q\mathbf{x}_T^{(k)},
\end{equation}
and thus
\begin{equation}
\mathbf{H}_k Q\mathbf{x}_t^{(k)}=\rho_t \mathbf{H}_k Q\mathbf{x}_T^{(k)}=\rho_t \|Q\mathbf{x}_T^{(k)}\|\,\mathbf{e} \in \operatorname{span}\{\mathbf{e}\}.
\end{equation}
Combining the \(U\)-part and the \(U^\perp\)-part gives
\begin{equation}
\tilde{\mathbf{x}}_t^{(k)}=\mathbf{H}_k\mathbf{x}_t^{(k)}\in U\oplus \operatorname{span}\{\mathbf{e}\}=\widetilde{\mathcal S}.
\end{equation}
Therefore all aligned trajectories share a single subspace $\widetilde{\mathcal S}$ of dimension at most $r+1$, which immediately yields the rank results.
\end{proof}

\begin{remark}[Connection to least-squares reconstruction]
\label{rem:ls_connection}
The above corollaries provide a direct explanation for the least-squares residual observed in practice. Specifically, every state $\mathbf{x}^{(k)}_n$ on a given trajectory $k$ belongs to the same linear subspace \(\mathcal S_k\) with
\begin{equation}
\dim(\mathcal S_k)\le r+1.
\end{equation}
Therefore, if a history buffer
\begin{equation}
\{\mathbf{x}_{n-B}^{(k)},\mathbf{x}_{n-B+1}^{(k)},\dots,\mathbf{x}_{n-1}^{(k)}\}
\end{equation}
spans \(\mathcal S_k\), then the least-squares reconstruction error of \(\mathbf{x}_n^{(k)}\) from this buffer is exactly zero:
\begin{equation}
\min_{\mathbf{c}\in\mathbb R^B}
\left\|
\mathbf{x}_n^{(k)}-\sum_{j=1}^{B}c_j \mathbf{x}_{n-j}^{(k)}
\right\|=0.
\end{equation}
Thus, in the exact spherical GMM setting, a sufficiently rich short buffer already determines the future point by linear regression. In realistic diffusion models we do not expect this identity to hold exactly, but small least-squares residuals strongly suggest that the denoising dynamics remain approximately low-dimensional.
\end{remark}

\begin{table}[t]
\centering
\caption{LEAP combined with different base solvers in SD3.5-Medium. All results use the same 15-step schedule and $r=5$ in LEAP's setting.}
\label{tab:ablation_solver}
\resizebox{\columnwidth}{!}{
\begin{tabular}{l|ccc}
\toprule
Solver + Method  & HPSv2$\uparrow$ & ImageReward$\uparrow$ & CLIPScore$\uparrow$\\
\midrule
Euler   & 27.4725  &   0.8416    &    33.4644   \\
\textbf{LEAP }(Euler) & \textbf{28.1826}  &   \textbf{0.9522}     &   \textbf{33.6817}    \\
DPM++            & 27.8642   &  0.8438     &   33.4704   \\
\textbf{LEAP} (DPM++)     & \textbf{28.2152}  & \textbf{0.9806}     & \textbf{33.5617}  \\
iPNDM         & 27.9220   & 0.8770   &  33.4416  \\
\textbf{LEAP} (iPNDM)             & \textbf{28.2204}  &  \textbf{0.9642}   &  \textbf{33.6601} \\
UniPC         & 27.8680  &   0.8788     &   33.4703   \\
\textbf{LEAP} (UniPC)             & \textbf{28.2309}  & \textbf{0.9800}   & \textbf{33.6077}  \\
\bottomrule
\end{tabular}}
\end{table}

\section{Experimental Results}
\label{app:experimental_results}
\subsection{Ablation Study}
\subsubsection{Compatibility with Different Base Solvers.} To further evaluate the robustness and generalization ability of LEAP, we conduct an additional ablation experiment by integrating LEAP with various widely used ODE solvers, including Euler, DPM++, iPNDM, and UniPC. This experiment is conducted on SD3.5 with NFE=$15$ and $r=5$. As shown in Table~\ref{tab:ablation_solver}, LEAP improves human preference and text-image alignment metrics (HPSv2, ImageReward, and CLIPScore) across all tested baseline solvers.

\begin{table}[t]
  \centering
  \caption{Ablation on the prediction-to-correction (P:C) schedule on
  SD3.5-Medium.}
  \label{tab:pc_schedule}
  \resizebox{\columnwidth}{!}{
  \begin{tabular}{ccccc}
    \toprule
    NFE & P:C & HPSv2$\uparrow$ & ImageReward$\uparrow$ & CLIPScore$\uparrow$ \\
    \midrule

    \multirow{2}{*}{10}
      & $1{:}1$ & \textbf{26.8419} & \textbf{0.7692} & \textbf{33.2982} \\
      & $1{:}2$ & 26.3182 & 0.7224 & 33.1515 \\

    \midrule
    \multirow{2}{*}{15}
      & $1{:}1$ & \textbf{28.1826} & \textbf{0.9522} & 33.6817 \\
      & $1{:}2$ & 27.9594 & 0.9004 & \textbf{33.7109} \\

    \midrule
    \multirow{2}{*}{20}
      & $1{:}1$ & \textbf{28.5727} & \textbf{0.9701} & 33.7192 \\
      & $1{:}2$ & 28.3575 & 0.9499 & \textbf{33.8125} \\

    \bottomrule
  \end{tabular}}
\end{table}

\subsubsection{Predict-correct schedule.} We further test the ratio between prediction and correction steps. Consecutive linear predictions are theoretically degradate to our current implementation as the same linear subspace is used. We thus add results regarding 2-step consecutive corrections below, which are inferior to our default 1:1 setting, as shown in Table~\ref{tab:pc_schedule}.

\subsection{Robustness, Transferability, and Generalization}

\subsubsection{Prompt-distribution robustness.}
The coefficient bank is constructed using reference prompts sampled from DrawBench, while evaluation is conducted on four prompt categories from the HPSv2 benchmark. As shown in Table~\ref{tab:prompt_distribution}, LEAP improves Euler across every category for HPSv2, ImageReward, and CLIPScore. These results indicate that LEAP remains effective under a prompt distribution shift.

\begin{table}[ht]
  \centering
  \caption{Ablation study on prompt distributions (NFE$=15$). Reference
  prompts are sampled from DrawBench, while evaluation prompts are sampled
  from the HPSv2 benchmark.}
  \label{tab:prompt_distribution}
  \resizebox{\columnwidth}{!}{%
  \begin{tabular}{llcccc}
    \toprule
    Metric & Method & Anime & ConceptArt & Paintings & Photo \\
    \midrule
    HPSv2
      & Euler & 30.14 & 28.63 & 28.99 & 26.56 \\
     & LEAP (Euler) & \textbf{31.22} & \textbf{30.15} & \textbf{30.14} & \textbf{27.37} \\
    \midrule
    ImageReward
      & Euler & 1.10 & 1.02 & 1.06 & 0.80 \\
      & LEAP (Euler) & \textbf{1.18} & \textbf{1.15} & \textbf{1.16} & \textbf{0.96} \\
    \midrule
    CLIPScore
      & Euler & 35.18 & 33.61 & 33.41 & 32.28 \\
      & LEAP (Euler) & \textbf{35.19} & \textbf{33.96} & \textbf{33.69} & \textbf{32.79} \\
    \bottomrule
  \end{tabular}}
\end{table}

\subsubsection{Coefficient transferability.}
We evaluate whether learned coefficients can be reused beyond the solver, model, or sampling schedule from which it was constructed. Specifically, we study cross-model, cross-solver, and cross-step transfer. As reported in Table~\ref{tab:coefficient_transfer}, every transferred configuration improves its corresponding baseline on all three metrics.

\begin{table}[!ht]
  \centering
  \caption{Transferability of the coefficient bank. Cross-model and
  cross-solver results use 10 NFE; cross-step results use 15 NFE.}
  \label{tab:coefficient_transfer}
  \small
  \setlength{\tabcolsep}{3.3pt}
  \resizebox{\columnwidth}{!}{%
  \begin{tabular}{llccc}
    \toprule
    Transfer setting & Method & HPSv2$\uparrow$ & ImageReward$\uparrow$ & CLIPScore$\uparrow$ \\
    \midrule
    Cross-model
      & Euler & 25.89 & 0.67 & 32.79 \\
      & LEAP (FLUX coeffs.) & \textbf{26.97} & \textbf{0.75} & \textbf{33.26} \\
    \midrule
    Cross-solver
      & DPM++ & 26.46 & 0.72 & 32.90 \\
      & LEAP (Euler coeffs.) & \textbf{27.01} & \textbf{0.80} & \textbf{33.48} \\
    \midrule
    Cross-step
      & Euler & 27.47 & 0.84 & 33.46 \\
      & LEAP (10-NFE coeffs.) & \textbf{28.17} & \textbf{0.91} & \textbf{33.69} \\
    \bottomrule
  \end{tabular}}
\end{table}

\subsection{Additional Qalitative Results}
We first provide additional qualitative results for both text-image task and image-3D task under the same setting in Sec.4.1. For text-image task, the qualitative results are shown in Fig.~\ref{fig:appendix_flux}. For image-3D task, the qualitative results are shown in Fig.~\ref{fig:3D_appendix_results}.

\subsection{Experimental Details}
\subsubsection{Hyperparameters}
In this subsection, we discuss our experiment setup in greater detail. In Sec.~\ref{sec:experiments}, we evaluate the proposed LEAP on three diffusion models: SD3.5-Medium~\cite{esser2024scaling} and FLUX.1-Dev~\cite{flux1dev} for text-image task and TRELLIS.2~\cite{xiang2025native} for image-3D task.

We test LEAP against previous solvers in text-to-image tasks, including DPM++~\cite{lu2025dpm}, STORK~\cite{tan2025stork}, and LTC-Accel~\cite{zhu2025accelerating}. In our evaluation, DPM++ is configured to its second-order version, STORK utilizes $k=4$, $n=1$, and $s=9$, while LTC-Accel adopts $r=2$ and $\tau=0.1$ for both quantitative and qualitative assessments. We use a fixed window size of $r=5$ and set the ridge regularization coefficient to $\lambda=0.0001$ as the default experiments settings of LEAP.

In the context of image-to-3D tasks, we benchmark our approach against the Euler baseline. For these experiments, we maintain the exact same configuration as mentioned in the text-to-image tasks described above.

\subsubsection{Prompts}
In the text-to-image qualitative experiments, we adopt the following text prompts to generate images:
\begin{itemize}
    \item "\textit{A close-up of a futuristic barista robot pouring latte art in the shape of a perfect hexagon. The robot's metallic arm has the word 'SPEED' etched into it. High contrast, cinematic lighting, photorealistic.}"
    \item "\textit{A transparent glass teapot filled with blooming jasmine flowers. Steam is rising, and sunlight is refracting through the water and glass, casting complex caustics onto a white tablecloth.}",
    \item "\textit{A swarm of tiny, glowing robotic drones weaving an intricate, glowinglaser web between two giant, ancient redwood trees in a foggy forest. The web perfectly forms the exact shape of a rigid, symmetrical pentagram. Volumetric fog.}
    "
\end{itemize}

\begin{figure*}[h] 
  \centering
  \begin{subfigure}{0.32\linewidth}
    \centering
    \includegraphics[width=\linewidth]{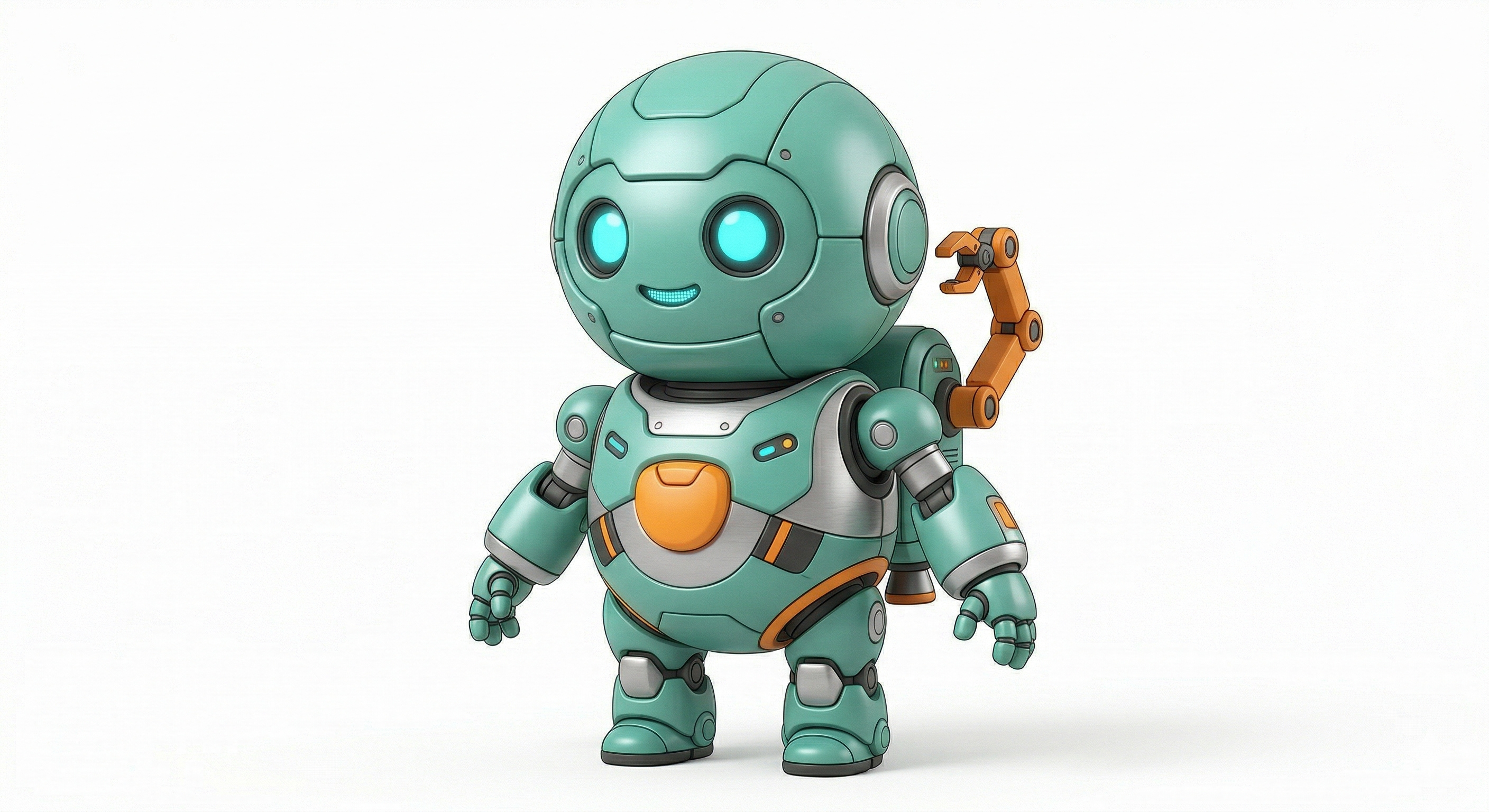} 
  \end{subfigure}\hfill
  \begin{subfigure}{0.32\linewidth}
    \centering
    \includegraphics[width=\linewidth]{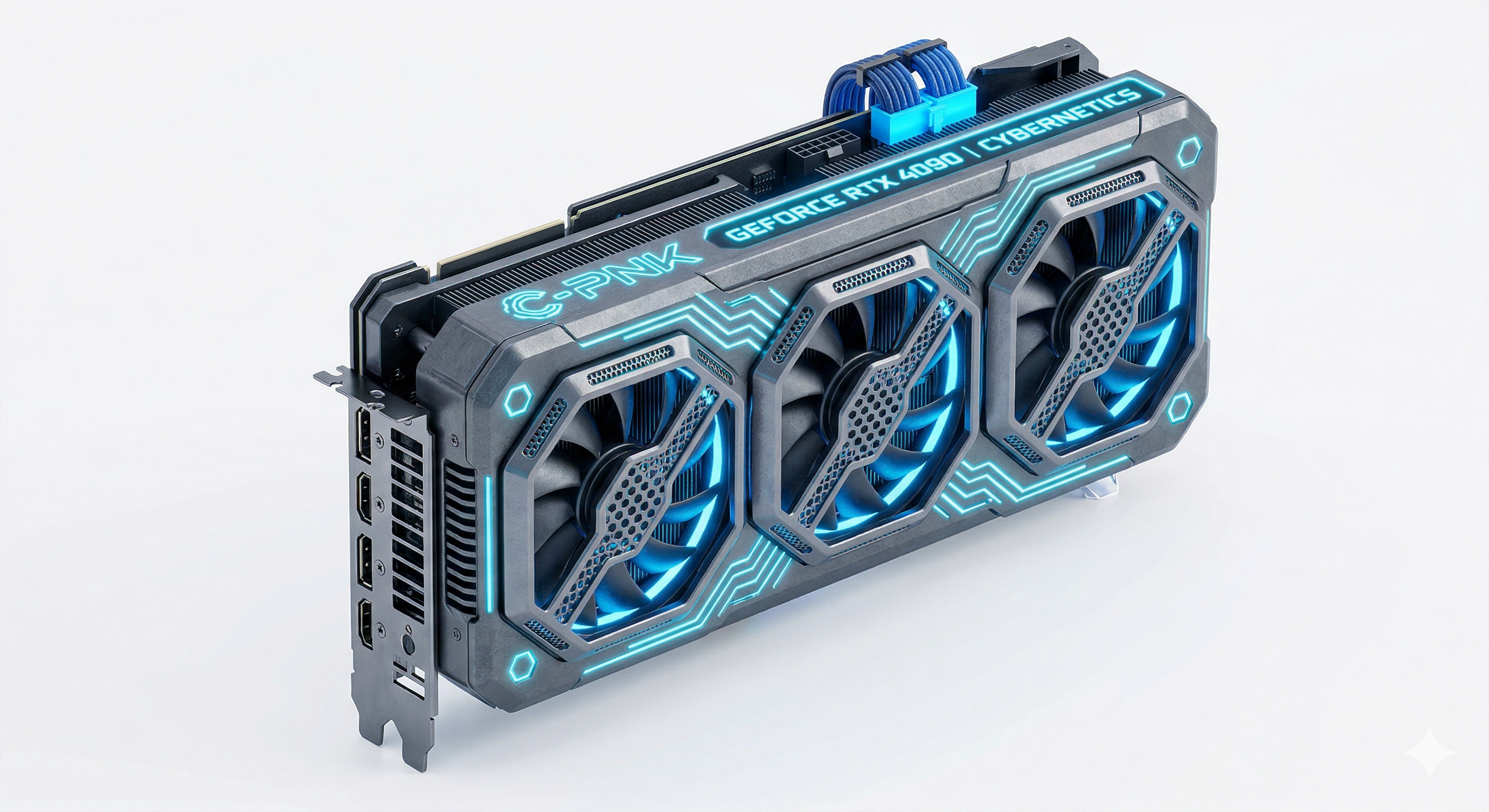}
  \end{subfigure}\hfill
  \begin{subfigure}{0.32\linewidth}
    \centering
    \includegraphics[width=\linewidth]{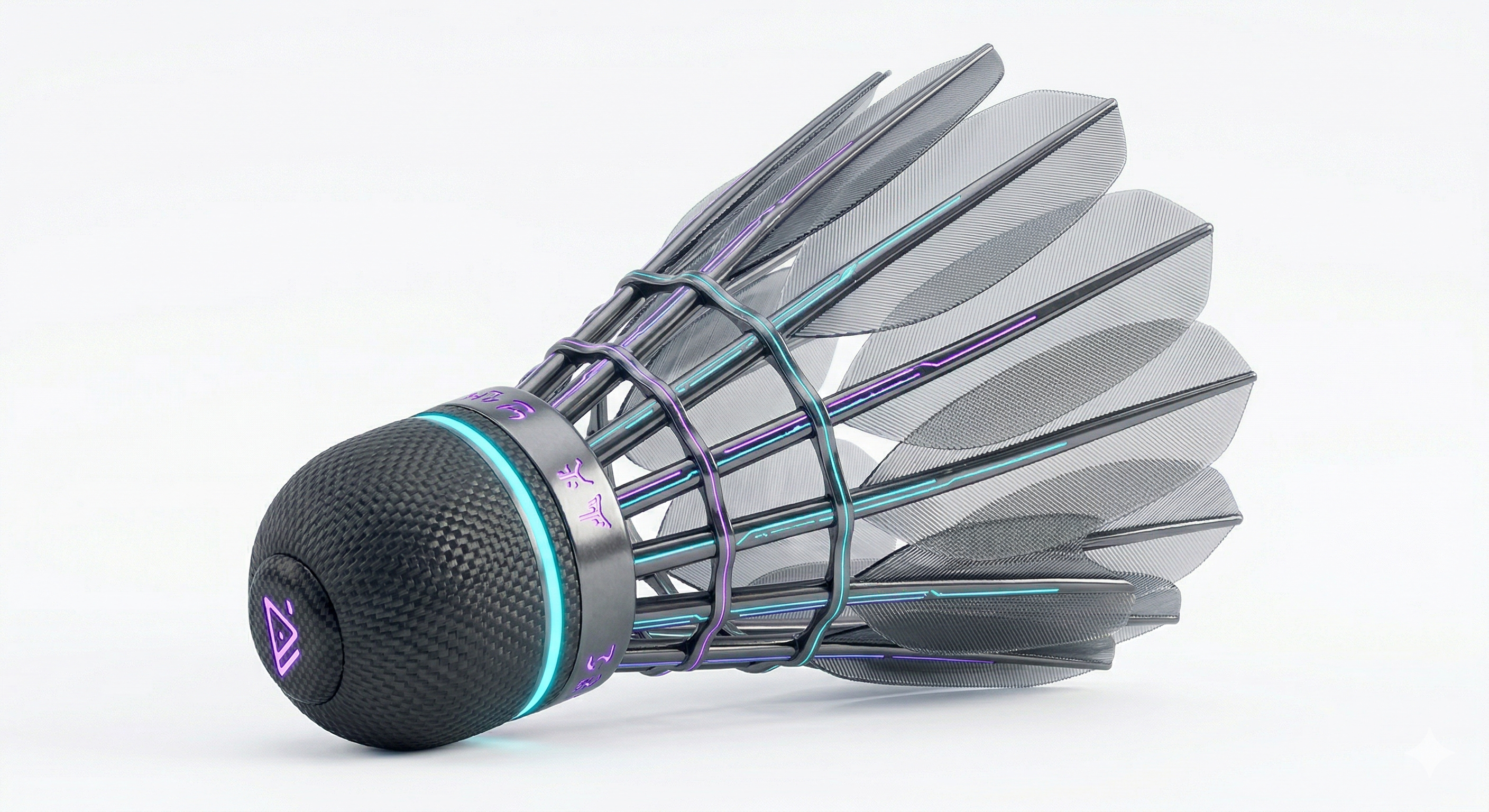}
  \end{subfigure}
  
  \caption{Reference images adopted in image-3D qualitative experiments.}
  \label{fig:reference_3d_images}
\end{figure*}

In the image-to-3D qualitative experiments, we use Nano Banana2 to generate reference images, as shown in Fig.~\ref{fig:reference_3d_images}.

\begin{figure*}[h]
    \centering
    \includegraphics{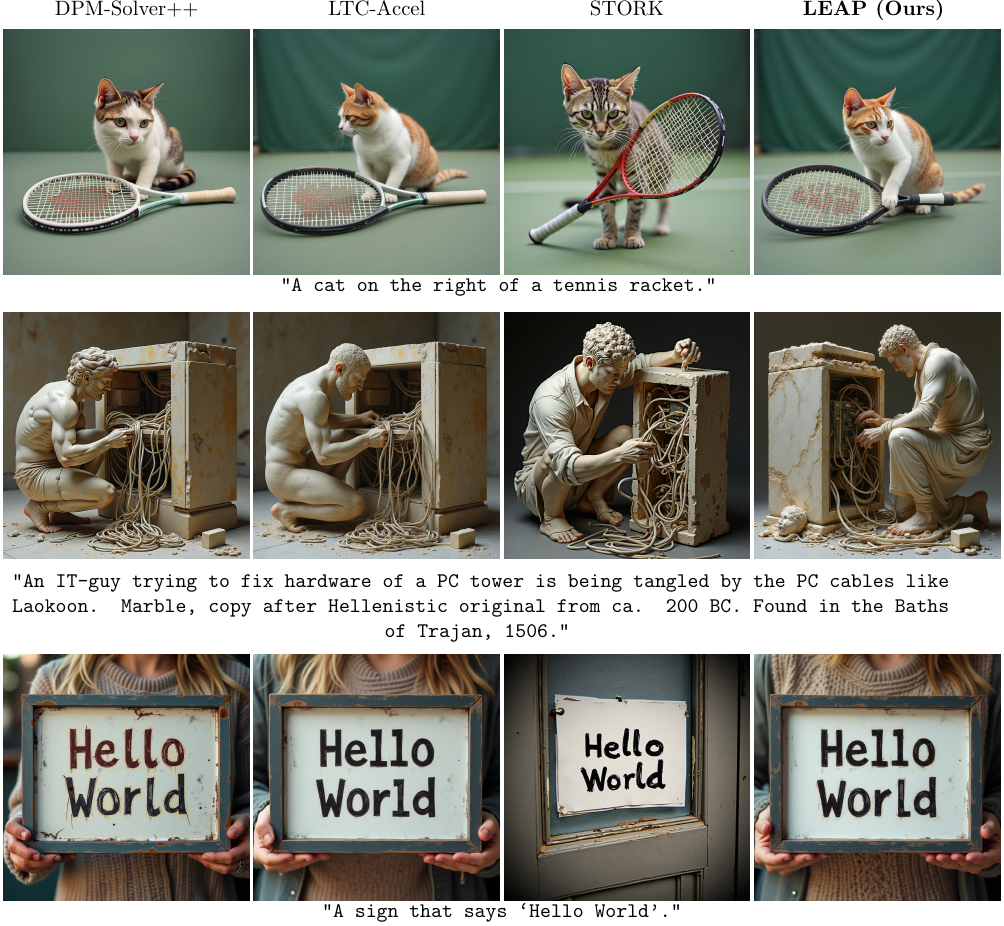}
    \caption{Additional qualitative comparison on the FLUX model (Part 1).}
    \label{fig:appendix_flux}
\end{figure*}

\begin{figure*}[htbp]
    \ContinuedFloat
    \centering
    \includegraphics{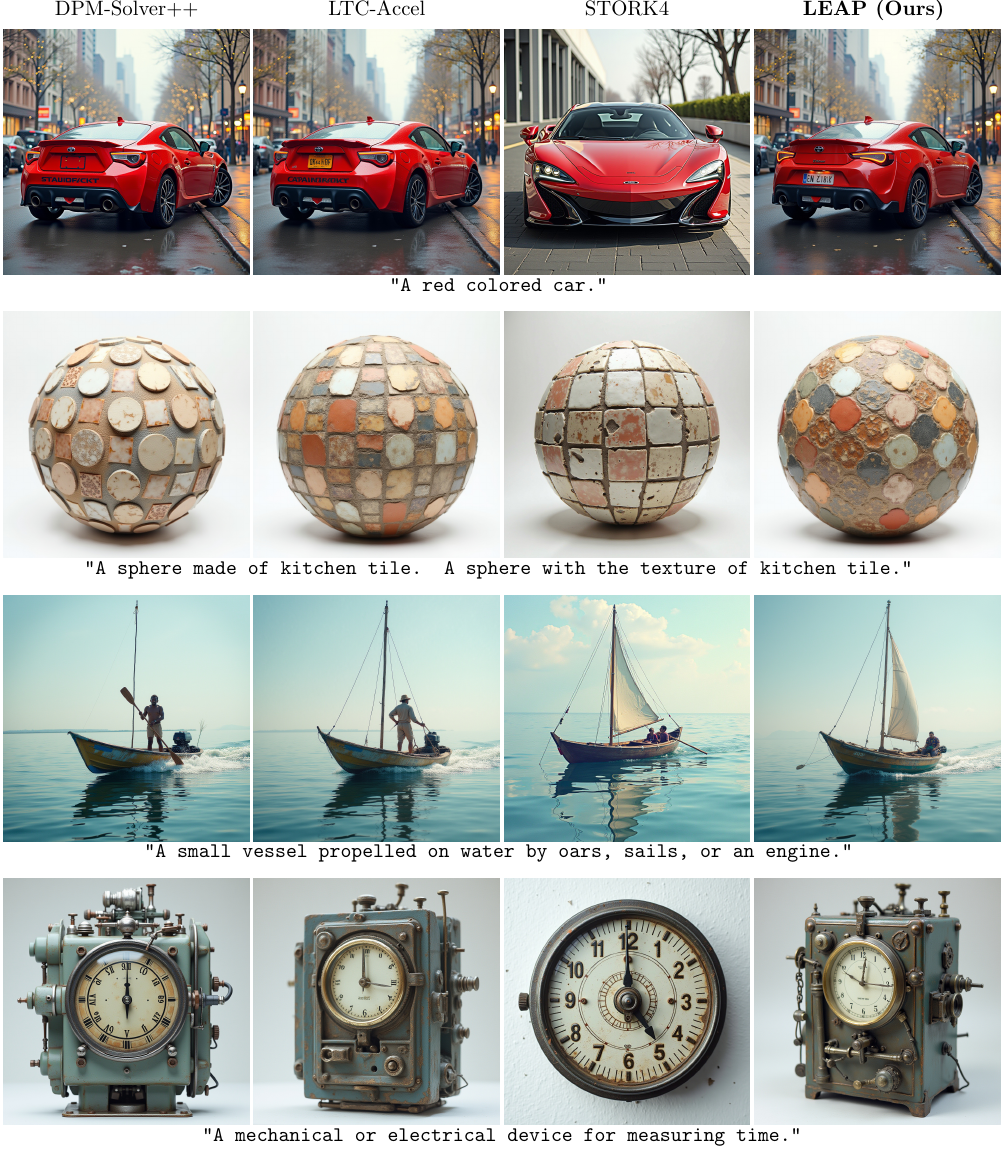}
    \caption{Additional qualitative comparison on the FLUX model (Continued).}
\end{figure*}

\begin{figure*}[htbp]
    \centering
    \includegraphics[width=0.9\textwidth]{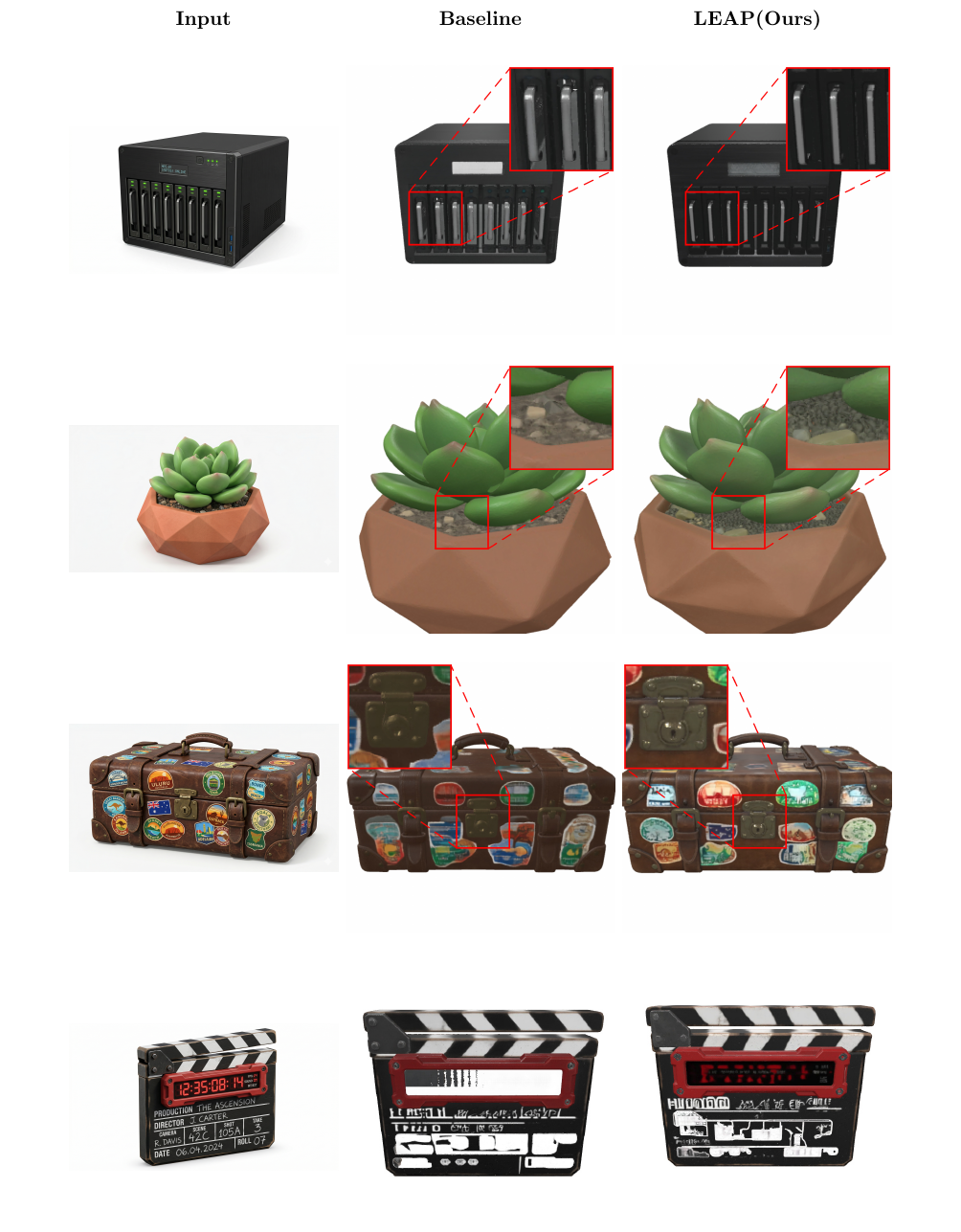}
    
    \caption{Additional image-3D qualitative results. Columns from left to right show the input conditions, generation results from the baseline solver (NFE=10), and results from our proposed LEAP method (NFE=10).}
    \label{fig:3D_appendix_results}
\end{figure*}

\end{document}